\newcommand{\reel}{\mathbb{R}}
\def\msa{\mathsf{A}}
\def\msm{\mathsf{M}}
\newcommand*{\addFileDependency}[1]{% argument=file name and extension
  \typeout{(#1)}
  \@addtofilelist{#1}
  \IfFileExists{#1}{}{\typeout{No file #1.}}
}
\newcommand*{\myexternaldocument}[1]{%
    \externaldocument{#1}%
    \addFileDependency{#1.tex}%
    \addFileDependency{#1.aux}%
}
\algnewcommand{\Inputs}[1]{%
  \State \textbf{Inputs:}
  \Statex \hspace*{\algorithmicindent}\parbox[t]{.8\linewidth}{\raggedright #1}
}
\algnewcommand{\Initialize}[1]{%
  \State \textbf{Initialize:}
  \Statex \hspace*{\algorithmicindent}\parbox[t]{.8\linewidth}{\raggedright #1}
}
\algnewcommand{\Outputs}[1]{%
  \State \textbf{Outputs:}
  \Statex \hspace*{\algorithmicindent}\parbox[t]{.8\linewidth}{\raggedright #1}
}
\newtheorem{theorem}{Theorem}
\newtheorem*{lemma_nonumber*}{Lemma}
\newaliascnt{lemma}{theorem}
\newtheorem{lemma}[lemma]{Lemma}
\newaliascnt{corollary}{theorem}
\newtheorem{corollary}[corollary]{Corollary}
\newaliascnt{proposition}{theorem}
\newaliascnt{definition}{theorem}
\newaliascnt{remark}{theorem}
\newtheorem{assumptionF}{\textbf{F}\hspace{-3pt}}
\Crefname{assumptionB}{\textbf{B}\hspace{-3pt}}{\textbf{B}\hspace{-3pt}}
\crefname{assumptionB}{\textbf{B}}{\textbf{B}}
\Crefname{assumptionC}{\textbf{C}\hspace{-3pt}}{\textbf{C}\hspace{-3pt}}
\crefname{assumptionC}{\textbf{C}}{\textbf{C}}
\Crefname{assumptionH}{\textbf{H}\hspace{-3pt}}{\textbf{H}\hspace{-3pt}}
\crefname{assumptionH}{\textbf{H}}{\textbf{H}}
\Crefname{assumptionT}{\textbf{T}\hspace{-3pt}}{\textbf{T}\hspace{-3pt}}
\crefname{assumptionT}{\textbf{T}}{\textbf{T}}
\Crefname{assumptionT}{\textbf{T}\hspace{-3pt}}{\textbf{T}\hspace{-3pt}}
\crefname{assumptionT}{\textbf{T}}{\textbf{T}}
\Crefname{assumptionL}{\textbf{L}\hspace{-3pt}}{\textbf{L}\hspace{-3pt}}
\crefname{assumptionL}{\textbf{L}}{\textbf{L}}
\Crefname{assumptionQ}{\textbf{Q}\hspace{-3pt}}{\textbf{Q}\hspace{-3pt}}
\crefname{assumptionQ}{\textbf{Q}}{\textbf{Q}}
\Crefname{assumptionAR}{\textbf{AR}\hspace{-3pt}}{\textbf{AR}\hspace{-3pt}}
\crefname{assumptionAR}{\textbf{AR}}{\textbf{AR}}
\newcommand{\tta}{\mathtt{A}}
\newcommand{\Capprox}{\tta}
\newcommandx\ctun[1][1=T]{\Capprox_{#1,1}}
\newcommandx{\expec}[2]{{\mathbb E}\left[#1 \middle \vert #2  \right]} %%%% esperance conditionnelle
\newcommand{\Lip}{\mathtt{L}}
\newcommandx{\norm}[2][1=]{\ifthenelse{\equal{#1}{}}{\left\Vert #2 \right\Vert}{\left\Vert #2 \right\Vert^{#1}}}
\newcommandx{\normLigne}[2][1=]{\ifthenelse{\equal{#1}{}}{\Vert #2 \Vert}{\Vert #2\Vert^{#1}}}
\def\bfc{\mathbf{c}}
\def\msa{\mathsf{A}}
\def\msh{\mathsf{H}}
\def\msb{\mathsf{B}}
\def\msh{\mathsf{H}}
\def\msm{\mathsf{M}}
\newcommand{\mcb}[1]{\mathcal{B}(#1)}
\def\rset{\mathbb{R}}
\def\rmd{\mathrm{d}}
\newcommandx{\functionspace}[2][1=+]{\mathbb{F}_{#1}(#2)}
\newcommandx{\VarDeux}[3][3=]{\operatorname{Var}^{#3}_{#1}\left\{#2 \right\}}
\newcommand{\LeftEqNo}{\let\veqno\@@leqno}
\newcommand{\floor}[1]{\left\lfloor #1 \right\rfloor}
\newcommand{\N}{\ensuremath{\mathbb{N}}}
\newcommandx{\Vnorm}[2][1=V]{\| #2 \|_{#1}}
\newcommandx{\VnormEq}[2][1=V]{\left\| #2 \right\|_{#1}}
\newcommandx\probaMarkovTilde[2][2=]
\def\eqsp{\;}
\newcommand{\coint}[1]{\left[#1\right)}
\newcommand{\ocint}[1]{\left(#1\right]}
\newcommand{\ooint}[1]{\left(#1\right)}
\newcommand{\ccint}[1]{\left[#1\right]}
\newcommandx{\weight}[2][2=n]{\omega_{#1,#2}^N}
\newcommandx\sequence[3][2=,3=]
\newcommandx\sequenceD[3][2=,3=]
\newcommandx{\sequencen}[2][2=n\in\N]{\ensuremath{\{ #1_n, \eqsp #2 \}}}
\newcommandx\sequenceDouble[4][3=,4=]
\newcommandx{\sequencenDouble}[3][3=n\in\N]{\ensuremath{\{ (#1_{n},#2_{n}), \eqsp #3 \}}}
\newcommand{\opnorm}[1]{{\left\vert\kern-0.25ex\left\vert\kern-0.25ex\left\vert #1
    \right\vert\kern-0.25ex\right\vert\kern-0.25ex\right\vert}}
\def\Lip{\operatorname{Lip}}
\def\Id{\operatorname{Id}}
\newcommandx{\CPE}[3][1=]{{\mathbb E}_{#1}\left[#2 \middle \vert #3  \right]} %%%% esperance conditionnelle
\newcommandx{\CPELigne}[3][1=]{{\mathbb E}_{#1}[#2  \vert #3  ]} %%%% esperance conditionnelle
\newcommandx{\CPEsq}[3][1=]{{\mathbb{E}^{1/2}}_{#1}\left[#2 \middle \vert #3  \right]} %%%% esperance conditionnelle
\newcommandx{\CPVar}[3][1=]{\mathrm{Var}^{#3}_{#1}\left\{ #2 \right\}}
\newcommand{\CPP}[3][]
{\ifthenelse{\equal{#1}{}}{{\mathbb P}\left(\left. #2 \, \right| #3 \right)}{{\mathbb P}_{#1}\left(\left. #2 \, \right | #3 \right)}}
\newcommandx{\osc}[2][1=]{\mathrm{osc}_{#1}(#2)}
\def\Id{\operatorname{Id}}
\newcommand{\ensembleLigne}[2]{\{#1\,:\eqsp #2\}}
\newcommand\coupling[2]{\Gamma(\mu,\nu)}
\def\vareps{\varepsilon}
\newcommandx{\KL}[2]{\operatorname{KL}\left( #1 | #2 \right)}
\newcommandx{\KLsqrt}[2]{\operatorname{KL}^{1/2}\left( #1 | #2 \right)}
\newcommandx{\Jef}[2]{\operatorname{J}\left( #1 , #2 \right)}
\newcommandx{\JefLigne}[2]{\operatorname{J}( #1 , #2 )}
\newcommandx{\KLLigne}[2]{\operatorname{KL}( #1 | #2 )}
\newcommandx{\KLLignesqrt}[2]{\operatorname{KL}^{1/2}( #1 | #2 )}
\def\gaStep
\def\QKer{Q}
\def\distance{\mathbf{d}}
\newcommandx{\wasserstein}[3][1=\distance,3=]{\mathbf{W}_{#1}^{#3}\left(#2\right)}
\newcommandx{\wassersteinLigne}[3][1=\distance,3=]{\mathbf{W}_{#1}^{#3}(#2)}
\newcommandx{\wassersteinD}[1][1=\distance]{\mathbf{W}_{#1}}
\newcommandx{\wassersteinDLigne}[1][1=\distance]{\mathbf{W}_{#1}}
\def\sigmaD{\sigma^2}
\newcommandx{\phibfs}[1][1=]{\pmb{\varphi}_{\sigmaD_{#1}}}
\newcommandx\sequenceg[3][2=,3=]
\newcommandx{\distV}[1][1=\bfc]{\mathbf{W}_{#1}}
\newcommandx{\distVdeux}[1][1=W_2]{\mathbf{d}_{#1}}
\title{Can Push-forward Generative Models  \\Fit Multimodal Distributions?}
\author{
    Antoine Salmona  \\
    Centre Borelli, \\
    ENS Paris Saclay, France \\
\And
  Agnès Desolneux \\
  Centre Borelli, CNRS\\
  ENS Paris Saclay, France \\
  \And
  Julie Delon \\
  MAP5, Université Paris Cité, France \\
  Institut Universitaire de France (IUF)\\
  \And
  Valentin De Bortoli \\
  Center for Sciences of Data, CNRS \\
  ENS Ulm, France\\
}
\begin{document}

\maketitle

\begin{abstract}
Many generative models synthesize data by transforming a standard Gaussian random variable using a deterministic neural network. Among these models are the Variational Autoencoders and the Generative
Adversarial Networks. In this work, we call them  "push-forward" models and study their expressivity. We formally demonstrate that the Lipschitz constant of these generative networks
has to be large in order to fit multimodal distributions. 
More precisely, we show that the total variation distance and the Kullback-Leibler divergence between the generated 
and the data distribution are bounded from below by a constant depending on the mode separation and the Lipschitz constant.
%otherwise the total 
%variation distance and the Kullback-Leibler divergence between the generated 
%and the data distribution are bounded from below by strictly positive
%constants depending on the modes separation and
%the Lipschitz constant of the generative network.
Since constraining the Lipschitz constants of neural networks is a common way to stabilize generative models, 
there is a provable trade-off between the ability of push-forward models to approximate multimodal distributions and the stability of their training.
We validate our findings on one-dimensional and image datasets and empirically show that the recently introduced diffusion models do not suffer of such limitation.
\end{abstract}

\section{Introduction}
Generative modeling has become over the last years one of the most popular
research topics in machine learning and computer vision.  From a mathematical
perspective, the goal of generative modeling can be seen as predicting new
synthetic samples from an unknown probability distribution $ \nu $ on
$ \rset^d $ given the information of $ m $ \emph{true} samples $x_i$ drawn
from $ \nu $ (the data distribution).  A general approach to solve this problem is to define a parametric family of probability distributions
$ (\nu_{\theta})_{\theta \in \Theta} $ and solve the problem
\begin{equation}
\textstyle{
\min_{\theta \in \Theta} D(\frac{1}{m}\sum_i\delta_{x_i},\nu_{\theta}) \eqsp ,}
\end{equation}
where $ D $ is a similarity measure between probability distributions and $ \delta_{x} $ is the delta distribution at $ x $.
Beside their direct application \cite[]{sandfort2019data,antoniou2018data},
generative models have been used in numerous applications in various machine
learning subfields, such as solving inverse problems
\cite[]{ravuri2021skilful,ledig2017photorealistic} or machine translation
\cite[]{isola2018image,yang2018improving}. However, most generative modeling
methods still lack theoretical understanding and it remains often unclear
whether the method approaches correctly the probability distribution $ \nu $ or
only generates samples that appear to have been drawn from $ \nu $ without fully
recovering the underlying structure of the distribution.  In this work, we focus
on the particular class of \emph{push-forward generative models}. 
Those models have in common that for any $ \theta \in \Theta $, the
parametric distribution $ \nu_{\theta} $ approaching $ \nu $ is of the form
\begin{equation} 
\nu_{\theta} =  g_{\theta\#}\mu_p \eqsp ,
\end{equation} 
where $ \mu_p = \mathrm{N}(0,\Id_p) $ is the Gaussian standard distribution in dimension $ p  $, $ \# $ is the 
push-forward operator \footnote{If $ \mu $ is a measure on $ \rset^p $ and $ f $ is a mapping from $ \rset^p $ to $ \rset^d $, 
the push-forward measure $ f_{\#}\mu $ is the measure on $ \rset^d $ such that for all Borel set $ \mathsf{A} $ of  $\rset^d $, $ f_{\#}\mu(\mathsf{A})=\mu(f^{-1}(\mathsf{A}))$.}, and 
$ g_{\theta} : \rset^p \rightarrow \rset^d $ is a deterministic neural network of parameter $ \theta $. 
This class includes two of the most popular generative models: the \emph{Variational Auto-Encoders} (VAEs) \footnote{In this work, the VAE model considered is the Gaussian-VAE since the data are real-valued. See \Cref{sec:diff} for details on why the generated distribution is of the form $ g_{\theta\#}\mu_p $ in this model.} \cite[]{kingma2013auto} and the \emph{Generative Adversarial Networks} (GANs) \cite[]{goodfellow2014generative}. It also includes other models such as most of normalizing flows \cite[]{rezende2015variational}.

Deep neural networks are most of the time Lipschitz mappings by design, 
since their activation functions are generally Lipschitz. 
%This is mainly due to the fact that deep neural networks have to be
%differentiable almost everywhere in order to be trained.
In the literature, constraining the Lipschitz constant of a neural network is widely used 
as a way to increase its robustness~\cite[]{scaman2019lipschitz,fazlyab2019efficient}, 
in particular to adversarial attacks~\cite[]{goodfellow2015explaining}. 
%It is also a common way to stabilize network training~\cite[]{miyato2018spectral}.
Common approaches to bound Lipschitz constants of neural networks are spectral normalization~\cite[]{miyato2018spectral}, 
adding a gradient penalization in the loss~\cite[]{gulrajani2017improved,mohajerin2018data}, 
or Jacobian regularization~\cite[]{pennington2017resurrecting}.   
These approaches have been widely used to stabilize the training of GANs, 
where Lipschitz constraints have been first imposed on 
discrimators~\cite[]{arjovsky2017wasserstein,kodali2017convergence,fedus2018many}, 
while recent state-of-the-art architectures such as BigGAN~\cite[]{brock2018large}, SAGAN \cite[]{zhang2019self}
or StyleGAN2~\cite[]{karras2020analyzing} also impose similar constraints on the generators 
through spectral normalization ~\cite[]{brock2018large,zhang2019self}, or Jacobian regularization \cite[]{karras2020analyzing}. 
In contrast to GANs, the recent study of \cite{kumar2020implicit} shows that the decoder Jacobian in VAEs is 
implicitly regularized, which limits its Lipschitz constant. 
%A similar implicit regularization might be operating in the case of normalizing flows~
%\cite[]{behrmann2021understanding}, for which it is known that limited Lipschitz 
%constant are necessary to ensure invertibility~\cite[]{behrmann2019invertible}, 
%and large bi-Lipschitz constants lead to numerical instability~\cite[]{behrmann2021understanding}. 

Recently, \cite{dhariwal2021diffusion} trained an unconditional \emph{Score-based Generative Model} (SGM) \cite[]{song2019generative,ho2020denoising} 
on ImageNet \cite[]{russakovsky2015imagenet} and achieved state-of-the-art generation. 
%beating BigGAN \cite[]{brock2018large} in terms of Frechet inception Distance \cite[]{heusel2017gans}. 
To the best of our knowledge, there is no push-forward generative model
capable of reaching this kind of performance on such a complex dataset 
without explicitly adding any conditional label information in the model, see \cite[]{brock2018large} for instance. SGMs (also known as \emph{diffusion models}) proceed as follows: first, noise is progressively added to the data distribution until we reach a standard Gaussian distribution. Then this forward dynamics is reversed leveraging recent advances in deep learning and tools from score-matching  \cite[]{hyvarinen2005estimation,vincent2011connection}. We refer to \cite{song2020score} for an introduction on SGMs. In those models,
the parametric distribution is also of the form $ \nu_{\theta} =  g_{\theta\#}\mu_{p} $,
where $ g_{\theta} $ is the whole reverse diffusion dynamic (which can be seen as a composition of deterministic Lipschitz mappings) and $ p = d(N+1)$  with $ N $ being the total number of steps in the dynamic (More details can be found in \Cref{sec:diff}).
However, those models are not push-forward generative models in the strict sense of the term, since the push-forward mapping is not a simple neural network anymore. An important difference is that optimization is not directly performed on the push-forward mapping itself but on an auxiliary function (the score). We therefore categorize them as \emph{indirect push-forward generative models} in this work.

\paragraph{Contributions of the paper.}
In this paper, we study the expressivity of direct and indirect push-forward generative models in relation to the Lipschitz constant of the push-forward mapping they learn. More precisely, in Section~\ref{sec:main-result}, for a Lipschitz function $g$ and a given multimodal probability distribution $ \nu $, we formally demonstrate that the Lipschitz constant of $ g $ must necessarily be large in order for $ g_{\#}\mu_p $ to approximate $\nu $ correctly, as it has been already intuitively observed in the literature \cite[]{lu2020implicit,luise2020generalization,khayatkhoei2018disconnected}. As a direct consequence, we exhibit lower bounds on $ D(g_{\#}\mu_p,\nu)$, where $ D $ is the total variation distance or
the Kullback-Leibler divergence, with an explicit dependence on the Lipschitz constant $\mathrm{Lip}(g)$ of $g$, which highlights that there is a fundamental trade-off for (direct) push-forward generative models between expressivity and stability of training. In \Cref{sec:experiments}, we illustrate these theoretical results on several experiments, showing the difficulties of GANs and VAEs to simulate  multimodal distributions. We compare these models with SGMs and show experimentally that SGMs seem to be able to generate correctly multimodal distributions while keeping the Lipschitz constant of the score network relatively small, suggesting that these models do not suffer of such previously mentioned limitations. All the proofs are postponed to the appendix.

\section{Related Works}
%As stated in the introduction, this paper focus on the link between the expressivity of pushforward generative models and the Lipschitz constant of the corresponding networks.  

Assessing the efficiency of push-forward models is a recurrent and important question in the literature. 
\citet{sajjadi2018assessing} and \citet{kynkaanniemi2019improved} 
propose Precison and Recall metrics to assess GANs, aiming to measure simultaneously 
the mode collapse and the proportion of off-manifold generated samples.
% In  \cite{kynkaanniemi2019improved}, the precision refers to the portion ofgenerated samples that are in the target manifolds, while the recall measures the proportion of the target distribution being reconstructed by the model distribution.
Using similar metrics, \citet{tanielian2020learning} prove an upper bound on the precision of vanilla GANs
(the proportion of generated samples which could have been generated by the target distribution). To overcome this limitation, 
 they simply propose to reject samples associated with large values of the generator Jacobian. 
 The intuition behind this idea is that those samples lie in regions of the space where the discontinuous
optimal generator would "jump" between modes and so are off-manifold.

In the context of normalizing flows, it has been shown that the invertibility constraint limits the expressivity of the model. Indeed,  ~\cite{cornish2020relaxing} show that distributions generated by invertible normalizing flows have a support which is necessarily homeomorphic to the support of the latent distribution. As an outcome,  the Lipschitz constant of the inverse flow has to approach infinity to correctly approximate distributions lying on disconnected manifolds \cite[]{cornish2020relaxing,hagemann2021stabilizing,behrmann2021understanding}. 
To improve the expressivity of normalizing flows, it has been proposed in \cite{cornish2020relaxing}
and \cite{wu2020stochastic} to inject stochasticity in the model.

%by replacing the single bijection by a continuously indexed family of bijections.

Another line of research focuses on the fact that the model has access to only the empirical distribution 
$ \nu_n = \frac{1}{n}\sum_{i} \delta_{x_i} $ and not to the true target distribution. 
For instance \cite{nagarajan2018theoretical} study to what extend GANs only memorize the data. 
\cite{gulrajani2018towards} highlight the fact that common GAN benchmarks prefer training set memorization  to
a model  which imperfectly fits the true distribution but covers more of its support. 
Related to this, \cite{stephanovitch2022optimal} study specifically the Wasserstein GAN case, 
where the latent distribution is uniform and construct an optimal generator which minimizes 
the Wasserstein distance of order $ 1 $ between the push-forward measure and the empirical distribution, thus deriving a lower bound
on the $1$-Wasserstein distance. In the same paper, and more related to our work, 
the authors study the asymptotic case of an infinite number of data and show that most of the time 
the minimal  $1$-Wasserstein  distance between the push-forward measure and the target distribution remains strictly positive.

\section{Push-forward measure and Lipschitz mappings}

%\subsection{Isoperimetric property of pushforward measures}\textcolor{red}{pas sûr du nom de cette}
\label{sec:main-result}

In this section, we study the properties of the push-forward measure
$ g_{\#}\mu_p $ when $ \mu_p = \mathrm{N}(0,\Id_p) $ is the standard
Gaussian distribution in dimension $ p $ and $ g$ is a Lipschitz mapping. 
First, for any probability measure $ \gamma $ on $ \rset^d$
and any Borel set $\msa$ of $ \rset^d$, we define the $ \gamma $-surface area of
$\msa$ by
\begin{equation}
  \textstyle{
    \gamma^{+}(\partial \msa) = \liminf_{\varepsilon \rightarrow 0^+} (\gamma(\msa_{\varepsilon}) - \gamma(\msa))/\varepsilon \eqsp , 
  }
\end{equation}
where
$ \msa_{\varepsilon} = \ensembleLigne{x \in \rset^d}{\textrm{there exists } a
  \in \msa, \ \|x-a\|\leq \varepsilon} $ is the $ \varepsilon$-extension of
$ \msa $ and $ \partial \msa $ is the boundary of $ A $. 
The $ \gamma $ - surface area can be interpreted as the mass of $ \gamma $ on the 
hypersurface $ \partial \msa $. Note that the support of $ \gamma $ and $ A $  can be 
sets of intrinsic dimension smaller than $ d $, which is most of the time the case
when working with real data which are likely to live on low dimensional manifolds \cite[]{pope2020intrinsic}. 
The main theoretical result of this paper establishes some properties
of push-forward measures depending on the regularity of the push-forward
mapping. 

\begin{theorem}\label{thm:perfectapprox}
  Let $ g: \rset^p \rightarrow \rset^d $ be a Lipschitz function with Lipschitz
  constant $ \mathrm{Lip}(g) $. Then for any Borel set
  $ \msa \in \mcb{\rset^d}$,
\begin{equation}\label{eq:inequalitydimn}
\textstyle{
\mathrm{Lip}(g)(g_{\#}\mu_p)^{+}(\partial \msa) \geq  \varphi\left(\Phi^{-1}(g_{\#}\mu_p(\msa))\right) \eqsp , }
\end{equation}
 where $ \varphi(x) = (2 \uppi)^{-1/2}\exp[-x^2/2] $ and $ \Phi(x) = \int_{-\infty}^{x}\varphi(t) \rmd t $.
In addition, we have that for any $  r \geq 0 $
\begin{equation}
\textstyle{
    g_{\#}\mu_p(\msa_r) \geq  \Phi\left(r/\mathrm{Lip}(g) + \Phi^{-1}(g_{\#}\mu_p(\msa))\right) \eqsp .
  }
  \label{eq:large_r}
\end{equation}
\end{theorem}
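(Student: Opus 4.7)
The plan is to reduce both inequalities to the classical Gaussian isoperimetric inequality of Borell and Sudakov--Tsirelson applied to the preimage $\msa$ under $g$. Concretely, set $\mathsf{B} = g^{-1}(\msa) \subset \rset^p$, which is Borel since $g$ is continuous. By the very definition of the push-forward, $\mu_p(\mathsf{B}) = g_{\#}\mu_p(\msa)$, and the Gaussian isoperimetric inequality states that for every Borel $\mathsf{B} \subset \rset^p$ and every $r \geq 0$,
\begin{equation*}
\mu_p(\mathsf{B}_r) \geq \Phi\bigl(r + \Phi^{-1}(\mu_p(\mathsf{B}))\bigr),
\end{equation*}
with equality attained by half-spaces. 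This inequality is the main external ingredient; everything else is transferring it through $g$.

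The second step is the \emph{Lipschitz containment} $\mathsf{B}_r \subset g^{-1}(\msa_{\mathrm{Lip}(g) \cdot r})$. Indeed, if $y \in \mathsf{B}_r$ then there exists $b \in \mathsf{B}$ with $\|y - b\| \leq r$, hence $\|g(y) - g(b)\| \leq \mathrm{Lip}(g) r$ with $g(b) \in \msa$, so $g(y) \in \msa_{\mathrm{Lip}(g) \cdot r}$. Combining this containment with the isoperimetric inequality and the pushforward identity gives
\begin{equation*}
g_{\#}\mu_p(\msa_{\mathrm{Lip}(g) \cdot r}) \;=\; \mu_p\bigl(g^{-1}(\msa_{\mathrm{Lip}(g) \cdot r})\bigr) \;\geq\; \mu_p(\mathsf{B}_r) \;\geq\; \Phi\bigl(r + \Phi^{-1}(g_{\#}\mu_p(\msa))\bigr).
\end{equation*}
Reparametrising $s = \mathrm{Lip}(g) \cdot r$ (assuming $\mathrm{Lip}(g) > 0$; the degenerate case $\mathrm{Lip}(g) = 0$ makes $g_{\#}\mu_p$ a Dirac and is handled separately as a trivial boundary case) yields \eqref{eq:large_r}.

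For \eqref{eq:inequalitydimn}, I would take the liminf as $r \to 0^+$ of the difference quotient of \eqref{eq:large_r}, applied to $\msa$ rather than $\msa_s$. Writing $\alpha = g_{\#}\mu_p(\msa)$, the bound gives
\begin{equation*}
\frac{g_{\#}\mu_p(\msa_r) - g_{\#}\mu_p(\msa)}{r} \;\geq\; \frac{\Phi\bigl(r/\mathrm{Lip}(g) + \Phi^{-1}(\alpha)\bigr) - \alpha}{r}.
\end{equation*}
The right-hand side converges as $r \to 0^+$ to $\varphi(\Phi^{-1}(\alpha))/\mathrm{Lip}(g)$ by differentiability of $\Phi$, yielding precisely \eqref{eq:inequalitydimn} after multiplying by $\mathrm{Lip}(g)$. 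The boundary cases $\alpha \in \{0,1\}$ make both sides vanish or the RHS trivial.

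The main obstacle is essentially bookkeeping rather than conceptual: one must ensure measurability of $g^{-1}(\msa)$ (immediate from Borel measurability of $g$), justify the use of the Gaussian isoperimetric inequality for an arbitrary Borel set (this is its standard form), and handle the degenerate or extreme cases $\mathrm{Lip}(g) = 0$ and $g_{\#}\mu_p(\msa) \in \{0,1\}$ so that $\Phi^{-1}$ is interpreted correctly (as $\mp\infty$). The real mathematical content is entirely in the isoperimetric inequality on $\rset^p$; the Lipschitz hypothesis simply rescales the neighbourhood parameter from $r$ to $r/\mathrm{Lip}(g)$.
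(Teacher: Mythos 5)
Your proposal is correct and follows essentially the same route as the paper's proof: the Gaussian isoperimetric inequality applied to the preimage $g^{-1}(\msa)$, transferred through the Lipschitz containment $(g^{-1}(\msa))_{r} \subset g^{-1}(\msa_{\mathrm{Lip}(g)\, r})$ (the paper states the equivalent inclusion $g((g^{-1}(\msa))_{\varepsilon/\mathrm{Lip}(g)}) \subset \msa_{\varepsilon}$). The only cosmetic difference is that you derive \eqref{eq:inequalitydimn} as the $r \to 0^+$ limit of the difference quotient of \eqref{eq:large_r} instead of proving it directly, a shortcut the paper itself remarks is valid immediately after the theorem statement.
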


\begin{proof}[Sketch of proof]
The proof of this result consists in establishing lower-bounds on
$(g_{\#}\mu_p)^{+}(\partial \msa)$ and $g_{\#}\mu_p(\msa_r)$ which can be
expressed as Gaussian integrals. We conclude upon combining this result and the Gaussian
isoperimetric inequality, see \cite{sudakov1978extremal} $\mu_p^+(\partial \msa) \geq \varphi(\Phi^{-1}(\mu_p(\msa)) ).$
\end{proof}
\begin{figure}
\begin{minipage}[c]{0.64\linewidth}
Note that \eqref{eq:large_r} implies \eqref{eq:inequalitydimn} upon remarking
that \eqref{eq:large_r} is an equality for $r =0$, dividing by $ r $ and letting $r \to 0$. 
\Cref{thm:perfectapprox} recovers the Gaussian inequality in the
case where $g$ is the identity mapping and extends it to all Lipschitz mappings. 
As the Gaussian inequality, \Cref{thm:perfectapprox} is dimension free, in 
the sense that neither $d $, nor $p $, nor the intrinsic dimension of $ g(\rset^p) $ play a role in the lower bounds. In the following section, we are going to use \Cref{thm:perfectapprox} to (i) give a lower bound on the Lipschitz constant  so that push-forward generative models \emph{exactly} match the data distribution,
 (ii) give a lower bound on the total variation and the
\end{minipage} \hfill
\begin{minipage}[c]{0.34\linewidth}
\vspace{-1.47em}
  \centering
  \includegraphics[width=\textwidth]{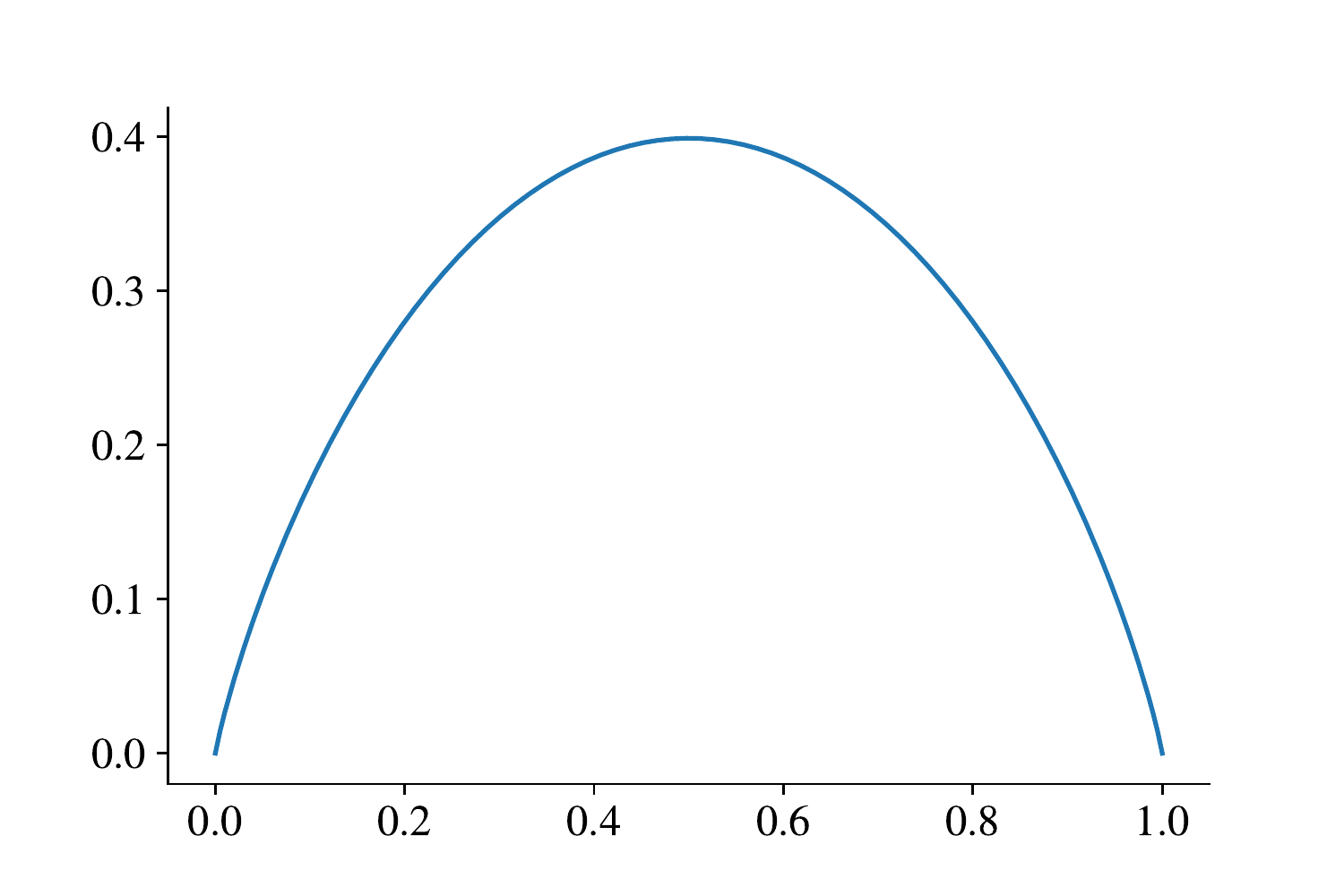}  
  \caption{Graph of $ \varphi \circ \Phi^{-1} $.}\label{fig:I_map}
\vspace{0.5em}
\end{minipage}
 Kullback-Leibler divergence between the push-forward and data distributions which depends on the Lipschitz constant of the model.
\end{figure}

\begin{comment}
Note that \eqref{eq:large_r} implies \eqref{eq:inequalitydimn} upon remarking
that \eqref{eq:large_r} is an equality for $r =0$, dividing by $ r $ and letting $r \to 0$. 
\Cref{thm:perfectapprox} recovers the Gaussian inequality in the
case where $g$ is the identity mapping and extends it to all Lipschitz mappings. 
As the Gaussian inequality, \Cref{thm:perfectapprox} is dimension free, in 
the sense that neither $d $, nor $p $, nor the intrinsic dimension of $ g(\rset^p) $ play a role in the lower bounds.
In the following section, we are going to use \Cref{thm:perfectapprox} to (i) give a lower bound on the Lipschitz constant so that push-forward generative models \emph{exactly} match the data distribution, (ii) give a lower bound on the total variation and the Kullback-Leibler divergence between the push-forward and data distributions which depends on the Lipschitz constant of the model.
\end{comment}

\subsection{Lower bounding the Lipschitz constant of push-forward mappings}
\label{sec:lower-bound-lip}

Equation \eqref{eq:inequalitydimn} implies that the Lipschitz constant of $ g $ must necessarily be large for $ g_{\#}\mu_p $ to be multimodal. It provides indeed a lower bound on the Lipschitz constant of
the mappings $ g $ which push $ \mu_p $ into a given measure $ \nu $.
In the extreme case where the support of $ \nu $ is composed
of disconnected manifolds, we retrieve
that there doesn't exist any Lipschitz mapping which pushes $ \mu_p $ into $ \nu $ since it
can be found Borel sets $ \msa $ with null $ \nu $ -surface area 
but such that the right-hand term of \eqref{eq:inequalitydimn} is strictly 
positive (which occurs when $ 0 < \nu(\msa) < 1 $). In the intermediate
case where the support of $ \nu $ is connected but $ \nu $ is multimodal,
the less mass $ \nu $ has between modes, the larger must be the Lipschitz constant of
the mappings which push $ \mu_p $ into $ \nu $. Indeed, if $ \nu $ has 
little mass between its modes, one can find sets $ \msa $ with small $ \nu $-surface area and such 
that $ 0 < \nu(\msa) < 1 $. As a toy example, we get 
an explicit bound on the Lipschitz constant of the mappings which push
$ \mu_p $ into a mixture of two isotropic Gaussians. 

\begin{corollary}\label{coro:gm1}
Let
$ \nu = \lambda\mathrm{N}(m_1,\sigma^2\Id_d) + (1 - \lambda)\mathrm{N}(m_2,\sigma^2\Id_d) $
with $ m_1, m_2 \in \rset^d $, $ \sigma > 0 $ and $ \lambda \in (0,1). $ Assume that there exists
$ g : \rset^p \rightarrow \rset^d $ Lipschitz such that $ g_\#\mu_p = \nu $. Then
\begin{equation}
  \textstyle{
\mathrm{Lip}(g) \geq  \sigma\exp\left[\normLigne{m_2-m_1}^2 / (8 \sigma^2) - (\Phi^{-1}(\lambda))^2/2\right] \eqsp . }
\end{equation}
\end{corollary}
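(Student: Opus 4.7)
The plan is to apply \Cref{thm:perfectapprox} with a carefully chosen Borel set $\msa$, namely the half-space on the $m_1$ side of the hyperplane orthogonal to $m_2 - m_1$ passing through the midpoint $(m_1 + m_2)/2$. The motivation for this choice is that the two modes sit symmetrically on either side of this hyperplane, at distance $D = \|m_2 - m_1\|/2$ each, so the Gaussian surface contributions of the two mixture components coincide and combine cleanly regardless of the weight $\lambda$.

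First I would evaluate $\nu^{+}(\partial \msa)$ explicitly. Since $\nu$ has a density, this surface area equals the surface integral of that density on the separating hyperplane. By splitting each component into its normal and tangential factors, the contribution of $\mathrm{N}(m_i, \sigma^2 \Idd)$ to the integral is $(1/\sigma)\varphi(D/\sigma)$, independent of $i$, whence $\nu^{+}(\partial \msa) = (1/\sigma)\varphi(D/\sigma)$.

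Second, projecting each Gaussian onto the unit normal $u = (m_2-m_1)/\|m_2-m_1\|$ gives $\nu(\msa) = \lambda\,\Phi(D/\sigma) + (1-\lambda)\,\Phi(-D/\sigma)$. Assuming without loss of generality that $\lambda \leq 1/2$ (the target bound is invariant under $\lambda \leftrightarrow 1-\lambda$ since $(\Phi^{-1}(\lambda))^2 = (\Phi^{-1}(1-\lambda))^2$), a direct computation writes $\nu(\msa) = (1-\lambda) - (1-2\lambda)\,\Phi(D/\sigma)$, which is a non-increasing affine function of $\Phi(D/\sigma) \in [1/2, 1]$; its extreme values are $\lambda$ and $1/2$, so $\nu(\msa) \in [\lambda, 1/2]$. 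Since $t \mapsto \varphi(\Phi^{-1}(t))$ is symmetric about $1/2$ and increasing on $[0,1/2]$ (see \Cref{fig:I_map}), this yields $\varphi(\Phi^{-1}(\nu(\msa))) \geq \varphi(\Phi^{-1}(\lambda))$.

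Finally, plugging these two estimates into \eqref{eq:inequalitydimn} gives
\[
\mathrm{Lip}(g) \;\geq\; \sigma \, \frac{\varphi(\Phi^{-1}(\lambda))}{\varphi(D/\sigma)} \;=\; \sigma\,\exp\!\left(\frac{D^2}{2\sigma^2} - \frac{(\Phi^{-1}(\lambda))^2}{2}\right),
\]
which is the claimed inequality once $D = \|m_2-m_1\|/2$ is substituted. The only genuine choice in the argument is the Borel set $\msa$; the midpoint hyperplane both collapses the two Gaussian surface integrals into a single closed-form expression and reduces the control of $\nu(\msa)$ to an elementary monotonicity argument, so no substantial obstacle is expected once this choice is made.
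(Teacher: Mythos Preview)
Your proof is correct and follows essentially the same approach as the paper: apply \eqref{eq:inequalitydimn} to the half-space bounded by the hyperplane equidistant from the two means, compute the $\nu$-surface area explicitly, and lower-bound $\varphi(\Phi^{-1}(\nu(\msa)))$ by $\varphi(\Phi^{-1}(\lambda))$ via the monotonicity of $\varphi\circ\Phi^{-1}$ on $[0,1/2]$. The only cosmetic differences are that the paper first reduces to the centered case $m_1=-m$, $m_2=m$ by translation and rotation, and handles the two cases $\lambda\le 1/2$ and $\lambda\ge 1/2$ simultaneously rather than invoking the $\lambda\leftrightarrow 1-\lambda$ symmetry.
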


\begin{proof}[Sketch of proof]
The proof of this result consists into applying Inequality 
\eqref{eq:inequalitydimn}  of \Cref{thm:perfectapprox} 
on the half-space $ \msh $ such that $ \partial \msh $ is 
the equidistant line of the means of the two Gaussians, and then 
to explicit the value of $ \nu^+(\partial \msh).$ See \Cref{fig:hyperplan} 
for a visualization of $ \partial \msh $ in the univariate case.
\end{proof}
\begin{comment}
\begin{figure}[h!]
\begin{minipage}[c]{0.54\linewidth}
Note that assuming there exists $ g : \rset^p \rightarrow \rset^d $ such
that $ g_\#\mu_p = \nu $ implies  $ p \geq d $ since 
$ \nu $ covers the whole ambient space and so $ g $ must be a surjective mapping. 
This bound is maximal in the balanced case when $ \lambda = 1/2 $ since 
$ \Phi^{-1}(\lambda) = 0 $ in that case. Otherwise, the more unbalanced the modes are,
the smaller the bound is since the two terms in the exponential compensate each other more and more.
Extending this corollary to mixtures of more than two Gaussians with different covariance matrices is 
technically difficult but we could \makebox[\linewidth][s]{expect a similar exponential growth in the square} \par
\end{minipage} \hfill
\begin{minipage}[c]{0.44\linewidth}
\vspace{-4.47em}
  \centering
  \includegraphics[width=\textwidth]{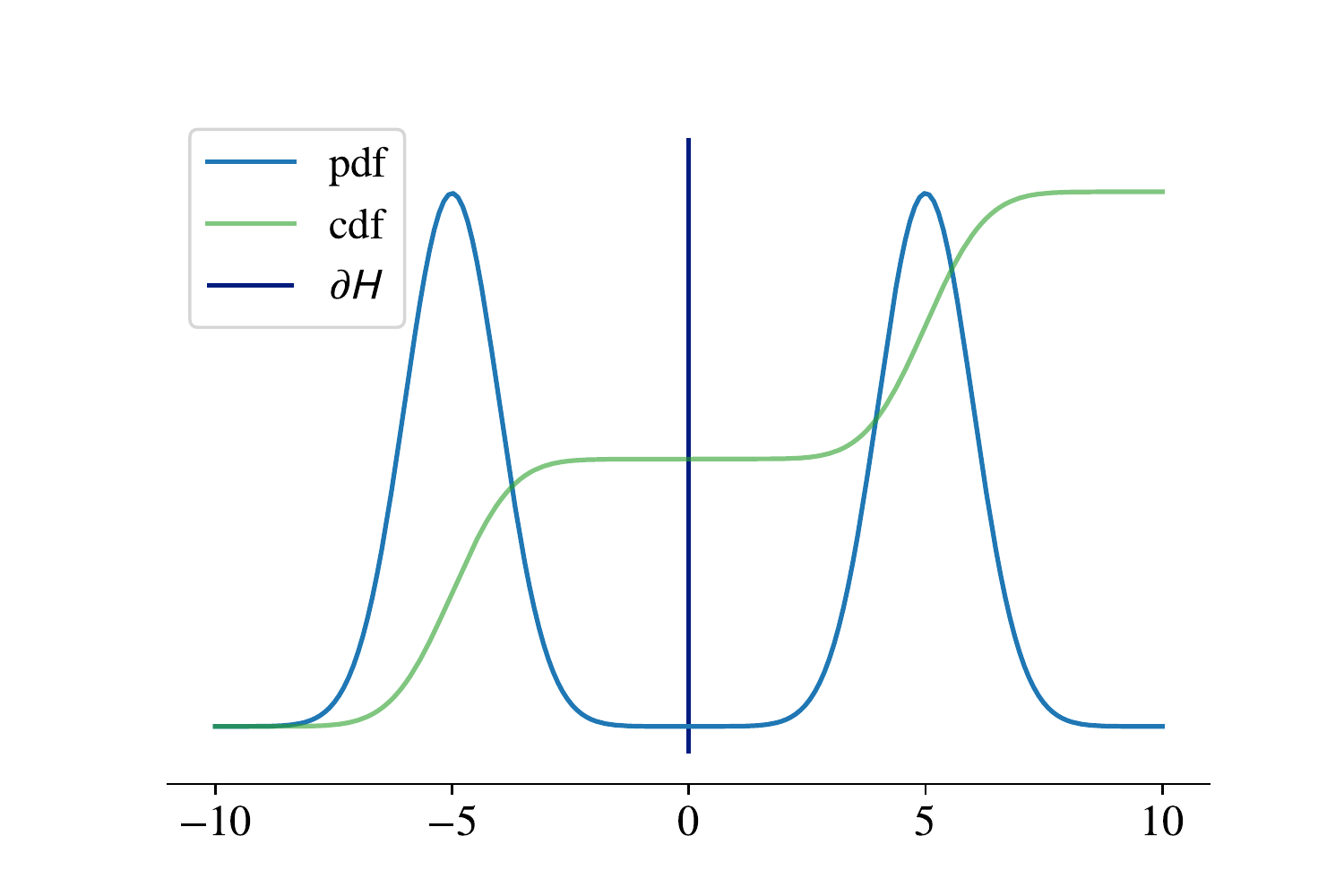}  
  \caption{The hypersurface $ \partial \msh $ of the proof of \Cref{coro:gm1} in the univariate case.}\label{fig:I_map}
\vspace{0.47em}
\end{minipage}
distance
between modes since it depends  mainly on the order of magnitude of the local minima of the
distribution density.
As a by product of \Cref{thm:perfectapprox}, we also get the following result which shows that (in the
one-dimensional case) the optimal transport map for the $\ell_2$ cost minimizes
the Lipschitz constant of the push-forward mapping.
\end{figure}
\end{comment}

Note that assuming there exists $ g : \rset^p \rightarrow \rset^d $ such
that $ g_\#\mu_p = \nu $ implies  $ p \geq d $ since 
$ \nu $ covers the whole ambient space and so $ g $ must be a surjective mapping. 
This bound is maximal in the balanced case when $ \lambda = 1/2 $ since 
$ \Phi^{-1}(\lambda) = 0 $ in that case. Otherwise, the more unbalanced the modes are,
the smaller the bound is since the two terms in the exponential compensate each other more and more.
Extending this corollary to mixtures of more than two Gaussians with different covariance matrices is 
technically difficult but we could expect a similar exponential growth in the square distance
between modes since it depends mainly on the order of magnitude of the local minima of the
distribution density.
As a by product of \Cref{thm:perfectapprox}, we also get the following result which shows that (in the
one-dimensional case) the optimal transport map for the $\ell_2$ cost minimizes
the Lipschitz constant of the push-forward mapping.

\begin{corollary}\label{coro:mongemap}
  Let $ \nu $ be a probability measure on $ \rset $ with density w.r.t.  the
  Lesbesgue measure and such that $ \textup{supp}(\nu) = \rset $. Assume that
  there exists $ g: \rset^p \rightarrow \rset $ Lipschitz such that
  $ \nu = g_\#\mu_p $. Let us denote
  $ T_{\textup{OT}} = \Phi^{-1}_{\nu} \circ \Phi $ the Monge map between $ \mu_1 $
  and $ \nu $, where $ \Phi_{\nu} $ is the cumulative distribution function of
  $ \nu $. Then we have  $\mathrm{Lip}(g) \geq \mathrm{Lip}(T_{\textup{OT}})$. 
\end{corollary}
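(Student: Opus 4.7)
The plan is to apply \Cref{thm:perfectapprox} to half-lines and then read off the Lipschitz constant of the Monge map from the resulting inequality.

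First, I would make the Lipschitz constant of $T_{\textup{OT}}$ explicit. Since $\nu$ has a density $p_\nu$ w.r.t. the Lebesgue measure with $\mathrm{supp}(\nu)=\rset$, the cumulative distribution function $\Phi_\nu$ is a $\mathrm{C}^1$ increasing bijection from $\rset$ onto $(0,1)$, and the same holds for $\Phi$. By the chain rule, $T_{\textup{OT}} = \Phi_\nu^{-1}\circ \Phi$ is differentiable almost everywhere with
\begin{equation*}
T_{\textup{OT}}'(x) \;=\; \frac{\varphi(x)}{p_\nu(T_{\textup{OT}}(x))} \eqsp,
\end{equation*}
and therefore $\mathrm{Lip}(T_{\textup{OT}}) = \esssup_{y \in \rset} \varphi(\Phi^{-1}(\Phi_\nu(y)))/p_\nu(y)$ via the change of variable $y = T_{\textup{OT}}(x)$.

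Next, I would apply \eqref{eq:inequalitydimn} of \Cref{thm:perfectapprox} to the Borel sets $\msa_y = \ocintLigne{-\infty, y}$ for $y \in \rset$. For such a set, the $\varepsilon$-extension is $\msa_{y,\varepsilon}=\ocintLigne{-\infty, y+\varepsilon}$, and $g_{\#}\mu_p = \nu$ has density $p_\nu$, so Lebesgue's differentiation theorem gives
\begin{equation*}
(g_{\#}\mu_p)^+(\partial \msa_y) \;=\; \liminf_{\varepsilon \to 0^+} \frac{1}{\varepsilon}\int_y^{y+\varepsilon} p_\nu(t)\,\rmd t \;=\; p_\nu(y)
\end{equation*}
for almost every $y\in\rset$. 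Moreover, $g_{\#}\mu_p(\msa_y) = \Phi_\nu(y)$. Plugging these two identities into \eqref{eq:inequalitydimn} yields, for almost every $y$,
\begin{equation*}
\mathrm{Lip}(g)\, p_\nu(y) \;\geq\; \varphi\big(\Phi^{-1}(\Phi_\nu(y))\big) \eqsp.
\end{equation*}

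Finally, I would divide by $p_\nu(y)$ (which is finite a.e.\ and strictly positive on a set of full $\nu$-measure since $\mathrm{supp}(\nu)=\rset$) and take the essential supremum over $y$; comparing with the expression for $\mathrm{Lip}(T_{\textup{OT}})$ obtained in the first step gives $\mathrm{Lip}(g) \geq \mathrm{Lip}(T_{\textup{OT}})$. The main technical care is in the second step, where one must justify the pointwise identification of the surface area with $p_\nu(y)$; since $p_\nu$ is Lebesgue-integrable, Lebesgue's differentiation theorem handles this at almost every $y$, which is enough to conclude as $\mathrm{Lip}(T_{\textup{OT}})$ is itself an essential supremum.
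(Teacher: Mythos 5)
Your proposal is correct and follows essentially the same route as the paper: apply inequality \eqref{eq:inequalitydimn} of \Cref{thm:perfectapprox} to the half-lines $\ocintLigne{-\infty,y}$, identify the $\nu$-surface area with $p_\nu(y)$ and $g_{\#}\mu_p(\msa)$ with $\Phi_\nu(y)$, and then recognize $\sup_y \varphi(\Phi^{-1}(\Phi_\nu(y)))/p_\nu(y)$ as the (essential) supremum of $|T'_{\textup{OT}}|$ via the bijectivity of $T_{\textup{OT}}$. The only quibble is your claim that $\Phi_\nu$ is $\mathrm{C}^1$ — it is merely absolutely continuous and differentiable a.e. — but your argument, like the paper's, only uses the a.e.\ statement, so nothing breaks.
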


To the best of our knowledge, extending this proposition to the case where
$d > 1$ remains an open problem. We show now that Equation \eqref{eq:large_r} of 
\Cref{thm:perfectapprox} allows to derive lower bounds on 
similarity measures between the push-forward measure
and the target distribution.

\subsection{Lower bounds on similarity measures between probability distributions}
\label{sec:lower-bound-distance}
Equation \eqref{eq:large_r} provides a bound on the 
minimal mass the push-forward measure $ g_{\#}\mu_p $ can have on a given set 
when $ g $ is fixed with Lipschitz constant $ \mathrm{Lip}(g) $. 
As a consequence, if $ \nu $ is a distribution
such that there exists sets on which $ \nu $ has less mass than 
the minimal quantity that $ g_{\#}\mu_p $ can reach on those sets
given the value of $ \mathrm{Lip}(g) $, then $ g_{\#}\mu_p $ cannot 
be equal to $ \nu $, implying that most of 
similarity measures between $ g_{\#}\mu_p $ and $ \nu $ 
will be automatically strictly positive. In the following, we consider 
that $ g $ and $ \nu $ are fixed and
we derive lower bounds on the total variation distance and the Kullback-Leibler
divergence between $ g_{\#}\mu_p $ and $ \nu $. 
We recall that the total variation distance between two probability measures on
$\rset^d$, $\nu_0, \nu_1$ is given by

\begin{equation}
d_{\mathrm{TV}}(\nu_0, \nu_1)  = \sup \ensembleLigne{\nu_0(\msa) - \nu_1(\msa)}{\msa \in \mcb{\rset^d}} \eqsp . 
\end{equation}
Similarly, we define the Kullback-Leibler divergence between two
probability measures on $\rset^d$, $\nu_0, \nu_1$, using the Donsker-Varadhan
representation \cite[Lemma 1.4.3a]{dupuis2011weak}:
\begin{equation}
\textstyle{
d_{\mathrm{KL}}(\nu_0||\nu_1)  = \sup \ensembleLigne{ \int_{\rset^d} f(x)  \rmd \nu_0(x) - \log\left(\int_{\rset^d} \exp[f(x)] \rmd \nu_1(x)\right)}{f \in \mathfrak{B}(\rset^d,\rset)} \eqsp ,
  }
\end{equation}

where $ \mathfrak{B}(\rset^d,\rset) $ denotes the set of all bounded mappings from $ \rset^d $ to $ \rset $.
In the following, we will denote for any $ \msa \in \mcb{\reel^d} $ and
$ r > 0 $, 
\begin{align}
    &\alpha_g(\msa,r) = \Phi\left(r/\mathrm{Lip}(g) + \Phi^{-1}(g_{\#}\mu_p(\msa))\right) \eqsp , \\
    & \beta_g(\msa,r) = \alpha_g(\msa,r) - g_{\#}\mu_p(\msa) \eqsp , 
\end{align}
where $\alpha_g(\msa,r)$ and $\beta_g(\msa,r)$ are the  lower bounds of $ g_{\#}\mu_p(\msa_r) $  and $ g_{\#}\mu_p(\msa_r \setminus \msa) $ provided by \Cref{thm:perfectapprox}. 
We start by proving lower bounds on the total variation distance.

\begin{theorem}\label{thm:lowerboundtv}
Let $ \nu $ be a probability measure on $ \rset^d $ and let
 $ g : \rset^p \rightarrow \reel^d $ be a Lipschitz function. Then,
\begin{equation}
\resizebox{0.94\hsize}{!}{$\textstyle{d_{\mathrm{TV}}(g_\#\mu_p,\nu) \geq \sup \ensembleLigne{\alpha_g(\msa,r) - \min\{g_{\#}\mu_p(\msa),\nu(\msa)\} - \nu(\msa_r \setminus \msa)}{\msa \in \mcb{\rset^d}, r > 0} \eqsp .}$}
\label{eq:boundtv}
\end{equation} 
\end{theorem}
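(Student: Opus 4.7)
The plan is to combine the variational characterization of total variation distance with the lower bound on the push-forward mass provided by \Cref{thm:perfectapprox}, using two carefully chosen test sets to recover the $\min$ appearing in the statement.

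First, I would recall that from the dual formulation
\[
d_{\mathrm{TV}}(g_\#\mu_p,\nu) = \sup_{B \in \mcb{\rset^d}}\{g_\#\mu_p(B) - \nu(B)\},
\]
so for any fixed Borel set $\msa$ and any $r > 0$ I get a valid lower bound by plugging in any single test set built from $\msa$ and $\msa_r$. The key observation is that the bound $g_\#\mu_p(\msa_r) \geq \alpha_g(\msa,r)$ from \eqref{eq:large_r} controls the mass of $g_\#\mu_p$ on the $r$-enlargement of $\msa$, and one can split this enlargement into $\msa$ itself and the shell $\msa_r \setminus \msa$.

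The core of the proof is a two-case argument.

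\textbf{Case 1:} Choose $B = \msa_r$. Then, writing $\nu(\msa_r) = \nu(\msa) + \nu(\msa_r \setminus \msa)$ and using \eqref{eq:large_r},
\[
d_{\mathrm{TV}}(g_\#\mu_p,\nu) \geq g_\#\mu_p(\msa_r) - \nu(\msa_r) \geq \alpha_g(\msa,r) - \nu(\msa) - \nu(\msa_r \setminus \msa).
\]
\textbf{Case 2:} Choose $B = \msa_r \setminus \msa$. Since $\msa \subseteq \msa_r$, one has $g_\#\mu_p(\msa_r \setminus \msa) = g_\#\mu_p(\msa_r) - g_\#\mu_p(\msa)$, so
\[
d_{\mathrm{TV}}(g_\#\mu_p,\nu) \geq g_\#\mu_p(\msa_r) - g_\#\mu_p(\msa) - \nu(\msa_r \setminus \msa) \geq \alpha_g(\msa,r) - g_\#\mu_p(\msa) - \nu(\msa_r \setminus \msa).
\]
Taking the maximum of the two bounds exactly replaces the term being subtracted by $\min\{g_\#\mu_p(\msa),\nu(\msa)\}$, yielding
\[
d_{\mathrm{TV}}(g_\#\mu_p,\nu) \geq \alpha_g(\msa,r) - \min\{g_\#\mu_p(\msa),\nu(\msa)\} - \nu(\msa_r \setminus \msa).
\]
The bound \eqref{eq:boundtv} then follows by taking the supremum over $\msa \in \mcb{\rset^d}$ and $r > 0$.

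There is no major technical obstacle: the whole argument is essentially two applications of the trivial inequality $d_{\mathrm{TV}}(\mu,\nu) \geq \mu(B) - \nu(B)$, followed by an invocation of \Cref{thm:perfectapprox}. The only subtlety is noticing that the two natural candidate test sets $\msa_r$ and $\msa_r \setminus \msa$ produce bounds depending respectively on $\nu(\msa)$ and $g_\#\mu_p(\msa)$, and that combining them yields the minimum of the two quantities, which is what makes the bound tight in situations where one of $\nu(\msa)$, $g_\#\mu_p(\msa)$ is much smaller than the other (the regime of interest when $\nu$ is multimodal with little mass between modes).
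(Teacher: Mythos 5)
Your proposal is correct and follows essentially the same route as the paper's proof: the paper also obtains the two bounds by testing the total variation against the sets $\msa_r$ and $\msa_r \setminus \msa$, applying \eqref{eq:large_r} to lower bound $g_\#\mu_p(\msa_r)$, and combining the two inequalities to produce the $\min\{g_{\#}\mu_p(\msa),\nu(\msa)\}$ term. The only cosmetic difference is that the paper phrases the first step via the triangle inequality $|g_\#\mu_p(B)| \leq |g_\#\mu_p(B) - \nu(B)| + |\nu(B)|$ rather than directly via the dual formulation, which is equivalent.
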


\begin{proof}[Sketch of Proof]
The proof of this result consists in bounding from below the total variation 
distance on one hand by $ |g_{\#}\mu_p(\msa_r \setminus \msa) - \nu(\msa_r \setminus \msa) | $ 
and on the other hand by $ |g_{\#}\mu_p(\msa_r) - \nu(\msa_r) | $ 
for a given $ \msa \in \mcb{\reel^d} $ and a given $ r > 0 $, and then applying \Cref{thm:perfectapprox}.
\end{proof}

Observe that \eqref{eq:boundtv} always holds but the right-hand term may become negative if
the Lipschitz constant of $ g $ is large enough. The main idea behind this bound
is to find a set $ \msa $ and a real $ r > 0 $ such that 
$ \nu $ has a lot of mass on $ \msa $ but almost no mass on $ \msa_r \setminus \msa $. 
For instance, if $ \nu $ is a distribution on two disconnected manifolds $ \msm_1 $ 
and $ \msm_2 $, an optimal choice for $ \msa $ would either be $ \msm_1 $ or $ \msm_2 $  
and the optimal $ r $ would be the distance between the two manifolds. 
Using \Cref{thm:lowerboundtv}, 
one can derive smaller but more explicit lower bounds only depending on $ \nu $ and the Lipschitz 
constant of $ g $. As a first example, we derive an explicit lower bound 
in the case where $ \nu $ is a bi-modal distribution on two 
disconnected manifolds.

\begin{corollary}\label{coro:discomanifold}
Let $ \nu $ be a measure on $ \rset^d $ on two disconnected manifolds $ \msm_1 $ and $ \msm_2 $ 
such that $ \nu(\msm_1 ) = \lambda $ and $\nu(\msm_2) = 1 -  \lambda $, with $ \lambda \in \coint{1/2,1} $, and let
$ g : \rset^p \rightarrow \reel^d $ be a Lipschitz function. Then,
\begin{equation}\label{eq:lowbounddiscomanifold}
\textstyle{d_{\mathrm{TV}}(g_\#\mu_p,\nu) \geq \int_{\Phi^{-1}(\lambda)}^{d(\msm_1,\msm_2)/ 2 \mathrm{Lip}(g) + \Phi^{-1}(\lambda)} \varphi(t)dt\eqsp , } 
\end{equation}
where $ d(\msm_1,\msm_2)  = \inf \ensembleLigne{\normLigne{m_1 - m_2}}{m_1 \in \msm_1, m_2 \in \msm_2} $.
\end{corollary}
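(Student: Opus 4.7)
My plan is to apply \Cref{thm:lowerboundtv} to the test set $\msa = \msm_1$ (the heavier mode, using the hypothesis $\lambda \geq 1/2$), combined with the elementary inequality $d_{\mathrm{TV}}(g_\#\mu_p, \nu) \geq |g_\#\mu_p(\msm_1) - \lambda|$. The key geometric observation is that because $\msm_1$ and $\msm_2$ are disconnected with separation $d := d(\msm_1,\msm_2)$, for any $r < d$ the $r$-extension $(\msm_1)_r$ does not meet $\msm_2$. Since $\nu$ is supported on $\msm_1 \cup \msm_2$, the term $\nu((\msm_1)_r \setminus \msm_1)$ appearing in the conclusion of \Cref{thm:lowerboundtv} vanishes.

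Specializing to $r = d/2$ and writing $s := d/(2\mathrm{Lip}(g))$, let $q := g_\#\mu_p(\msm_1)$. The two bounds at my disposal read
\begin{equation*}
d_{\mathrm{TV}}(g_\#\mu_p, \nu) \geq \lambda - q \quad \text{and} \quad d_{\mathrm{TV}}(g_\#\mu_p, \nu) \geq \Phi(s + \Phi^{-1}(q)) - \min\{q, \lambda\}.
\end{equation*}
Setting $F(\lambda) := \Phi(\Phi^{-1}(\lambda) + s) - \lambda$, which equals the claimed right-hand side of the corollary, it suffices to prove $\max\{\lambda - q,\ \Phi(s + \Phi^{-1}(q)) - \min\{q,\lambda\}\} \geq F(\lambda)$ for every $q \in [0,1]$. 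The range $q \geq \lambda$ is immediate because $\min\{q,\lambda\} = \lambda$ and $\Phi^{-1}$ is monotone.

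For $q \in [0, \lambda)$ I would analyse $H(q) := \Phi(s + \Phi^{-1}(q)) - q$. A short computation of $H'$ shows that $H$ is unimodal, peaking at $q^\star = \Phi(-s/2)$, and the hypothesis $\lambda \geq 1/2 \geq q^\star$ puts $\lambda$ on the decreasing branch of $H$. There is therefore a unique companion point $q_L \in [0, q^\star]$ with $H(q_L) = H(\lambda) = F(\lambda)$, so the theorem bound already delivers $F(\lambda)$ on $[q_L, \lambda]$. On the residual interval $[0, q_L)$ the theorem bound can be weaker than $F(\lambda)$, and the elementary bound $\lambda - q$ has to take over.

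The crux — and the main obstacle I anticipate — is the splicing inequality $q_L \leq \lambda - F(\lambda)$, which guarantees that the elementary bound covers the residual interval without a gap. Using the symmetry $\varphi(x) = \varphi(-x)$, equality of the $\varphi$-masses of the intervals $[\Phi^{-1}(q_L), \Phi^{-1}(q_L) + s]$ and $[\Phi^{-1}(\lambda), \Phi^{-1}(\lambda) + s]$ forces $\Phi^{-1}(q_L) + \Phi^{-1}(\lambda) = -s$, whence $q_L = 1 - \Phi(\Phi^{-1}(\lambda) + s)$. The desired comparison then collapses to $1 \leq 2\lambda$, which is precisely the hypothesis $\lambda \in [1/2, 1)$. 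Rewriting $F(\lambda) = \int_{\Phi^{-1}(\lambda)}^{\Phi^{-1}(\lambda) + s} \varphi(t)\, dt$ closes the proof.
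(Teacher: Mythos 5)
Your proposal is correct, and it takes a genuinely different route from the paper. The paper applies \Cref{thm:lowerboundtv} to the inflated set $\msa=(\msm_1)_r$ with $r=d(\msm_1,\msm_2)/2$ and splits on the sign of $g_\#\mu_p(\msa)-\nu(\msa)$: when $g_\#\mu_p$ underweights $\msm_1$ it passes to the complement $\msa^c$, invokes an auxiliary inclusion lemma (\Cref{lem:inclusion}) to show $\nu(\msa^c{}_r\setminus\msa^c)=0$, and finishes by comparing the masses of the two intervals $[-\Phi^{-1}(\lambda),\,s-\Phi^{-1}(\lambda)]$ and $[\Phi^{-1}(\lambda),\,s+\Phi^{-1}(\lambda)]$ using $\lambda\geq 1/2$. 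You instead apply the theorem once, to $\msa=\msm_1$ itself, and patch it with the elementary bound $d_{\mathrm{TV}}\geq|g_\#\mu_p(\msm_1)-\lambda|$ (which the paper never uses), resolving the underweighted regime via the unimodality of $H(q)=\Phi(s+\Phi^{-1}(q))-q$ and the companion-point identity $\Phi^{-1}(q_L)+\Phi^{-1}(\lambda)=-s$. Your route avoids the complement trick and the inclusion lemma entirely, at the price of a more delicate one-variable analysis; it is notable that both arguments ultimately reduce to the same symmetry of $\varphi$ and the same use of $\lambda\geq 1/2$ (your splicing inequality $1\leq 2\lambda$ mirrors the paper's final interval comparison). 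One cosmetic point: your case $q\in[q_L,\lambda]$ and the paper's Case 1 overlap only partially, since the paper's dichotomy is at $q=\lambda$ whereas yours is effectively at $q=q_L$; this is fine, but worth stating explicitly that the two lower bounds you maximize over jointly cover all of $[0,1]$, which your splicing inequality indeed guarantees.
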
 

As a second example, we also get an explicit lower bound in the case where $ \nu $ is a 
mixture of two isotropic Gaussians. For simplicity we stick to the balanced case. 

\begin{corollary}\label{coro:gm2}
Let $ \nu = (1/2)[\mathrm{N}(m_1,\sigma^2\Id_d) + \mathrm{N}(m_2,\sigma^2\Id_d)] $ 
with $ m_1,m_2\in \rset^d $ and $ \sigma \geq 0 $. Let
$ g : \rset^p \rightarrow \reel^d $ be a Lipschitz function. Then,
\begin{equation}\label{eq:lowboundgm}
\textstyle{ d_{\mathrm{TV}}(g_\#\mu_p,\nu) \geq \int_0^{\normLigne{m_2-m_1}/ 4\sigma\mathrm{Lip}(g)} \varphi(t) \rmd t  - (1/2)\int_{\normLigne{m_2 - m_1}(2\sigma - 1)/4\sigma^2}^{\normLigne{m_2 - m_1}(2\sigma  + 1)/4\sigma^2}\varphi(t) \rmd t \eqsp . }
\end{equation}
\end{corollary}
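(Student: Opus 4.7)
The plan is to apply \Cref{thm:lowerboundtv} with a carefully chosen half-space $\msa = \msh$ and a carefully chosen radius $r$, mirroring the strategy sketched for \Cref{coro:gm1}.

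First, let $a = \normLigne{m_2-m_1}$ and let $e = (m_2-m_1)/a$. Introduce the half-space $\msh = \defEnsLigne{x \in \rset^d : \ps{x-(m_1+m_2)/2}{e} \leq 0}$ containing $m_1$, with boundary the equidistant hyperplane between $m_1$ and $m_2$. Because the balanced mixture $\nu$ is invariant under the reflection through this hyperplane (which swaps $m_1$ and $m_2$), we have $\nu(\msh) = 1/2$. Up to relabelling $m_1$ and $m_2$, which leaves $\nu$ unchanged, I may assume $g_\#\mu_p(\msh) \geq 1/2$. Consequently $\Phi^{-1}(g_\#\mu_p(\msh)) \geq 0$ and the monotonicity of $\Phi$ gives $\alpha_g(\msh,r) \geq \Phi(r/\mathrm{Lip}(g))$, while $\min\{g_\#\mu_p(\msh),\nu(\msh)\} = 1/2$.

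Next, I would choose $r = a/(4\sigma)$ and compute $\nu(\msh_r \setminus \msh)$ explicitly. Since $\msh_r = \defEnsLigne{x : \ps{x-(m_1+m_2)/2}{e} \leq r}$, the projection along $e$ reduces each Gaussian component to a one-dimensional problem: for $X \sim \mathrm{N}(m_i,\sigma^2\Id_d)$, the random variable $\ps{X-(m_1+m_2)/2}{e}$ is $\mathrm{N}(\mp a/2,\sigma^2)$ (minus for $i=1$, plus for $i=2$). Computing each of the two 1D probabilities and averaging, the two $\Phi(\pm a/(2\sigma))$ terms cancel using $\Phi(-x)=1-\Phi(x)$, leaving
\begin{equation*}
\nu(\msh_r \setminus \msh) = \tfrac{1}{2}\bigl[\Phi\bigl((r+a/2)/\sigma\bigr) + \Phi\bigl((r-a/2)/\sigma\bigr) - 1\bigr].
\end{equation*}
Substituting $r = a/(4\sigma)$ yields $(r\pm a/2)/\sigma = \pm a(2\sigma \pm 1)/(4\sigma^2)$ (with the appropriate sign bookkeeping), and one further application of $\Phi(-x)=1-\Phi(x)$ rewrites this as $\tfrac{1}{2}\int_{a(2\sigma-1)/(4\sigma^2)}^{a(2\sigma+1)/(4\sigma^2)} \varphi(t)\,\rmd t$.

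Finally, I plug all the pieces into \Cref{thm:lowerboundtv}:
\begin{equation*}
d_{\mathrm{TV}}(g_\#\mu_p,\nu) \geq \alpha_g(\msh,r) - \tfrac{1}{2} - \nu(\msh_r \setminus \msh) \geq \Phi\bigl(a/(4\sigma\mathrm{Lip}(g))\bigr) - \tfrac{1}{2} - \nu(\msh_r \setminus \msh),
\end{equation*}
and rewrite the first two terms as $\int_0^{a/(4\sigma\mathrm{Lip}(g))} \varphi(t)\,\rmd t$ to obtain exactly \eqref{eq:lowboundgm}. The only mildly delicate step is the sign bookkeeping when substituting $r=a/(4\sigma)$ into the 1D Gaussian CDFs; everything else is a direct specialization of \Cref{thm:lowerboundtv} using the symmetry of the balanced mixture.
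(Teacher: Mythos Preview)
Your proposal is correct and follows essentially the same approach as the paper: both choose the equidistant half-space $\msh$, set $r=\normLigne{m_2-m_1}/(4\sigma)$, and apply \Cref{thm:lowerboundtv}. The only cosmetic differences are that the paper first translates/rotates to put the means at $\pm(\normLigne{m},0,\dots,0)$ before computing, and handles the two cases $g_\#\mu_p(\msh)\gtrless 1/2$ explicitly rather than via your relabelling argument; the computations and the final bound are identical.
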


In both corollaries, the lower bound tends to $ 1/2 $ when the distance between the modes tends to infinity,
meaning that $ g_{\#}\mu_p $ is far from well approaching $ \nu $. 
Note that the lower bound exhibited in \Cref{coro:discomanifold} is always strictly positive regardless of the 
value of the Lipschitz constant of $g$. One can also observe that this latter bound is maximal in
the balanced case, when $ \lambda = 1/2 $, since the standard normal distribution concentrates its 
mass around $ 0 $.
Finally, we end this section by deriving a similar lower bound on the Kullback-Leibler divergence between
$ g_{\#}\mu_p $ and $ \nu $. We consider the Kullback-Leibler divergence since this is a measure of similarity between measures which is bounded and is very sensitive to the mismatch of supports between the generated and the data distributions.

\begin{theorem}\label{thm:lowerboundkl}
Let $ \nu $ be a probability measure on $ \rset^d $ and 
let $ g : \rset^p \rightarrow \rset^d $ be a Lipschitz function. Then,
\begin{equation}
\resizebox{\hsize}{!}{$
\textstyle{
d_{\mathrm{KL}}(g_{\#}\mu_p||\nu) \geq \sup \ensembleLigne{\beta_g(\msa,r)\log\left(\frac{\beta_g(\msa,r)}{\nu(\msa_r\setminus \msa)}\right) + \left(1 - \beta_g(\msa,r)\right)\log\left(\frac{1 - \beta_g(\msa,r)}{1 - \nu(\msa_r \setminus \msa)}\right)}{\msa \in \mcb{\rset^d}, r > 0} \eqsp .
}$}
\label{eq:boundkl}
\end{equation}
\end{theorem}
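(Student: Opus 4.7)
The plan is to reduce the bound on the full Kullback--Leibler divergence to a one-dimensional Bernoulli comparison via a data-processing argument, and then to combine this reduction with the mass inequality provided by \Cref{thm:perfectapprox}.

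I would first fix $\msa \in \mcb{\rset^d}$ and $r > 0$, and set $B = \msa_r \setminus \msa$, $p = g_{\#}\mu_p(B)$, $q = \nu(B)$ and $\beta = \beta_g(\msa, r)$. Restricting the supremum in the Donsker--Varadhan formula recalled just before the statement to test functions of the form $f = s\,\mathbbm{1}_B + s'\,\mathbbm{1}_{B^{\complementary}}$ with $s, s' \in \rset$, and optimizing over $(s,s')$, produces the standard Bernoulli lower bound
\begin{equation*}
d_{\mathrm{KL}}(g_{\#}\mu_p || \nu) \geq p \log(p/q) + (1-p) \log((1-p)/(1-q)) \eqsp .
\end{equation*}
I would then invoke \eqref{eq:large_r} of \Cref{thm:perfectapprox} applied to the same $\msa$ and $r$; using $\msa \subseteq \msa_r$ this yields
\begin{equation*}
p = g_{\#}\mu_p(\msa_r) - g_{\#}\mu_p(\msa) \geq \alpha_g(\msa, r) - g_{\#}\mu_p(\msa) = \beta \eqsp .
\end{equation*}

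The final step is to exploit the convexity of $\phi_q : x \mapsto x \log(x/q) + (1-x) \log((1-x)/(1-q))$ on $(0,1)$, which has a unique minimum at $x = q$ and is therefore nondecreasing on $[q, 1]$. In the informative regime $\beta \geq q$ this yields $\phi_q(p) \geq \phi_q(\beta)$; combining with the two previous displays proves \eqref{eq:boundkl} for the pair $(\msa, r)$, and the full statement follows by taking the supremum over $\msa \in \mcb{\rset^d}$ and $r > 0$. I expect this last comparison to be the main obstacle: in the complementary regime $\beta < q$ the monotonicity of $\phi_q$ fails, so one must argue that such pairs $(\msa, r)$ cannot improve the supremum, consistently with the interpretation that the bound is informative precisely when $\nu$ allocates less mass to the buffer $B$ than \Cref{thm:perfectapprox} forces on $g_{\#}\mu_p$.
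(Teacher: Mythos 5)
Your approach is essentially the paper's. The paper also restricts Donsker--Varadhan to test functions measurable with respect to $\{B, B^{\complementary}\}$ where $B=\msa_r\setminus\msa$ (it takes $f=\zeta\chi_{B}$, i.e.\ your $f$ with $s'=0$; only the difference $s-s'$ matters), applies \Cref{thm:perfectapprox} to get $g_{\#}\mu_p(B)\geq\beta_g(\msa,r)$, and optimizes over $\zeta$ to land on the same Bernoulli expression. The only structural difference is the order of operations: the paper substitutes $g_{\#}\mu_p(B)\geq\beta_g(\msa,r)$ into the linear term $\zeta\, g_{\#}\mu_p(B)$ \emph{before} optimizing --- a step valid only for $\zeta\geq 0$ --- and then evaluates at the critical point $\zeta^*=\log[\beta_g(\msa,r)(1-\nu(B))]-\log[\nu(B)(1-\beta_g(\msa,r))]$, whereas you optimize first and then invoke monotonicity of the Bernoulli divergence $\phi_q$ on $[q,1]$. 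These are the same computation, and they break down in exactly the same place: $\zeta^*\geq 0$ if and only if $\beta_g(\msa,r)\geq\nu(B)$, which is precisely the regime in which your monotonicity argument applies.

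So the obstacle you flag at the end is genuine, and the paper's proof does not resolve it: it quietly enlarges the optimization to $\zeta\in\reel$ after justifying the bound only for $\zeta>0$. Moreover, such pairs cannot simply be argued away --- when $\beta_g(\msa,r)<\nu(\msa_r\setminus\msa)$ the displayed inequality can actually fail for that pair. For instance, take $g=\operatorname{Id}$ and $\nu=\mu_d$, so that $d_{\mathrm{KL}}(g_{\#}\mu_p||\nu)=0$; for any $\msa$ at which the Gaussian isoperimetric inequality is strict (any non-half-space), one has $\nu(\msa_r\setminus\msa)>\beta_g(\msa,r)$ and hence a strictly positive right-hand side. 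The correct reading of the theorem is therefore with the supremum restricted to pairs satisfying $\beta_g(\msa,r)\geq\nu(\msa_r\setminus\msa)$ --- the only informative ones, and the only ones used in the corollaries. Modulo this caveat, which you correctly identified and which the paper shares, your proof is complete and matches the paper's.
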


\begin{proof}[Sketch of Proof]
The proof of this result consists in setting $ f = \zeta\chi_{\msa_r \setminus \msa} $ with $ \zeta > 0 $ and  where $ \chi_{\msa} $ is the characteristic function of the set $ \msa $ and plugging it in the Donsker-Varadhan representation in order to get a lower bound depending on
the probability $ g_{\#}\mu_p(\msa_r \setminus \msa) $  for a given $ \msa $, a given $ r $ and a given $ \zeta $.  
Then we apply \Cref{thm:perfectapprox} and we derive the optimal value of $ \zeta $. 
\end{proof}

As above, this bound always holds but the right-hand term becomes negative 
if $ \mathrm{Lip}(g) $ is large enough. As for \Cref{thm:lowerboundtv}, the main idea
is to find a set $ \msa $ and a real $ r $ such that $ \nu $ has a lot of mass on $ \msa $, but
$ \nu $ has almost no mass on $ \msa_r \setminus \msa $. Observe that if
$ \nu(\msa_r \setminus \msa) $ tends to $ 0 $, the left-hand term of the bound tends to infinity. This is coherent with the behavior of the Kullback-Leibler divergence. Similarly to \Cref{coro:gm2}, we also get an explicit 
lower bound in the case where $ \nu $ is a mixture of two isotropic 
Gaussians. As for \Cref{coro:gm2}, we stick to the balanced case for simplicity. 

\begin{corollary}\label{coro:kl_gm}
Let $ \nu = (1/2)\left[\mathrm{N}(m_1,\sigma^2\Id_d) + \mathrm{N}(m_2,\sigma^2\Id_d) \right]$ with $ m_1, m_2 \in \rset^d $ and $ \sigma \geq 0 $. Let $ g : \rset^p \rightarrow \rset^d $ be a Lipschitz function. We denote  
\begin{equation} 
\lambda = g_{\#}\mu_p\left(\ensembleLigne{(m_2-m_1)^T\left(x - (m_2+m_1)/2\right) \leq 0}{x \in \reel^d}\right) \eqsp, 
\end{equation} and we suppose without loss of generality, that $ \lambda \in \ocint{0,1/2} $.
Then,
\begin{equation}
d_{\mathrm{KL}}(g_{\#}\mu_p,\nu) \geq 
\textstyle{A\log\left(\frac{A}{B}\right) + (1-A)\log\left(\frac{1-A}{1-B}\right)}\eqsp ,
\end{equation}
where
\begin{equation}
A = \textstyle{\int_{-\Phi^{-1}(1-\lambda)}^{\normLigne{m_2-m_1}/ 4\sigma\mathrm{Lip}(g) - \Phi^{-1}(1-\lambda)} \varphi(t) \rmd t} \eqsp \text{, and }  B = \textstyle{(1/2)\int_{\normLigne{m_2 - m_1}(2\sigma - 1)/4\sigma^2}^{\normLigne{m_2 - m_1}(2\sigma  + 1)/4\sigma^2}\varphi(t) \rmd t} \eqsp.
\end{equation}
\end{corollary}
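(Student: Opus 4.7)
The plan is to apply \Cref{thm:lowerboundkl} with a judiciously chosen set and radius. Let $\hat{n} = (m_2 - m_1)/\normLigne{m_2 - m_1}$ and take $\msa = \ensembleLigne{x \in \rset^d}{\hat{n}^\top(x - (m_1+m_2)/2) \leq 0}$, which is exactly the half-space used to define $\lambda$, so that $g_{\#}\mu_p(\msa) = \lambda$ by hypothesis. Up to swapping $m_1$ and $m_2$ (which leaves $\nu$ invariant) we may assume $\lambda \in (0, 1/2]$. For the radius, choose $r = \normLigne{m_2 - m_1}/(4\sigma)$.

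With these choices, $\beta_g(\msa, r) = \Phi(r/\mathrm{Lip}(g) + \Phi^{-1}(\lambda)) - \lambda$. Rewriting this difference of CDFs as an integral of $\varphi$ and using $\Phi^{-1}(\lambda) = -\Phi^{-1}(1 - \lambda)$ recovers exactly the quantity $A$ in the statement.

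The main computation is to verify $\nu(\msa_r \setminus \msa) = B$. The set $\msa_r \setminus \msa$ is the one-sided slab $\ensembleLigne{x \in \rset^d}{0 < \hat{n}^\top(x - (m_1+m_2)/2) \leq r}$. For $Y_i \sim \mathrm{N}(m_i, \sigma^2 \Id_d)$, the projection $W_i = \hat{n}^\top(Y_i - (m_1+m_2)/2)$ is a one-dimensional Gaussian with variance $\sigma^2$ and means $-\normLigne{m_2-m_1}/2$ and $+\normLigne{m_2-m_1}/2$ respectively. Writing each $\mathbb{P}(0 < W_i \leq r)$ in terms of $\Phi$ and summing, the two terms evaluated at $\normLigne{m_2-m_1}(2\sigma)/(4\sigma^2)$ appear with opposite signs and telescope; after multiplying by the mixture weight $1/2$, what remains is precisely the single Gaussian integral defining $B$. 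This cancellation is what motivates the choice $r = \normLigne{m_2-m_1}/(4\sigma)$.

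Substituting $\beta_g(\msa, r) = A$ and $\nu(\msa_r \setminus \msa) = B$ into the bound of \Cref{thm:lowerboundkl} yields the claimed inequality, whose right-hand side is the binary relative entropy between $A$ and $B$. I expect the main obstacle to be keeping the signs straight in the mixture projection so that the telescoping between the two mode contributions becomes visible; once the radius is identified the rest is a direct computation.
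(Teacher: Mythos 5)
Your proposal is correct and matches the paper's proof essentially step for step: after reducing by translation and rotation, the paper also takes the mid-point half-space $\msh$ with $r=\normLigne{m_2-m_1}/(4\sigma)$, identifies $\beta_g(\msh,r)=A$ via $\Phi^{-1}(\lambda)=-\Phi^{-1}(1-\lambda)$, and evaluates $\nu(\msh_r\setminus\msh)=B$ by combining the two projected Gaussian contributions into a single integral (your ``telescoping''). No gaps.
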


Observe that this time, $ \Lip(g) $ is no longer the only dependency in $ g $ 
since the bound also depends on the proportion of the modes of $ g_{\#}\mu_p $.
However, it should be noted that when $ g_{\#}\mu_p $ approximates correctly $ \nu $, $ \lambda $ is automatically close to $ 1/2 $  and so $ \Phi^{-1}(1-\lambda) $ is small in that case. To conclude, this section, we highlight the fact that, if our results are dimension free in theory, the dimension might be hidden in the distances between modes and the Lipschitz constant of $ g$ when working with real datasets. Indeed, the order of magnitude of the Euclidean distance between two samples $ x_i $ is likely to increases with the dimension $ d $. As an outcome, the orders of magnitude of the distance between modes and so the Lipschitz constant that $ g $ must reach for correct generation probably increase with $d$ also. 

\section{Experiments}
\label{sec:experiments}

In what follows, we illustrate the pratical implications of our results by training 
GANs, VAEs and SGMs on simple bi-modal distributions. More precisely, we show on one hand that generating multimodal distributions with GANs and VAEs is difficult since for those models, good generation necessarily involves generative networks with large Lipschitz constants. On the other hand, we show that SGMs seem to be able to generate multimodal distributions while keeping the Lipschitz constant of the score network relatively small and thus do not suffer of the same limitation.
First, we focus on the univariate case where we can easily assess
the Lipschitz constants of the networks.
Then we illustrate our results in higher dimensions by training the three models
on datasets derived from MNIST \cite[]{lecun1998gradient}.
%and show that models seem relatively behave in the same way as in the univariate case. 
In all our experiments, we use the same architecture for the VAE decoder 
and the GAN generator in order to offer rigorous comparisons of
the different models. For score-based modeling, we use 
architectures with similar numbers of learnable parameters. All details on the experiments and architecture of the networks can be found in Appendix \ref{sec:expedetails}.

\subsection{Univariate case}
First, we train a VAE and a GAN with one-dimensional latent spaces 
on $ 50000 $ independent samples drawn from
a balanced mixture of two univariate Gaussians $ \nu = (1/2)[\mathrm{N}(-m,1) + \mathrm{N}(m,1)] $
for different values of $m > 0$. We also train a SGM on the same samples. 
%We generate  samples in order to compare histograms of generated distributions with the target distribution densities. 
%The experiments are averaged over $ 10 $ runs and we plot the standard deviation as a measure of the training stability of the models. 

\paragraph{Histograms of generated distributions.}
\Cref{fig:histo_expe1_dim1} shows histograms of generated distributions 
for $ m = 10 $ with the three different models. VAE models seem to generate Gaussians modes but interpolate significantly between them, while GANs do not 
interpolate but fail to retrieve the structure of the target distribution
and forget parts of their support, which is known as \emph{mode collapse} and is a common pitfall of such models \cite[]{arjovsky2017towards,metz2017unrolled}.
On the same task, SGMs do not suffer from such shortcomings. In the following section, we will link the interpolation/mode-collapsing properties of these models with their Lipschitz constants.

\begin{figure}[h!]
  \centering
  \begin{tabular}{ccc}
  \includegraphics[width=0.26\textwidth]{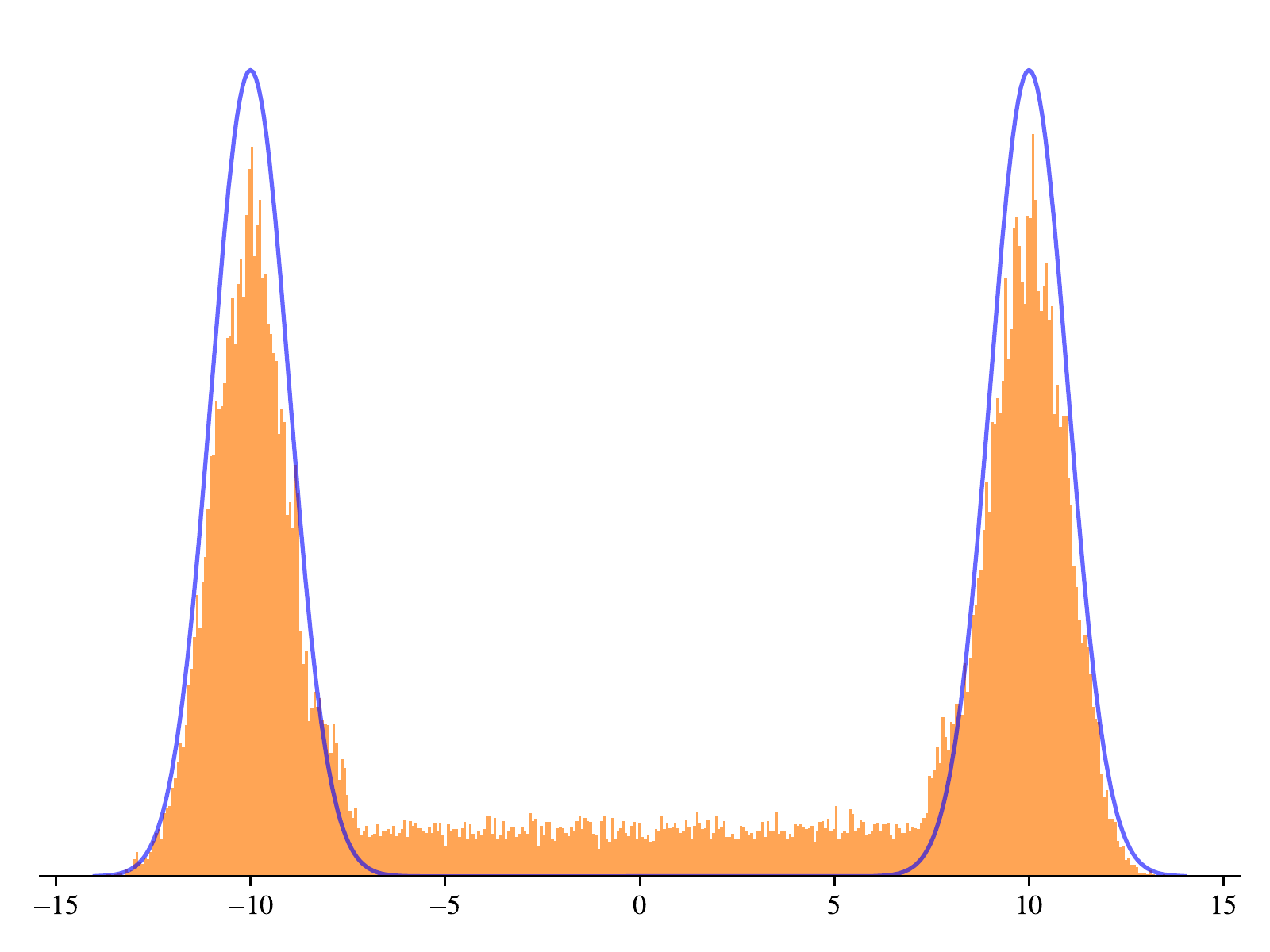}  & 
  \includegraphics[width=0.26\textwidth]{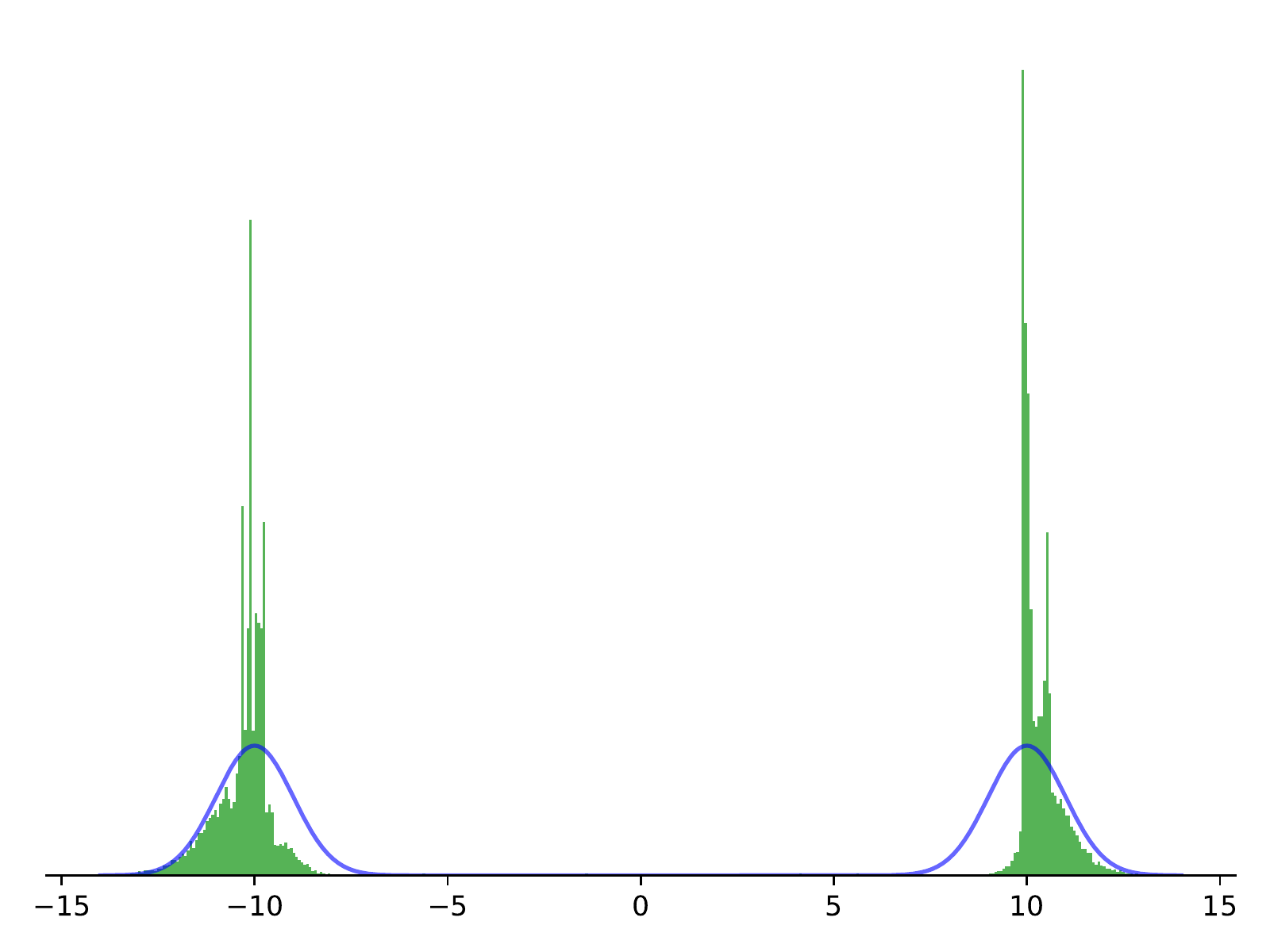}  &
  \includegraphics[width=0.26\textwidth]{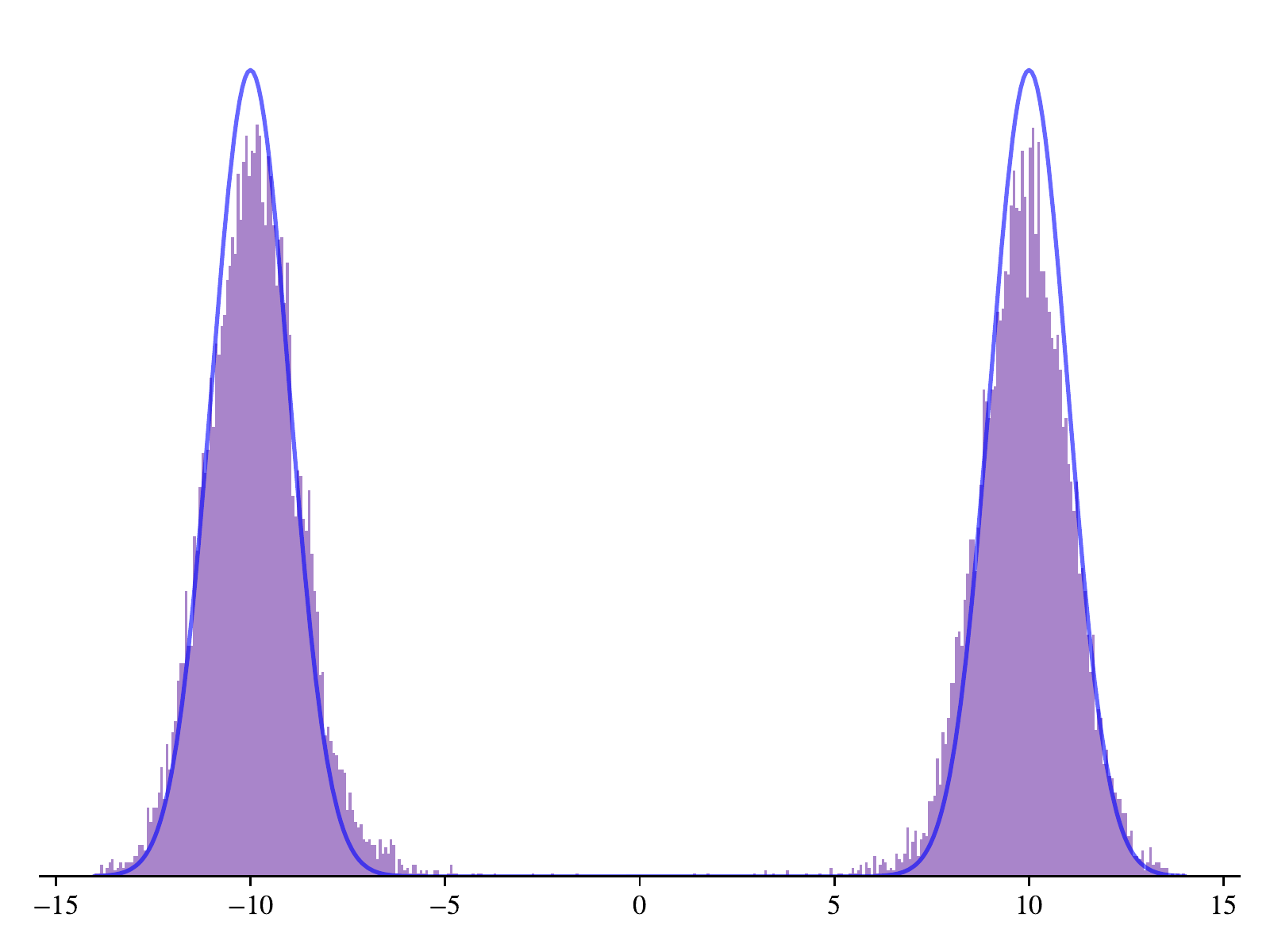}
  \end{tabular}
  \caption{Histograms of distributions generated with VAE (left, in orange), GAN (middle, in green), 
  and with SGM (right, in purple) for  $ m = 10 $. 
  The data distribution densities are plotted in blue.}\label{fig:histo_expe1_dim1}
\end{figure}

\paragraph{Lipschitz constant and mass between modes.}
%\Cref{fig:expe1_dim1} (left) shows the evolution
%of the Lipschitz constants of the three generative models  in function of $ %m $. 
%\Cref{fig:expe1_dim1} (right)  shows the evolution of the mass generated 
%by the models on the interval $ \ccint{-m/2,m/2} $ when $ m $ increases. We % also
%show the lower bounds $ \beta_g(\ocint{-\infty,-m/2},m) $
%of $ \ccint{-m/2,m/2} $  provided by Equation
% \eqref{eq:large_r} of \Cref{thm:perfectapprox} for the VAE and the GAN. 
In \Cref{fig:expe1_dim1} (right), we observe that the GAN generator reaches much larger Lipschitz constants than the VAE decoder. This 
explains the difference of behaviors between GAN and VAE observed in \Cref{fig:histo_expe1_dim1}, as the mapping learned by the VAE is not stiff enough to concentrate the push forward measure on the two modes. 
One possible explanation for the interpolating behavior of the VAE is that the Euclidean norm of the Jacobian of the VAE decoder is implicitly regularized during training, 
as it has been demonstrated in \cite{kumar2020implicit}. 
Both GAN and VAE 
saturate the constraint on $ g_{\#}\mu_p(\ccint{-m/2,m/2}) $ provided by \Cref{thm:perfectapprox}, 
meaning that the generative networks minimize the amount of mass between
modes as much as their Lipschitz constants allow it. Finally, we can observe 
that the score network is able to keep 
a relatively small Lipschitz constant compared to the GAN, while managing to interpolate
less than the latter. 
%When the modes are well separated, the distribution generated
%with the score-based model has almost as little mass as $ \nu $. 
A probable explanation for this follows from the fact 
that the score network is used multiple time during inference. Hence,
the Lipschitz constant of the push-forward mapping (the whole generation dynamic) is likely
much larger than the Lipschitz constant of the neural network itself, and
so the model is able to push-forward a Gaussian distribution into a multimodal
distribution keeping a relatively small Lipschitz constant of the score
network. Finally, in \Cref{fig:expe1_dim1} (left), we observe that  when $ m $ increases, 
the Lipschitz constant of the VAE decoder and the GAN generator
becomes rapidly much smaller than the value of the lower bound 
provided by \Cref{coro:gm1}. This means that for $m$ large enough it is not possible to close the gap between the data distribution and the push-forward distribution. We highlight that this observation does not apply to SGMs since in this setting the network is applied multiple times.
%Another possible explanation is that it is a consequence of the fact that in VAEs, 
%the distribution of the training data at the output of the encoder
%in the latent space is often far from the standard normal distribution 
%used during inference, as it was pointed in \cite{dai2018diagnosing}. 
%Hence, when the true data distribution is multimodal, the latent training data distribution in input of the VAE decoder 
%is likely to be multimodal as well. As an outcome, 
%the decoder doesn't have to reach a large Lipschitz constant in order to separate the modes during training, 
%since the modes are already partly separated. 
%This is not the same for the GAN generator which has to separate a single mode into several different
%modes during training. 

\begin{figure}[h!]
  \centering
  \scalebox{0.95}{
    \begin{tabular}{cc}
    \hspace{1.4em} \vspace{-0.2 em} {\ssmall Lipschitz constants} & \hspace{-1.5em} \vspace{-0.2 em} \hspace{2em} {\ssmall \quad Mass between modes} \\
    \hspace{-1.8em} 
     \includegraphics[width=0.46\textwidth]{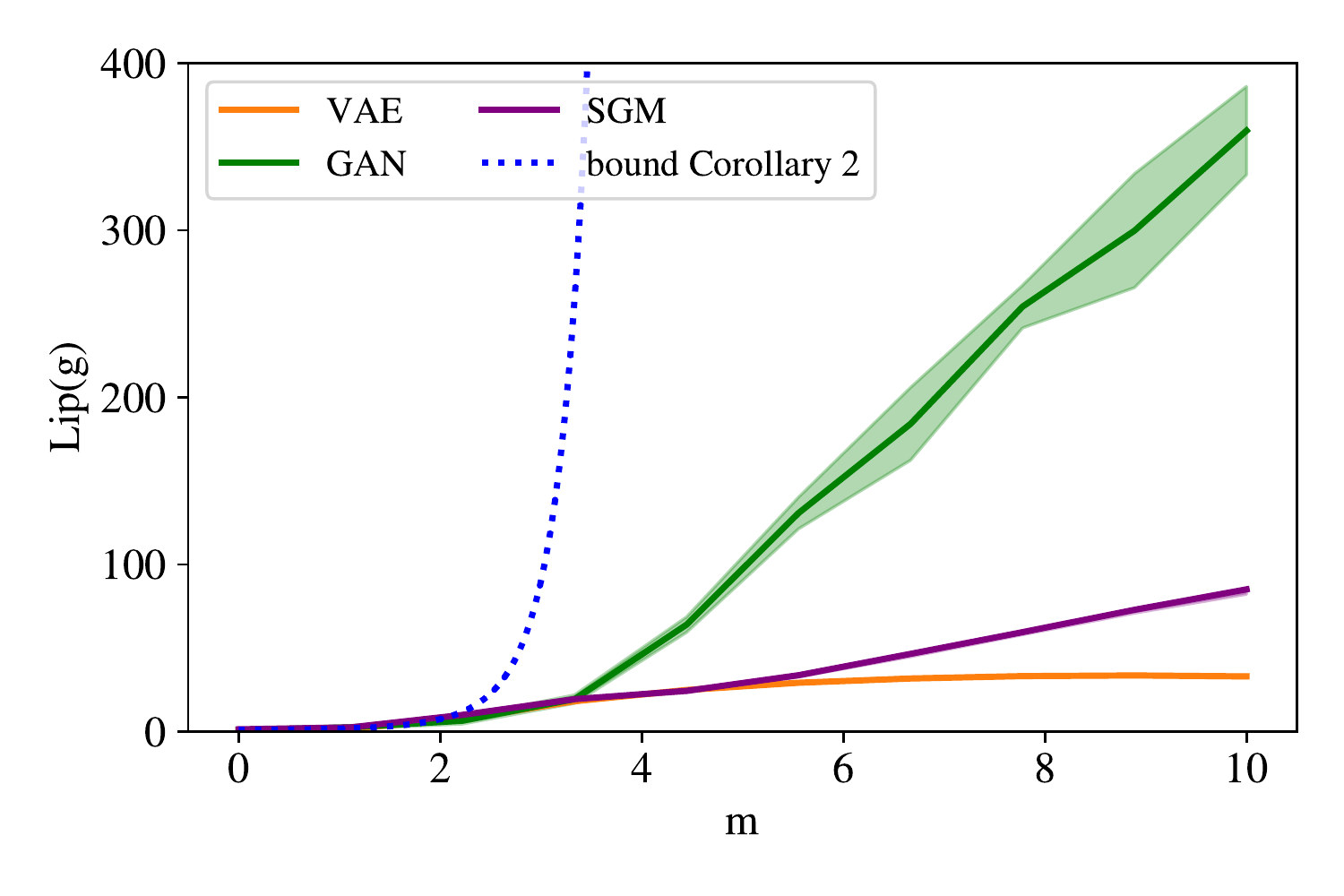}   \hspace{-1.5em} \vspace{-0.5 em}
    & 
    \includegraphics[width=0.46\textwidth]{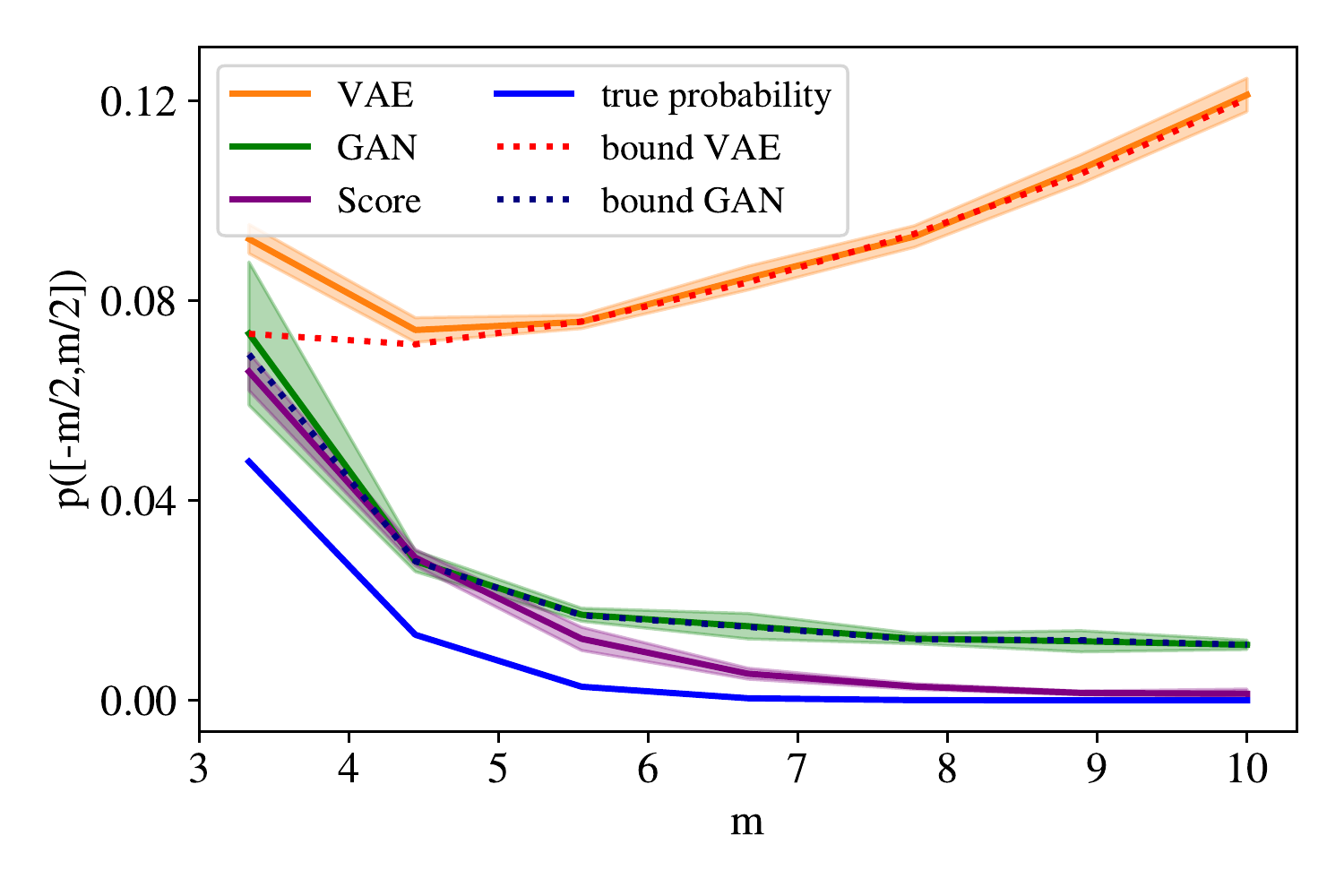}    \vspace{-0.5 em}
    \end{tabular}}
    \caption{Left: evolution of the Lipschitz constants of the three different generative models
    trained on 50000 samples of $(1/2)[\mathrm{N}(-m,1) + \mathrm{N}(m,1)] $, in function of $m$. Right: evolution of  the proportion of samples generated by the three models on the interval $ \ccint{-m/2,m/2} $. We also show on this graph
    the lower bounds predicted by \Cref{thm:perfectapprox} for the VAE and the GAN, as well as  the
    true probability $ \nu(\ccint{-m/2,m/2}) $. Experiments are 
    averaged over $ 10 $ runs and the colored bands correspond to +/- the standard deviation.}\label{fig:expe1_dim1}
\end{figure}

\paragraph{Stability of GAN and mode collapse.} \cite{odena2018generator} suggested
that the magnitude of the norm of the generator jacobian may be causally related 
to instability and mode collapse. This is why many state-of-the-art GANs
apply spectral normalization \cite[]{miyato2018spectral} on their generators. In 
\Cref{fig:gan_SN} (left), we show that this technique cannot be used when training 
GANs on multimodal distributions: since spectral normalization constraints
the Lipschitz constant of the generator to be smaller than $ 1 $, the GAN is trained towards concentrating in one of the modes of the distribution over interpolating massively between them. 
This has been referred to as \emph{mode dropping} by \cite{khayatkhoei2018disconnected}. 
To complete this analysis, we also train the GAN adding an additional 
gradient penalty term  $ 10/L^2 \max_{z \sim \mathrm{N}(0,\Id_p)} (\|\nabla_z g_{\theta}(z)\|_2^2 - L)^2$, 
in the generator loss function, similarly to WP-GAN  \cite[]{gulrajani2017improved}, where
 $ L $ is an hyperparameter corresponding to 
the targeted Lipschitz constant. 
As expected, we can observe in \Cref{fig:gan_SN} (right), that when $ \text{Lip}(g) $ increases, the 
GAN begin to generate both modes but becomes also more and more prone to mode collapse. This illustrates the fundamental trade-off between expressivity and robustness in push-forward generative models.

\begin{figure}[h!]
  \centering
  \begin{tabular}{c|ccc}
    \includegraphics[width=0.21\textwidth]{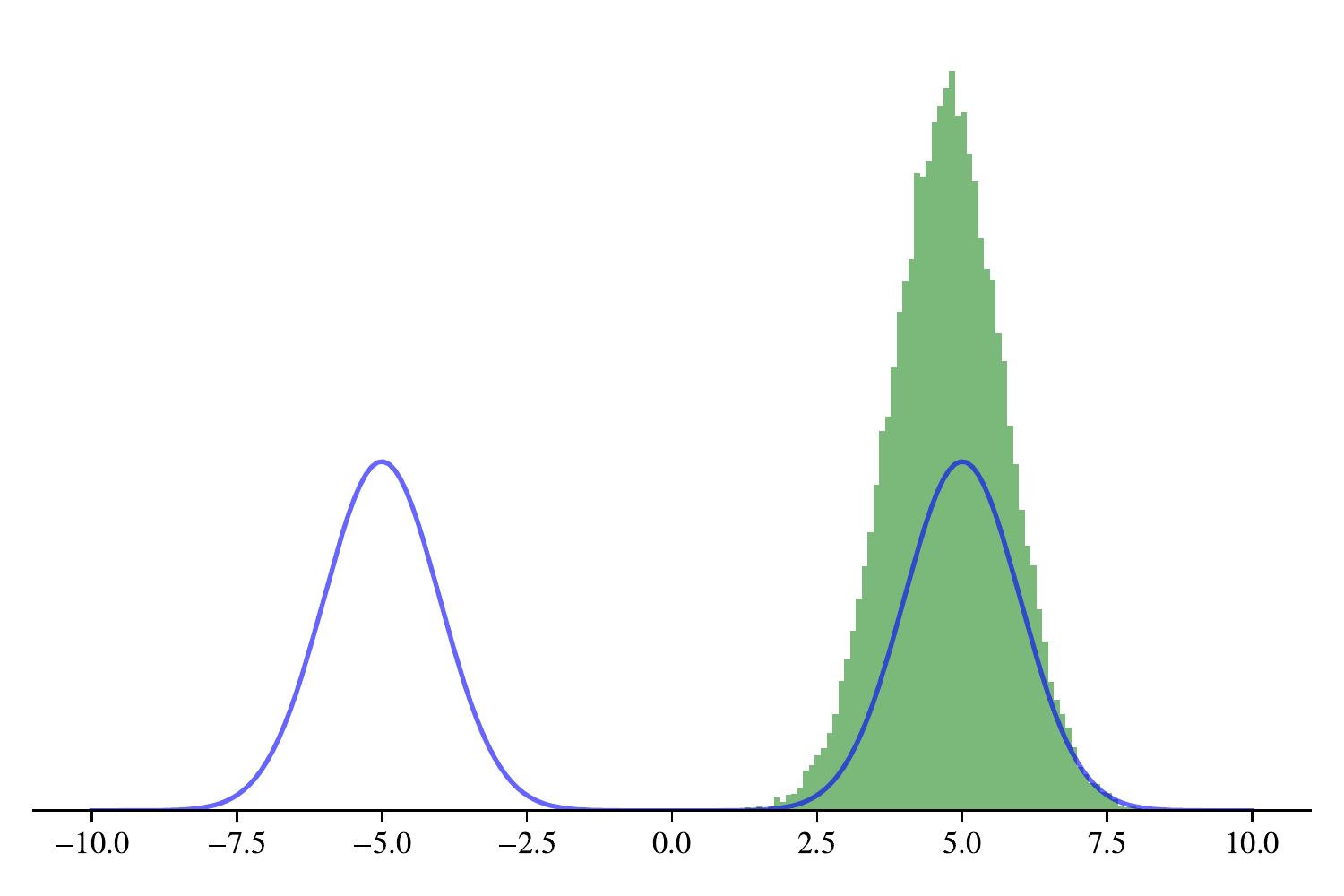} &
    \includegraphics[width=0.21\textwidth]{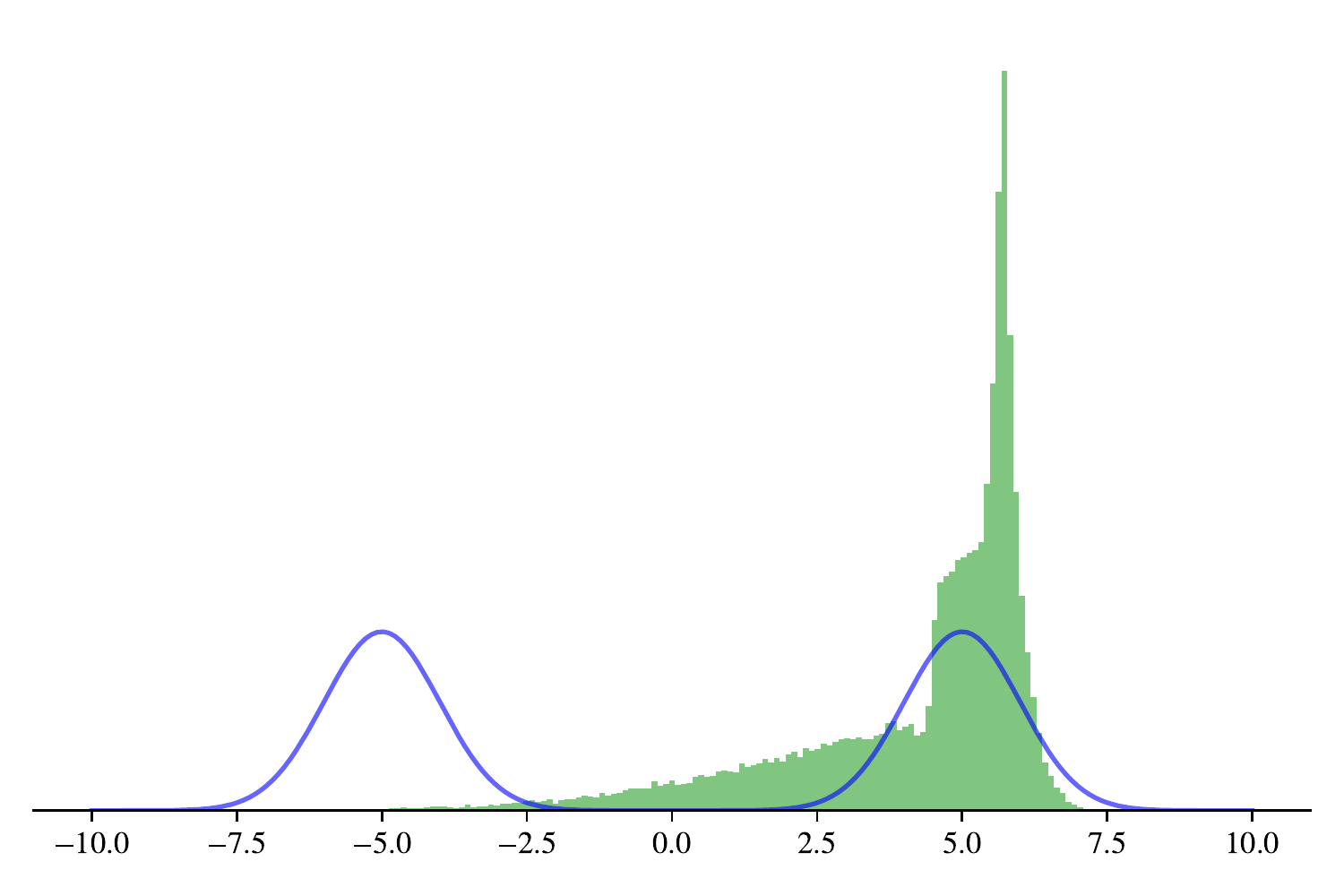} &
    \includegraphics[width=0.21\textwidth]{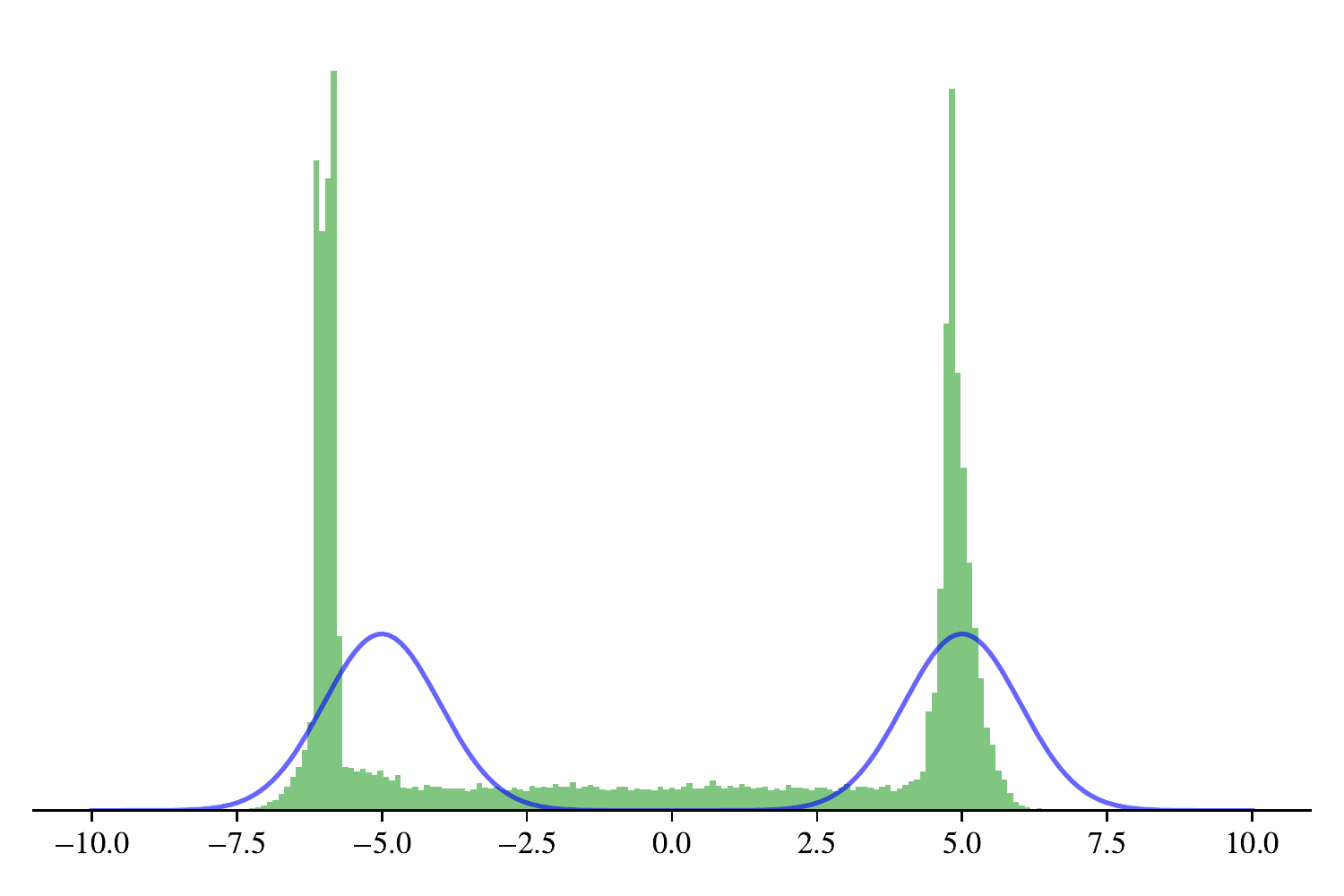} &
    \includegraphics[width=0.21\textwidth]{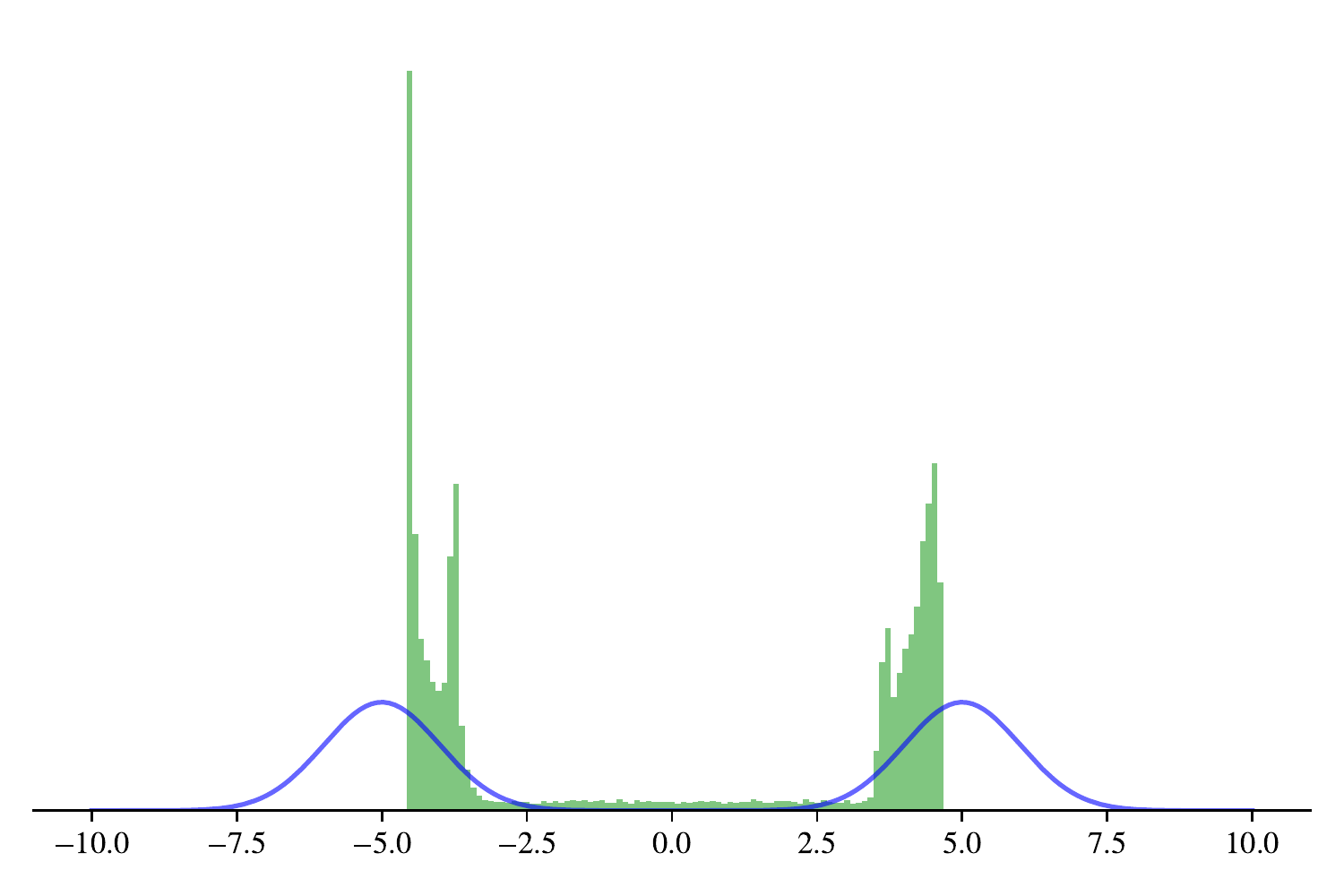}     
  \end{tabular}
\caption{Histograms of distributions generated with GANs with spectral normalization
applied on the generator (left), and with gradient penalty (right) for $ \mathrm{Lip}(g) \approx L  = 5 $,
$ \mathrm{Lip}(g) \approx L  = 15 $ and $ \mathrm{Lip}(g) \approx L = 25 $. The data distribution densities are plotted in blue.}\label{fig:gan_SN}
\end{figure}

\paragraph{Influences of generator depth and time of training.}
In \Cref{fig:n_layer_n_epochs}, we study the effect of increasing the number of layers of the generative network as well as increasing the training time on the value of the Lipschitz constant of the VAE decoder and the GAN generator. In the VAE setting, the Lipschitz constant
increases linearly with the depth of the decoder. This is not the case in the GAN setting, where increasing the size of the model seems to  dramatically affect its stability. For both models, the Lipschitz constants of the generative network grow with the number of epochs. Yet
this growth seems to be logarithmic for the VAE and the GAN seems to becomes
more unstable as the number of epochs increases. 
%This supports the conclusion that increasing the depth and the training time allow for greater expressivity of the model at the cost of stability in the training. 

\begin{figure}[h!]
\centering
  \scalebox{1}{
    \begin{tabular}{cc}

     \includegraphics[width=0.4\textwidth]{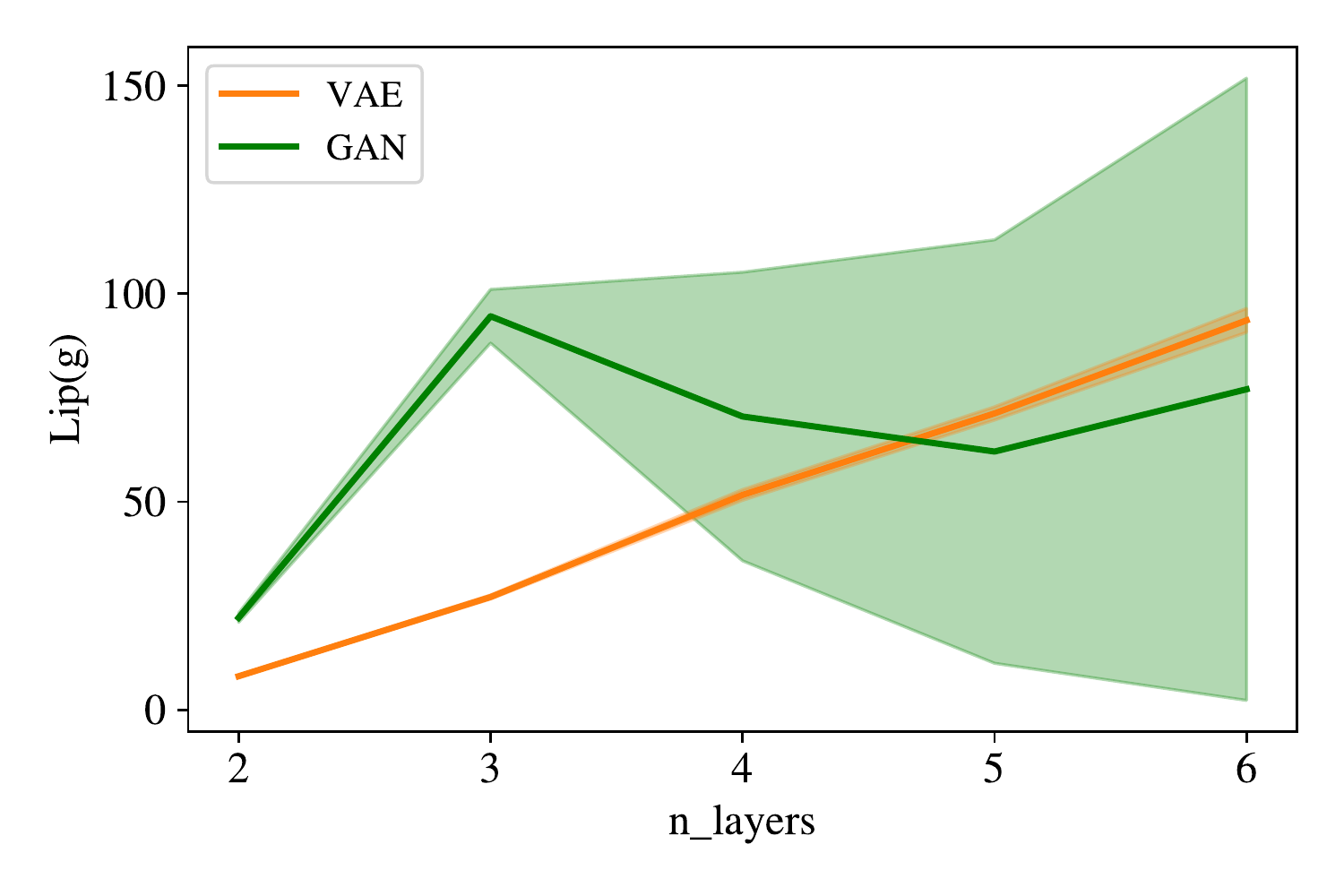}   \hspace{-1.5em} \vspace{-0.5 em}
    & 
    \includegraphics[width=0.4\textwidth]{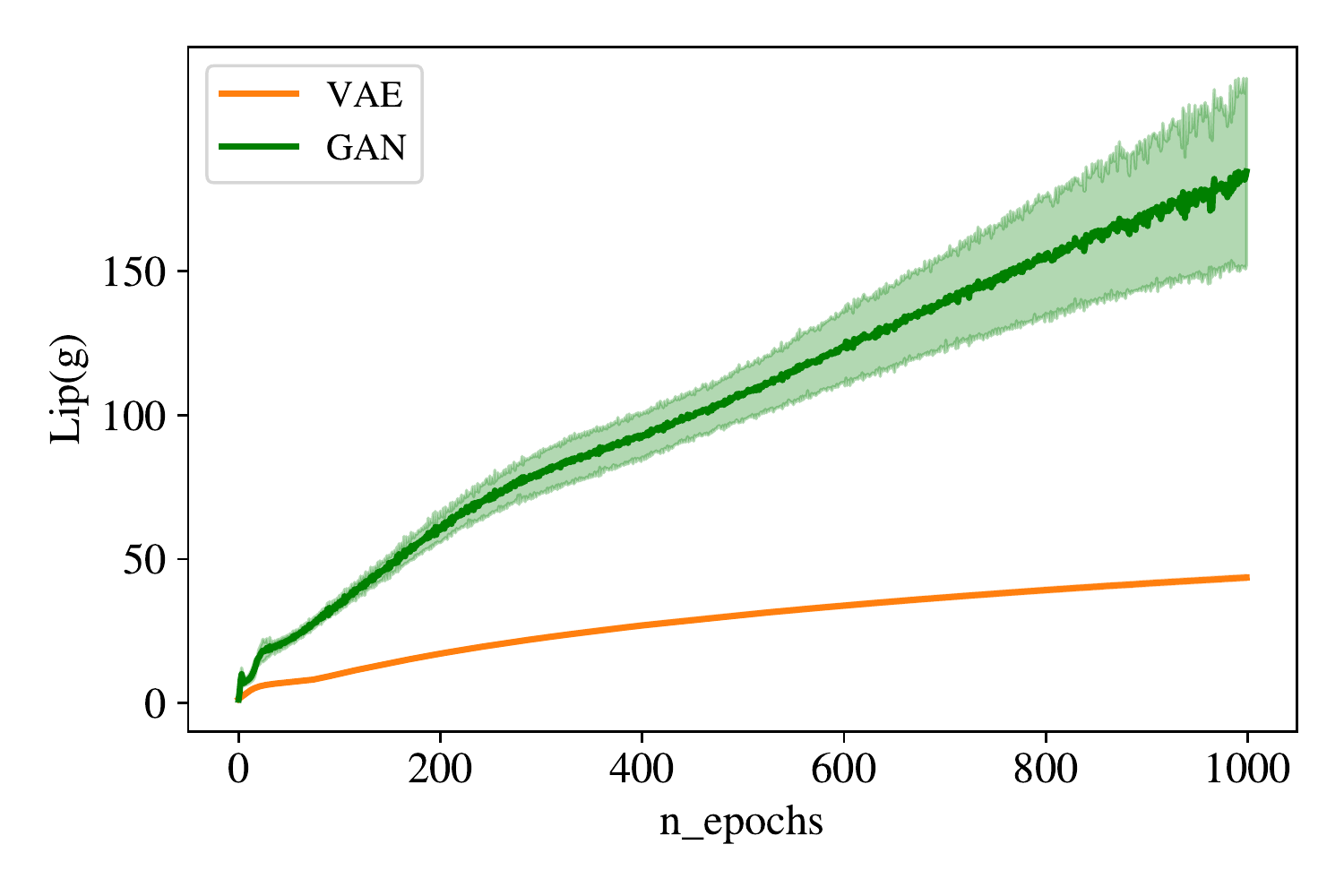}    \vspace{-0.5 em}
    \end{tabular}}
\caption{Evolution of the Lipschitz constant of the generative network with respect to its number of layers (left) and 
of the Lipschitz constant in function of the numbers of epochs (right). The experiments are 
averaged over $ 10 $ runs and the colored bands correspond to +/- the standard deviation.
}\label{fig:n_layer_n_epochs}
\end{figure}

\begin{figure}[h!]
  \centering
  \scalebox{0.95}{
    \begin{tabular}{cc}
    \hspace{1.4em} \vspace{-0.2 em} {\ssmall VAE} & \hspace{-1.5em} \vspace{-0.2 em} \hspace{2em} {\ssmall \quad GAN} \\
    \hspace{-1.8em} 
     \includegraphics[width=0.46\textwidth]{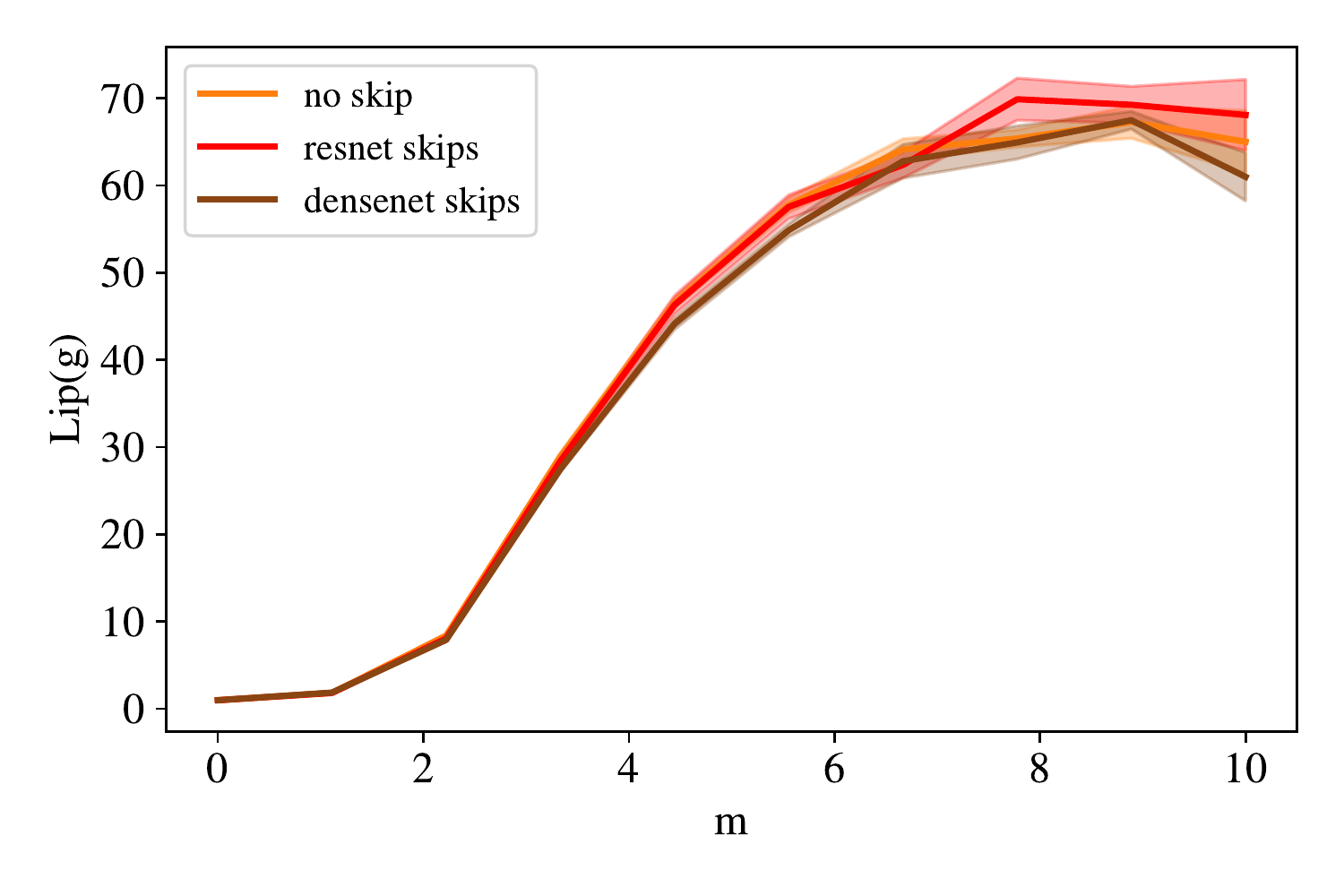}   \hspace{-1.5em} \vspace{-0.5 em}
    & 
    \includegraphics[width=0.46\textwidth]{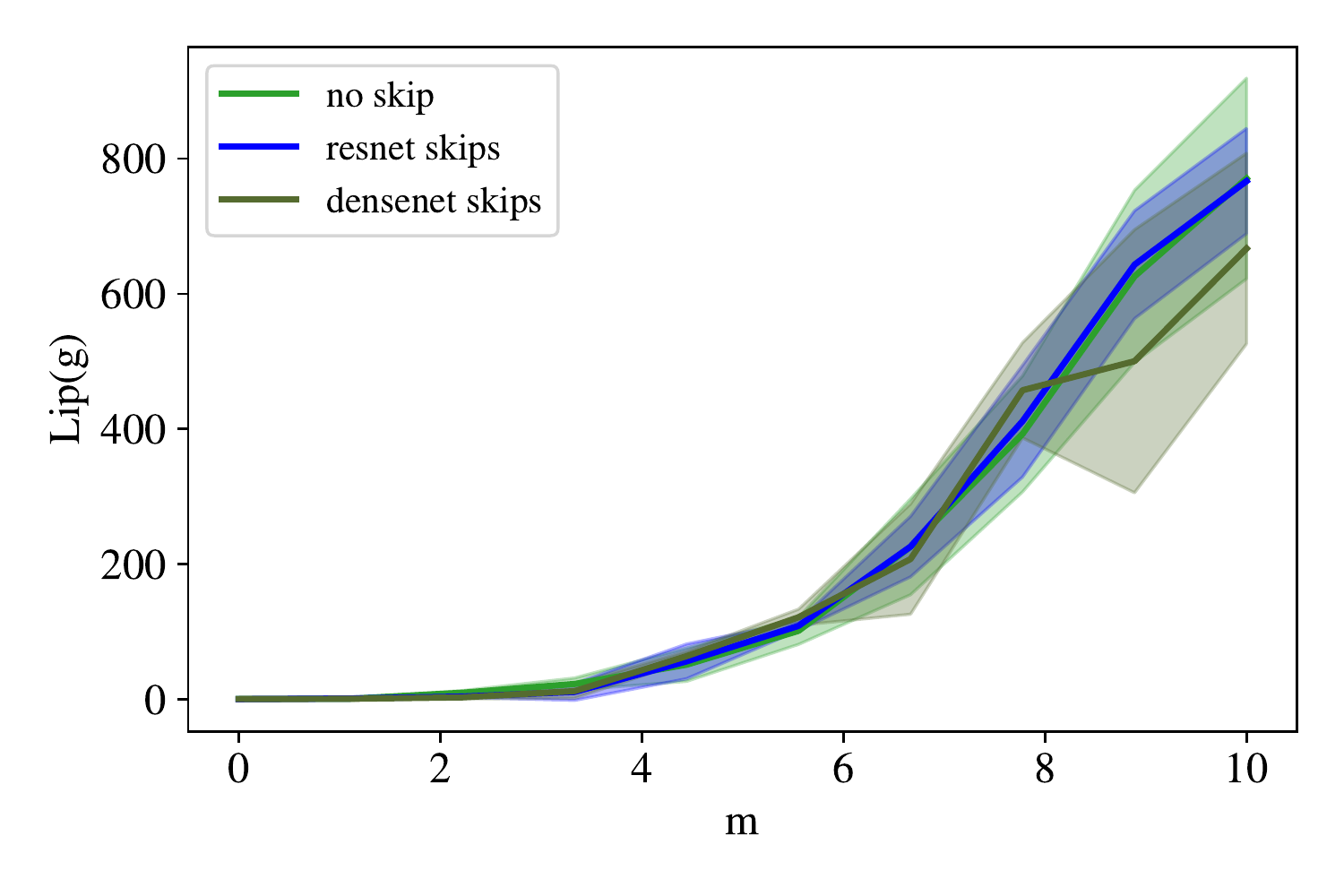}    \vspace{-0.5 em}
    \end{tabular}}
    \caption{Evolution of the Lipschitz constant of the VAE decoder (left) and the GAN generator (right)
    trained on 50000 samples of $(1/2)[\mathrm{N}(-m,1) + \mathrm{N}(m,1)] $ for $ 3 $ different architectures of the generative network: simple feed-forward backbone, backbone with skip-connections of type "resnet", and backbone with skip-connections of type "densenet". Experiments are 
    averaged over $ 5 $ runs and the colored bands correspond to +/- the standard deviation.}\label{fig:expe_archi}
\end{figure}

\paragraph{Influence of generator architecture.}
Finally, we study in \Cref{fig:expe_archi} the impact of the architecture of the generative network (i.e. the VAE decoder and the GAN generator) on its Lipschitz constant as well as on the training stability of the model by comparing three different architectures: first, we use a simple feed-forward network as precedently, then we add additive skip-connections of type "resnet" \cite[]{he2016deep} to the previous backbone, and last we add concatenation skip-connections of type "densenet" \cite[]{huang2017densely} instead of additive skip-connections. For both models, it seems that more expressive decoder architectures do not help to reach larger values of  Lipschitz constant. However, one can observe that in the GAN setting, even if the model remains certainly too unstable for correct distribution generation, adding additive skip-connections seems to stabilize the training a little since the colored bands are narrower than for the two other models. This suggests that some generator architectures may be better than others at learning mappings with large Lipschitz constants while staying stable.

\subsection{Experiments on MNIST}
We train a VAE, a GAN and a SGM on two datasets derived from MNIST \cite[]{lecun1998gradient}:
first, two images of two different digits ($3$ and $ 7$) are chosen and $ 10000 $ noisy versions of theses images are drawn
with a noise amount of $ \sigma = 0.15 $, forming 
a dataset of $ n = 20002 $ independent samples drawn from a balanced mixture of two Gaussian distributions in dimension $ 784 =28 \times 28 $. Second, 
we train the models on the subset of all $ 3 $ and $ 7 $ of MNIST. 
We emphasize that our goal is not reach state-of-the-art performance on this problem but rather to illustrate our theoretical results in a moderate dimensional setting.

\begin{figure}[h!]
  \centering
\includegraphics[width=0.9\textwidth]{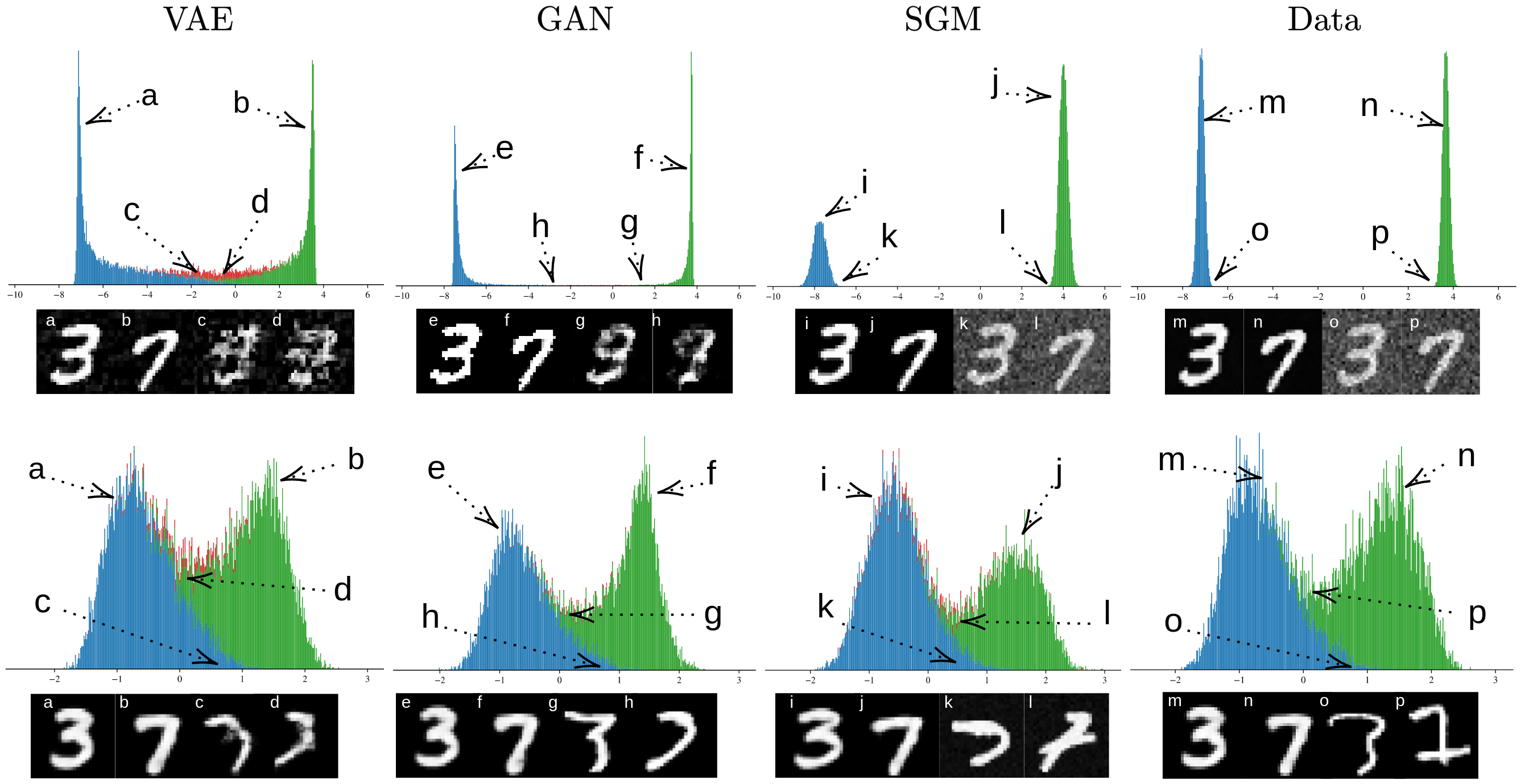}
  \caption{mixture of Gaussians (top): histograms of projections on the line passing through the mean of each Gaussian. 
  Subset of MNIST (bottom): histograms of projections on the line passing through the barycenters of 
  all the $3 $ and $ 7 $ in the deep Wasserstein 
  embedding space. Bins of data are colored in blue if they are classified as $ 3 $, 
  in green if classified as $ 7 $, 
  and in red if classified as another digit.}\label{fig:mnist}
\end{figure}

\paragraph{Mixture of Gaussians.} For 
this experiment, we set the dimension of the  latent space in the GAN and the VAE to $ 784 = 28 \times 28  $ 
since it is the intrinsic dimension of the support of the data distribution.
In order to visualize 
the interpolation between modes, 
we project the data on the line passing through the mean of each Gaussian, 
i.e. the two original clean images, and we plot histograms of the one-dimensional projections. 
In order to understand which bins of data in the histograms correspond to which digit, we
train a classifier and we assign a color in function of which digit the data have been classified as. 
Results can be found in \Cref{fig:mnist} top. Moreover,
GAN and VAE both fail to generate noisy versions of the images. 
As in the univariate case, the SGM is able to not interpolate between modes and seem to retrieve the Gaussian structure of the modes. 
This suggests that while direct push-forward models fail at representing multimodal distributions, considering stacked models with noise input at each step (as in SGM) might help to close the gap between the generated and the data distributions. 
However SGM does not manage to retrieve the 
right modes proportions. This is a well-known shortcoming of score-based models which has been studied in \citep{wenliang2020blindness}. 
%We observed that increasing the size of the score network 
%seems to be a way to reduce this issue (see Appendix \ref{sec:expeplus}).

\paragraph{Subset of MNIST.} 
Finally 
we train the three different models on the subset of MNIST composed of all $ 3 $ and $ 7 $ (no Gaussian noise was added).
%Since it have been estimated by \cite{pope2020intrinsic} 
%that the intrinsic dimension of MNIST was around $ 10 $, 
We choose a latent dimension of $  20 $ for the VAE and the GAN. 
Since the Euclidean distance is not a meaningful metric to compare the different digits of MNIST,
we use the deep Wasserstein embedding proposed by \cite{courty2018learning}: an autoencoder is 
learned in a supervised fashion such that the Euclidean distance in the latent space 
approximates the Wasserstein distance between pairs of images of MNIST. 
In the learned Wasserstein space, 
we project data on the line passing through the Euclidean barycenters of all $3 $ and $ 7 $ 
and plot histograms of projections, using the same classifier
as before. Results can be found in  \Cref{fig:mnist} (bottom). Note that the distribution does not exhibit strong multimodality features contrary to the mixture of Gaussians settings, see Figure \ref{fig:mnist}. As before, the VAE
interpolate between modes, the GAN manages to not interpolate but generate a narrower histogram, 
and the score-based model does not interpolate and seems to recover the structure of
the distribution, but doesn't retrieve the right modes proportions. However, we emphasize that all these models seem to perform better than on the previous dataset. A possible explanation
of this is that the modes are less separated than in the Gaussian mixtures and therefore the model is easier to train.

\section{Discussion}
\label{sec:disc}
In this work, given a Lipschitz mapping $ g $ and a measure $ \nu $, 
we derived lower bounds on the total variation distance 
and the Kullback-Leibler divergence between the push-forward measure $ g_{\#}\mu_p $
and $ \nu $ depending on the Lipschitz constant
of the mapping $g$. 
These bounds indicate 
how the mass between the modes of the push-forward measure depends 
on the Lipschitz constant of the push-forward mapping. They highlight the trade-off between 
the ability of VAEs and GANs to 
fit multimodal distributions and the stability of their training.

A common assumption in the imaging literature, validated empirically by \cite{pope2020intrinsic}, is that distributions of natural images live on low dimensional manifolds. Understanding whether these distributions are composed of separated modes or not remains, to the best of our knowledge, an open problem. To that extent, the fact that unsupervised push-forward generative models perform well on datasets such as CelebA \cite[]{liu2015deep} could possibly be, in regard of our work, an indicator that the data distributions of those datasets are unimodal, or at least not composed of well separated modes.

%The hypothesis of the multimodality of the data distribution made in the theorems is difficult to check on real datasets. Indeed, if the common assumption that distributions of natural images live on very low dimensional manifolds has been validated empirically by \cite{pope2020intrinsic}, understanding whether these distributions are composed of separated modes or not remains to the best of our knowledge an open problem. To that extent, the fact that unsupervised push-forward generative models perform well on datasets such as CelebA \cite[]{liu2015deep} could possibly be, in regard of our work, an indicator that the data distributions of those datasets are unimodal, or at least not composed of separated modes {\color{red} pas sûr de cette phrase}. 

Several techniques have been proposed in the literature to fit 
data distributions on disconnected manifolds. Most of them consist
in overparametrizing the model, either by using stacked generative networks
\cite[]{khayatkhoei2018disconnected,mehr2019disconet} or by learning 
a more complex latent distribution than the standard Gaussian \cite[]{gurumurthy2017deligan,rezende2015variational,kingma2016improved,luise2020generalization}. 
Other methods consist in rejecting a posteriori samples associated 
to large values of the Jacobian generator \cite[]{tanielian2020learning,issenhuth2020learning}.
In this work, we empirically showed that score-based models seemed to be able to fit separated manifolds
without model overparametrization or additional posterior sample rejection scheme. This suggests that the structure 
of the generation dynamic in these models is particularly adapted to (indirectly) learn mappings with large Lipschitz constants. 
Their good performance on multimodal distributions might follow from the fact that these models do not optimize directly the push-forward mapping
itself and/or that noise is injected at each step during the generation process. Hence, a future perspective of work would be to study what are the structural aspects of diffusion models that play a significant role in their expressivity. 
% {\color{red} pas sûr de ca: 
% est ce quand on utilise le solver numérique de la sde (par exemple langevin) pour générer on est
% pas en train de faire un pushforward avec une fonction deterministe dont l'entrée est tous les bruits qui interviennent 
% dans la dynamique?}.
%To that extent, in the context of Normalizing Flows \cite[]{rezende2015variational}, \cite{cornish2020relaxing} showed 
%that injecting stochasticity in the model improved the expressivity of the model. Hence,
%a future perspective of work would be to study how the introduction of stochasticity 
%affect the expressivity of generative models. 

A possible limitation of this work is that  
the bounds derived on the Kullback-Leibler divergence and total variation distance are not tight (see Appendix \ref{sec:expeplus}), mainly because 
they take no account of the fact that when interpolating,
$  g_{\#}\mu_p $ has automatically less mass than $ \nu $ on the modes
since a significant amount of its total mass is between them. In future work, we plan to tighten the gap between our bounds and the true distance.

%+ OTHER DISTANCES LIKE WASSERSTEIN

%While we focused in this paper on the mass generated between modes for multimodal distributions, another interesting direction would be to focus on distribution tails...

%\begin{figure}
%\centering
%\begin{tabular}{cc}
%\includegraphics[width=0.47\textwidth]{figure_neurips/expe_synthetic_gm/mnist_synthetic_gm.png}&
%\includegraphics[width=0.47\textwidth]{figure_neurips/expe_mnist_ 3_7/mnist_3_7.png}
%\end{tabular}
%\caption{}\label{fig:mnist_vae_gan}
%\end{figure}

\bibliographystyle{apalike}
\bibliography{article}

\appendix

\theoremstyle{plain}
\newtheorem{unlemma}{Lemma S}
\newtheorem{unproposition}{Proposition S}
\newtheorem{uncorollary}{Corollary S}
\newtheorem{untheorem}{Theorem S}

\setcounter{equation}{0}
\setcounter{figure}{0}
\setcounter{table}{0}
\makeatletter
\renewcommand{\theequation}{S\arabic{equation}}
\renewcommand{\thefigure}{S\arabic{figure}}
\renewcommand{\thetheorem}{S\arabic{theorem}}
\renewcommand{\thedefinition}{S\arabic{definition}}
\renewcommand{\thelemma}{S\arabic{lemma}}
\renewcommand{\thesection}{S\arabic{section}}
\renewcommand{\theremark}{S\arabic{remark}}
\renewcommand{\theproposition}{S\arabic{proposition}}
\renewcommand{\thecorollary}{S\arabic{corollary}}
\setcounter{tocdepth}{1}

\newpage
\section{Organization of the supplementary}
\label{sec:organ-suppl}

The supplementary is organized as follows. First, in \Cref{sec:diff}, we give details on why the generated distribution in each aforementioned generative model is a push-forward of a standard Gaussian distribution. In Appendix \ref{sec:proof-}, we give the full proofs of all the theoretical results of the paper.  In Appendix \ref{sec:theoplus}, we show a generalization of \Cref{coro:discomanifold} in the case there is more than two disconnected manifolds. In
Appendix \ref{sec:expedetails}, we give details on the experiments.
Finally, in Appendix \ref{sec:expeplus}, we provide additional experimental results and additional visualizations of histograms of
generated distributions for the univariate case, and generated 
data for the experiments on MNIST. 

\section{More details on push-forward generative models}
\label{sec:diff}
In this section, we specify why in each aforementioned model the generated distribution is of the form $ g_{\theta\#}\mu_p $ with $ \mu_p = \mathrm{N}(0,\Id_p) $ being the standard Gaussian distribution in dimension $ p $ and $ g_{\theta} : \rset^p \rightarrow \rset^d$ being a deterministic mapping of parameter $ \theta $. 

\subsection{Direct push-forward models}
In GANs, the generated distribution is trivially of the form $ g_{\theta\#}\mu_p $, where 
$ g_{\theta} $ is the generator and $ p $ is the dimension of the latent space. This is also the case in most of normalizing flow models, where $ g_{\theta} $ is a composition of neural flows and $ p $ is automatically equal to $ d $ since the networks must be invertible. In the Gaussian-VAE model described in \cite{kingma2013auto}, the conditional probability $ p_{\theta}(x|z) $, with $ z \in \rset^p $ and $ x \in \rset^d $ being respectively the latent and observable variables, is of the 
form $ p_{\theta}(x|z) = \mathrm{N}(f_{\theta}(z),h_{\theta}^2(z)\Id_d) $, where $ f_{\theta}(z)  $ and $g_{\theta}) $ are the outputs of the decoder. This is often simplified in practice to $ p_{\theta}(x|z) = \mathrm{N}(f_{\theta}(z),c^2\Id_d) $ with $ c > 0 $ being an hyperparameter of the model. Thus the generated distribution in Gaussian-VAE is of the form
$ g_{\#}\mu_p $, where $ g_{\theta} : \rset^{p+d} \rightarrow \rset^d $ is the neural network defined as
\begin{equation}
g_{\theta}(z,z') = \left\{
    \begin{array}{ll}
        f_{\theta}(z) + h_{\theta}^2(z)z' & \mbox{if }  p_{\theta}(x|z) = \mathrm{N}(f_{\theta}(z),h_{\theta}^2(z)\Id_d)  \\
        f_{\theta}(z) + c^2z' & \mbox{if } p_{\theta}(x|z) = \mathrm{N}(f_{\theta}(z),c^2\Id_d) \eqsp.
    \end{array}
\right.
\end{equation}
Moreover,  if relaxing the conditional probability to $ p_{\theta}(x|z) = \mathrm{N}(f_{\theta}(z),h_{\theta}^2(z)\Id_d) $ or $ p_{\theta}(x|z) = \mathrm{N}(f_{\theta}(z),c^2\Id_d) $ instead of
$ \delta_{f_{\theta}(z)} $ is crucial for the training of VAEs, this is not particularly relevant during inference since the relaxed conditional distribution is simply a noisy version of the push-forward measure $ f_{\theta\#}\mu_p $. For this reason, the inference in VAEs is often done by
simply sampling from the single output $ f_{\theta}(z) $ of the decoder, with $ z \sim \mu_p $, and so the generated distribution is trivially of the form $ f_{\theta\#}\mu_p $ in that case.

\subsection{Score-based generative models}
In diffusion models, the generation process is an Euler-Maruyama discretization of the reverse-time denoising diffusion \cite[]{song2020score}

\begin{equation}
\rmd X_t = \left(f(X_t,t) - g^2(t)\nabla \log p_{t}(X_t)\right)\rmd t + g(t)\rmd B_t \eqsp,
\end{equation}

where $ (X_t)_{t \in [0,T]} $ is a random process on $ \rset^d $, 
$ (B_t)_{t \in [0,T]} $ is a Brownian motion, and where $ f(.,t) : \rset^d \rightarrow \rset^d $ is a vector-valued function called the drift operator, $ g : \rset \rightarrow \rset $ is a real-valued function called the diffusion coefficient and $ \nabla \log p_{t}(X_t) $  is the score of the marginal law of $ X_t $, which is approximated by a neural network $ s_{\theta}(X_t,t) $. This discretization yields for instance, for an appropriate choice of $ f $ and $ g $, to the annealed Langevin 
dynamic \cite[]{song2019generative}:

\begin{equation}
 \left\{
    \begin{array}{ll}
        x_0 = z_0 \quad (z_0 \sim  \mathrm{N}(0,\Id_d)) \\
        x_{k+1} = x_k + (\alpha_k/2)s_{\theta}(x_k,\sigma_k) + \sqrt{\alpha_k}z_{k+1} \quad (z_{k+1} \sim \mathrm{N}(0,\Id_d))  \eqsp,
    \end{array}
\right.
\end{equation}
where $ \alpha_k = \varepsilon\sigma^2_k/\sigma^2_0 $ with 
$ \varepsilon > 0 $ being an hyperparameter of the model and 
$ (\sigma_k)_{k \geq 0} $ is such that it exists $ K $ (another hyperparameter of the model) such that $ (\sigma_{Ki})_{i\geq 0} $
is a geometric progression, and for all $ k \geq 0 $, $ \sigma_k = \sigma_{K\floor{k/K}} $. Denoting for $ k \geq 1 $, $ h^k_{\theta} : \rset^{2d} \rightarrow \rset^d $ the function defined as
\begin{equation}
h^k_{\theta}(x,y) = x + (\alpha_k/2)s_{\theta}(x,\sigma_k) + \sqrt{\alpha_k}y  \eqsp,
\end{equation} 
it follows that when the data are generated with a Langevin dynamic 
of $ N $ iterations, the generated distribution is of 
the form $ g_{\theta\#}\mu_p $ with $ p = d(N+1) $ and where 
$ g_{\theta} : \rset^{d(N+1)} \rightarrow \rset^d $ is the function defined as
\begin{equation}
g_{\theta}(z_0,z_1,z_2,\dots,z_{N-1},z_N) = h^N_{\theta}\left(h^{N-1}_{\theta}\left(\dots(h_{\theta}^{2}\left(h_{\theta}^{1}(z_0,z_1),z_2\right),\dots),z_{N-1}\right),z_{N}\right) \eqsp.
\end{equation}

\section{Proofs of the theoretical results}
\label{sec:proof-}

\subsection{Proof of \Cref{thm:perfectapprox}}
\label{sec:proof-crefthm:perfectapprox}

We start by recalling the Gaussian isoperimetric inequality \cite{sudakov1978extremal}.

\begin{lemma}\label{lem:gaussiso}
  Let $\msa \in \mcb{\rset^p}$ and $\mu_p = \mathrm{N}(0,\Id_p) $. Then we have
\begin{equation}
\mu_p^+(\partial \msa) \geq \varphi(\Phi^{-1}(\mu_p(\msa)) ) \eqsp ,
\end{equation}
where $ \varphi(x) = (2 \uppi)^{-1/2}\exp[-x^2/2] $ and $ \Phi(x) = \int_{-\infty}^{x}\varphi(t) \rmd t $.
\noindent Equivalently, for all $ r \geq 0 $
\begin{equation}
\mu_p(\msa_r) \geq \Phi(r + \Phi^{-1}(\mu_p(\msa))) \eqsp . 
\end{equation}
\end{lemma}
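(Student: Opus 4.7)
The plan is to prove the Gaussian isoperimetric inequality by the classical Sudakov--Tsirelson--Borell strategy: transfer L\'evy's spherical isoperimetric inequality to $\rset^p$ via Poincar\'e's limiting procedure. The surface-area form follows from the enlargement form by letting $r \to 0^+$, so I would focus on proving $\mu_p(\msa_r) \geq \Phi(r + \Phi^{-1}(\mu_p(\msa)))$.

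First, I would establish (or invoke) L\'evy's spherical isoperimetric inequality on $\sphere^{N-1}(\sqrt{N})$ equipped with the uniform probability measure $\sigma_N$: for every Borel set $B$ and every $\rho > 0$, $\sigma_N(B_\rho) \geq \sigma_N(C_\rho)$, where $C$ is a spherical cap with $\sigma_N(C) = \sigma_N(B)$. The classical proof proceeds by two-point (polar) symmetrization: symmetrizing $B$ with respect to a great sub-sphere does not increase $\sigma_N(B_\rho^c)$, and iterating such symmetrizations along a dense family of axes converges to a cap.

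Second, I would use Poincar\'e's observation: if $X$ is uniform on $\sphere^{N-1}(\sqrt{N})$, then for a fixed $p$ the law of $(X_1, \dots, X_p)$ converges weakly to $\mu_p$ as $N \to \infty$ (as follows from the Gaussian representation $X = \sqrt{N} \, Y/\norm{Y}$ with $Y \sim \mu_N$ and the law of large numbers). For $\msa \in \mcb{\rset^p}$, set $B_N = \pi_N^{-1}(\msa) \subseteq \sphere^{N-1}(\sqrt{N})$, where $\pi_N$ is the projection on the first $p$ coordinates. Since $\pi_N$ is $1$-Lipschitz for the Euclidean distances, $(B_N)_r \subseteq \pi_N^{-1}(\msa_r)$, whence $\mu_p(\msa_r) = \lim_N \sigma_N(\pi_N^{-1}(\msa_r)) \geq \liminf_N \sigma_N((B_N)_r)$, with Poincar\'e's lemma applied to the continuity set $\msa_r$.

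Third, I would apply L\'evy's inequality to $B_N$ and compare with a cap $C_N$ with $\sigma_N(C_N) = \sigma_N(B_N) \to \mu_p(\msa)$. The key computation is $\lim_N \sigma_N((C_N)_r)$: such a cap is of the form $\{x_1 \leq t_N\}$ on $\sphere^{N-1}(\sqrt{N})$ with $t_N \to \Phi^{-1}(\mu_p(\msa))$, and its Euclidean $r$-enlargement is contained in a slightly larger cap whose Poincar\'e limit is the half-space $\{x_1 \leq \Phi^{-1}(\mu_p(\msa)) + r\}$ of Gaussian measure $\Phi(r + \Phi^{-1}(\mu_p(\msa)))$. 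Chaining the three steps delivers the enlargement inequality.

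Finally, for the surface-area bound I would write $\mu_p^+(\partial \msa) = \liminf_{r \to 0^+}(\mu_p(\msa_r) - \mu_p(\msa))/r$, substitute the enlargement bound, and compute $\lim_{r \to 0^+}(\Phi(r + a_0) - \Phi(a_0))/r = \varphi(a_0)$ with $a_0 = \Phi^{-1}(\mu_p(\msa))$ by a direct Taylor expansion. The main obstacle is the quantitative passage to the limit in Step~3: uniform control of the Poincar\'e approximation and precise identification of the limiting enlargement of caps on $\sphere^{N-1}(\sqrt{N})$. An alternative that entirely avoids the spherical detour is Bobkov's functional isoperimetric inequality $I\!\left(\int f \, \rmd\mu_p\right) \leq \int \sqrt{I(f)^2 + \norm{\nabla f}^2} \, \rmd\mu_p$ with $I = \varphi \circ \Phi^{-1}$, obtained by tensorizing a two-point inequality on $\{-1,+1\}$ and passing to the Gaussian limit via the central limit theorem; applied to a smooth approximation of $\1_{\msa}$, it delivers the conclusion directly.
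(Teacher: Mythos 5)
The paper does not actually prove this lemma: it is recalled as a known result with a citation to Sudakov and Tsirelson (1978), and the surrounding text only uses it as a black box. What you have written is, in outline, precisely the classical proof from that reference (Lévy's spherical isoperimetry plus Poincaré's limit), so there is no conflict of method — you are supplying the proof the paper delegates to the literature, and your closing remark about Bobkov's functional inequality is a legitimate alternative route that avoids the spherical detour entirely. Two points in your Step 3 deserve care. First, the containment goes the wrong way as written: to lower-bound $\sigma_N((C_N)_r)$ you need the $r$-enlargement of the cap to \emph{contain} a cap $\{x_1 \leq t_N + r\sin\alpha_N\}$ whose Poincaré limit is the half-space $\{x_1 \leq \Phi^{-1}(\mu_p(\msa)) + r\}$; saying it is \emph{contained in} a slightly larger cap only gives an upper bound, which is useless here. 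This is easily repaired because Lévy's inequality for the geodesic enlargement transfers to the Euclidean one (geodesic distance dominates Euclidean distance on the sphere, so geodesic enlargements sit inside Euclidean ones), and the geodesic $r$-enlargement of a cap is exactly a cap whose defining level converges to $\Phi^{-1}(\mu_p(\msa)) + r$. Second, $\msa_r$ need not be a $\mu_p$-continuity set for every $r$; since $r \mapsto \mu_p(\msa_r)$ is nondecreasing it has at most countably many discontinuities, so you prove the bound for all $r$ outside a countable set and extend by monotonicity and right-continuity of $\Phi$. With those repairs the sketch is a correct rendering of the standard argument, and your final passage from the enlargement form to the surface-area form (divide by $r$, let $r \to 0^+$, Taylor-expand $\Phi$) is exactly right.
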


In particular, using \Cref{lem:gaussiso}, one can show that among
all sets of given Gaussian measure $ \mu_p $, half-spaces have the minimal
$\mu_p$-surface area. We are now ready to turn to
the proof of \Cref{thm:perfectapprox}.

\begin{proof}[Proof of \Cref{thm:perfectapprox}]
  Let $\msa \in \mcb{\rset^d}$ such that $ g_\#\mu_p(\msa) > 0 $ (note that if
  $g_\#\mu_p(\msa) = 0$ then the result is trivial).  First, we show that for
  any $ \vareps > 0$,
  $g((g^{-1}(\msa))_{\varepsilon/ \mathrm{Lip}(g)}) \subset \msa_{\varepsilon}
  $.  Let $x$ be in $g((g^{-1}(\msa))_{\varepsilon/ \mathrm{Lip}(g)})$.
  There exists $ z_1 \in (g^{-1}(\msa))_{\varepsilon/ \mathrm{Lip}(g)} $ such that
  $ g(z_1) = x $.  There also
  exists $ z_2 \in g^{-1}(\msa) $ such that
\begin{equation}
\| z_1  - z_2 \| \leq \varepsilon / \mathrm{Lip}(g) \eqsp . 
\end{equation}
Hence, we have that 
\begin{equation}
\| x - a \| \leq \mathrm{Lip}(g)\| z_1 - z_2 \| \leq \varepsilon \eqsp ,
\end{equation}
where $ a = g(z_2) $. Since $ z_2 \in g^{-1}(\msa) $, $ a \in \msa$, and
therefore $ x \in \msa_\varepsilon $. Using this result, the fact that
$g_\#\mu_p(\msb) = \mu_p(g^{-1}(\msb))$ and $\msb \subset g^{-1}(g(\msb))$ for
any $\msb \in \mcb{\rset^d}$, we have
\begin{align}
\liminf_{\varepsilon \rightarrow 0^+} \{g_\#\mu_p(\msa_{\varepsilon}) - g_\#\mu_p(\msa)\}/ \varepsilon &\geq \liminf_{\varepsilon \rightarrow 0^+} \{g_\#\mu_p(g((g^{-1}(\msa))_{\varepsilon/\mathrm{Lip}(g)})) - g_\#\mu_p(\msa)\}/\varepsilon \\
             & \geq \liminf_{\varepsilon \rightarrow 0^+} \{\mu_p((g^{-1}(\msa))_{\varepsilon/\mathrm{Lip}(g)}) - \mu_p(g^{-1}(\msa))\}/\varepsilon \eqsp .  \label{eq:intermediate_pushforward_ineq}
\end{align}
Using \Cref{lem:gaussiso}, we have 
\begin{equation}\label{eq:isogmu}
\mathrm{Lip}(g) \liminf_{\varepsilon \rightarrow 0^+} \{(\mu_p((g^{-1}(\msa))_{\vareps / \mathrm{Lip}(g)}) - \mu_p(g^{-1}(\msa)))\}/\varepsilon \geq \varphi(\Phi^{-1}(\mu_p(g^{-1}(\msa)))) \eqsp , 
\end{equation}
Combining this result and \eqref{eq:intermediate_pushforward_ineq}, we get that
\begin{equation}
\mathrm{Lip}(g)(g_\#\mu_p)^+(\partial \msa) \geq \varphi(\Phi^{-1}(g_\#\mu_p(\msa))) \eqsp . 
\end{equation}
In addition, using \Cref{lem:gaussiso}, we have for all $ r \geq 0$
\begin{equation}
\mu_p((g^{-1}(\msa))_{r / \mathrm{Lip}(g)}) \geq \Phi(r / \mathrm{Lip}(g) + \Phi^{-1}(\mu_p(g^{-1}(\msa)))) \eqsp .
\end{equation}
Using this result and that
$ g((g^{-1}(\msa))_{r / \mathrm{Lip}(g)}) \subset \msa_{r} $, we have for any $r \geq 0$
\begin{equation}
  g_\#\mu_p(\msa_r) = \mu_p(g^{-1}(\msa_r)) \geq \mu_p((g^{-1}(\msa))_{r / \mathrm{Lip}(g)}) \geq \Phi(r / \mathrm{Lip}(g) + \Phi^{-1}(g_\#\mu_p(\msa))) \eqsp . 
\end{equation}
\end{proof}

\subsection{Proof of \Cref{coro:gm1}}
\label{sec:proof-crefcoro:gm1}
We prove the corollary when $ \nu = \lambda\mathrm{N}(-m,\sigma^2\Id_d) + (1 - \lambda)\mathrm{N}(m,\sigma^2\Id_d) $
since the problem can always be reduced to that case by translation and
setting $ m = (m_2 - m_1)/2 $.
Let $\msh$ be defined by $\msh = \{ x \in \rset^d | m^Tx \geq 0 \}$. Note that
for any $x \in \partial \msh$, $\normLigne{x - m} = \normLigne{x + m}$.  Since
the problem is invariant by rotation, we can consider without any loss of
generality that $ m = (\|m\|,0,\dots,0) $. In that case, we have
$\nu = \nu_1 \otimes \mathrm{N}(0,\sigma^2\Id_{d-1})$, where
$ \nu_1 = \lambda\mathrm{N}(-\|m\|,\sigma^2) + (1 - \lambda)\mathrm{N}(\|m\|,\sigma^2) $,
and $ \otimes $ is the tensor product between measures.  In this case, we have
that $\msh = \{x_1 \geq 0\} \times \rset^{d-1}$. Therefore, we have 
\begin{align}
  \nu^+(\partial \msh) &= \textstyle{\liminf_{\varepsilon \rightarrow 0^+}\{(\int_{\msh_\varepsilon}p_{\nu}(x)\rmd x - \int_{\msh}p_{\nu}(x)\rmd x )\}/\vareps} \eqsp, \\
                       &= \textstyle{\liminf_{\varepsilon \rightarrow 0^+}\{(\int_{-\varepsilon}^{+\infty}\int_{\rset^{d-1}}p_{\nu_1}(x_1)h(y)\rmd x_1\rmd y - \int_{0}^{+\infty}\int_{\rset^{d-1}}p_{\nu_1}(x_1)h(y)\rmd x_1\rmd y )\}/\vareps  \eqsp ,}
\end{align}
where $ p_{\nu} $ and $  p_{\nu_1} $ are the respective densities  of $ \nu $ and $ \nu_1 $, and $ h $ is the density of $ \mathrm{N}(0,\sigma^2I_{d-1}) $. It follows
that 
\begin{align}
\nu^+(\partial \msh) &= \textstyle{\liminf_{\varepsilon \rightarrow 0^+}(1/\varepsilon) \int_{-\varepsilon}^0 p_{\nu_1}(x_1) (\int_{\rset^{d-1}} h(y)\rmd y) \rmd x_1} \\
&= \textstyle{\liminf_{\varepsilon \rightarrow 0^+}(1/\vareps) \int_{-\varepsilon}^0 p_{\nu_1}(x_1)\rmd x_1= p_{\nu_1}(0) = (2\uppi\sigma^2)^{-1/2}\exp[-\|m\|^2 / (2\sigma^2)] \eqsp . }
\end{align}

Applying \Cref{thm:perfectapprox}, we get that
\begin{equation}\label{eq:gmlipcond}
\mathrm{Lip}(g) \geq \varphi(\Phi^{-1}(\nu(\msh))) / \nu^+(\partial \msh) \eqsp . 
\end{equation}
Furthermore, one can derive that
\begin{align}
\nu(\msh) &= \lambda(1 - \Phi(m/\sigma)) + \Phi(m/\sigma)(1 - \lambda) \\
&= \lambda(1 - 2\Phi(m/\sigma)) + \Phi(m/\sigma) \eqsp.
\end{align}
Observing that $ \lambda - \nu(\msh) $ is an increasing function of $ \lambda $ and
$ \lambda - \nu(\msh) = 0 $ if $ \lambda = 1/2 $, we get that
$ \lambda \leq \nu(\msh) $ if $ \lambda \leq 1/2 $ and $ \lambda \geq \nu(\msh) $ if $ \lambda \geq 1/2 $.
Since $ \varphi \circ \Phi^{-1} $ reaches its maximum in $1/2$, it follows that for any $ \lambda \in (0,1) $
we have 
\begin{equation}
\varphi(\Phi^{-1}(\nu(\msh))) \geq \varphi(\Phi^{-1}(\lambda)) \eqsp,
\end{equation}
and thus 
\begin{align}
\mathrm{Lip}(g) &\geq (2 \uppi)^{1/2}\sigma\varphi(\Phi^{-1}(\lambda))\exp[\|m\|^2/(2\sigma^2)] \\
&\geq \sigma \exp[\|m\|^2/(2\sigma^2)-(\Phi^{-1}(\lambda))^2/2] \eqsp,
\end{align}
which concludes the proof.

\begin{figure}[!h]
\centering
\includegraphics[width=0.5\textwidth]{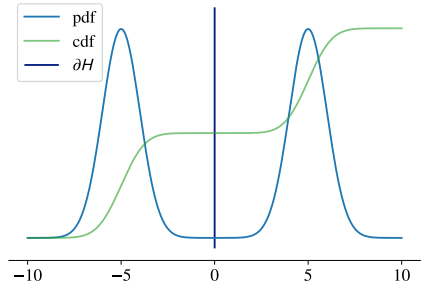} 
\caption{The hypersurface $ \partial \msh $ in the univariate case.}\label{fig:hyperplan}
\end{figure}

\subsection{Proof of \Cref{coro:mongemap}}
\label{sec:proof-crefc}

Since $ \nu $ admits a density $ p_{\nu} $ with respect to the Lesbegue measure,
it follows that $ \Phi_{\nu} $ is absolutely continuous and therefore
differentiable almost everywhere w.r.t. the Lebesgue measure using the Lebesgue
differentiation theorem.
%(alternatively, since $d=1$, we can also remark that
%$\Phi_\nu$ is non-decreasing and therefore differentiable almost everywhere
%w.r.t. the Lebesgue measure using the Lebesgue differentiation theorem for
%monotone mappings).  
Moreover, since $ \mathrm{supp}(\nu) = \rset $, it follows
that $ \Phi_{\nu}: \rset \to \ooint{0,1}$ is increasing and therefore is
bijective, and so $ T_{\mathrm{OT}} = \Phi^{-1}_\nu \circ \Phi $ is also
differentiable almost everywhere w.r.t. the Lebesgue measure and bijective, with inverse
$ T_{\mathrm{OT}}^{-1} = \Phi^{-1} \circ \Phi_\nu $,
using \cite[Remark 2.29]{OT}. Therefore, for any $x \in \rset$ we have 
\begin{align}
T'_{\mathrm{OT}}(x) &= \varphi(x)/p_{\nu}(T_{\mathrm{OT}}(x)) \\
&= \varphi(\Phi^{-1}(\Phi_\nu(T_{\mathrm{OT}}(x))) / p_{\nu}(T_{\mathrm{OT}}(x)) \eqsp .
\end{align}

Let $y \in \rset$. Using \Cref{thm:perfectapprox} with
$\msa = \ocint{-\infty, y}$ we get that for any $g: \ \rset^p \to \rset$, Lipschitz
such that $g_\# \mu_p = \nu$, 
\begin{equation}
\textstyle{
  \mathrm{Lip}(g) \geq \sup_{y \in \rset}  \varphi(\Phi^{-1}(\Phi_\nu(y))/p_{\nu}(y) \eqsp  ,
  }
\end{equation}
and so, since $ T_{\mathrm{OT}} $ is bijective
\begin{equation}
  \textstyle{
    \mathrm{Lip}(g) \geq \sup_{x \in \rset}   |T'_{\textup{OT}}(x)| \eqsp  ,
    }
  \end{equation}
which concludes the proof.

\subsection{Proof of \Cref{thm:lowerboundtv}}
\label{sec:proof-crefthm:l}

Let $ \msa \in \mcb{\rset^d} $ and let $ r > 0 $. We have on one hand 
\begin{align}
|g_{\#}\mu_p(\msa_r \setminus \msa)| &\leq |g_{\#}\mu_p(\msa_r \setminus \msa) - \nu(\msa_r \setminus \msa)| + |\nu(\msa_r \setminus \msa)| \\
  &\leq d_{\mathrm{TV}}(g_{\#}\mu_p,\nu) + \nu(\msa_r \setminus \msa) \eqsp .
\end{align}
Using \Cref{thm:perfectapprox}, we get
\begin{equation}
\textstyle{
|g_{\#}\mu_p(\msa_r \setminus \msa)| = g_{\#}\mu_p(\msa_r) - g_{\#}\mu_p(\msa) \geq \Phi\left(r/\mathrm{Lip}(g) + \Phi^{-1}(g_{\#}\mu_p(\msa))\right) - g_{\#}\mu_p(\msa) \eqsp ,
}
\end{equation}
and so 
\begin{equation}
d_{\mathrm{TV}}(g_{\#}\mu_p,\nu) \geq \alpha_g(\msa,r) - g_{\#}\mu_p(\msa) - \nu(\msa_r \setminus \msa) \eqsp ,
\end{equation}
where $ \alpha_g(\msa,r) = \Phi\left(r/\mathrm{Lip}(g) + \Phi^{-1}(g_{\#}\mu_p(\msa))\right) $. On the other hand, we have 
\begin{align}
|g_{\#}\mu_p(\msa_r)| &\leq |g_{\#}\mu_p(\msa_r) - \nu(\msa_r)| + |\nu(\msa_r)| \\
    &\leq d_{\mathrm{TV}}(g_{\#}\mu_p,\nu) + \nu(\msa_r \setminus \msa) + \nu(\msa)\eqsp .
\end{align}
Using \Cref{thm:perfectapprox}, we get
\begin{equation}
\textstyle{
|g_{\#}\mu_p(\msa_r)|  \geq \Phi\left(r/\mathrm{Lip}(g) + \Phi^{-1}(g_{\#}\mu_p(\msa))\right)  \eqsp ,}
\end{equation}
and so 
\begin{equation}
d_{\mathrm{TV}}(g_{\#}\mu_p,\nu) \geq \alpha_g(\msa,r) - \nu(\msa) - \nu(\msa_r \setminus \msa) \eqsp ,
\end{equation}
which concludes the proof.

\subsection{Proof of \Cref{coro:discomanifold}}
\label{sec:proof-crefcoro:discomanifold}
To prove \Cref{coro:discomanifold}, we will need the following lemma:

\begin{lemma}\label{lem:inclusion}
Let $ \msa \in \mcb{\rset^d} $ and $ r > 0 $. We denote $ \mathsf{B} = (\msa_r)^c $. Then
\begin{equation}
\mathsf{B}_r \subset \bar{\msa^c} \eqsp ,
\end{equation}
where $ \bar{\msa^c} $ denotes the closure of the complementary of $ \msa $. 
\end{lemma}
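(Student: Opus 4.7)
The plan is to argue by contradiction. Suppose $x \in \mathsf{B}_r$ but $x \notin \bar{\msa^c}$. From $x \in \mathsf{B}_r$ I get a point $b \in \mathsf{B} = (\msa_r)^c$ with $\|x-b\| \leq r$. From $x \notin \bar{\msa^c}$ I get that $x$ lies in the interior of $\msa$, so there exists $\varepsilon > 0$ with $\bar{B}(x,\varepsilon) \subset \msa$. The goal is to exhibit a point of $\msa$ at distance strictly less than $r$ from $b$, which will contradict $b \in (\msa_r)^c$.

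The construction is to perturb $x$ slightly toward $b$. If $b = x$, then $x \in \msa$ and $\|b - x\| = 0 \leq r$ already shows $b \in \msa_r$, contradiction. Otherwise, set
\begin{equation*}
x' = x + \varepsilon'(b-x)/\|b-x\|, \qquad \varepsilon' = \min(\varepsilon, \|b-x\|)/2 > 0.
\end{equation*}
Then $\|x' - x\| = \varepsilon' \leq \varepsilon$, so $x' \in \bar{B}(x,\varepsilon) \subset \msa$. On the other hand, since $\varepsilon' \leq \|b-x\|$, the point $x'$ lies on the segment $[x,b]$, and
\begin{equation*}
\|b - x'\| = \|b - x\| - \varepsilon' \leq r - \varepsilon' < r.
\end{equation*}
Thus $x' \in \msa$ witnesses $b \in \msa_r$, contradicting $b \in (\msa_r)^c$.

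The geometric content is essentially trivial: any point within distance $r$ of a ball exterior $\msa^c$ must itself be a closure point of $\msa^c$, because otherwise a tiny push toward the ``exterior witness'' $b$ stays inside $\msa$ while gaining strictly positive distance slack. The only real care needed is the degenerate case $b = x$ and the handling of the weak inequality $\|x - b\| \leq r$ versus the strict inequality needed to place a new point in the open ball $B(b,r)$; halving $\min(\varepsilon,\|b-x\|)$ secures both. No tool beyond elementary metric-space reasoning is required, so I do not anticipate a genuine obstacle.
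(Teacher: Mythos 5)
Your proof is correct and rests on the same elementary triangle-inequality argument as the paper's, which proceeds directly: from $\|b-a\|>r$ for all $a\in\msa$ together with $\|x-b\|\le r$ it deduces $\|x-a\|>0$ for every $a\in\msa$, i.e.\ $x\in\msa^c\subset\bar{\msa^c}$. Your perturbation of $x$ toward $b$ is actually unnecessary, since $\msa_r$ is defined with a non-strict inequality: if $x$ belonged to $\msa$ at all, then $x$ itself would witness $b\in\msa_r$, so both arguments in fact establish the stronger inclusion $\mathsf{B}_r\subset\msa^c$.
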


\begin{proof}
Let $ x \in \mathsf{B}_r $.
There exists $ b \in \mathsf{B}$ such that $ \|x - b\| \leq r $. 
Moreover, since $ \mathsf{B} = (\msa_r)^c $, it follows 
that for all $ a \in \msa $,
\begin{equation}
\|b - a \| > r \eqsp.
\end{equation}
Then
\begin{equation}
r < \| b - x \| + \| x - a \| \eqsp,
\end{equation}
and so, it follows that for all $ a \in \msa $,
\begin{equation}
\| x - a \| > 0 \eqsp.
\end{equation}
Thus  $ x \in \bar{\msa^c} $.
\end{proof}

Now we are ready to turn to the proof of \Cref{coro:discomanifold}.

\begin{proof}[Proof of \Cref{coro:discomanifold}]
We set $ r = d(\msm_1,\msm_2)/2 $ and $ \msa = (\msm_1)_r $. 
Using \Cref{thm:lowerboundtv}, we have
\begin{equation}
\textstyle{d_{\mathrm{TV}}(g_\#\mu_p,\nu) \geq \alpha_g(\msa,r) - \min\{g_{\#}\mu_p(\msa),\nu(\msa)\} - \nu(\msa_r \setminus \msa) \eqsp .}
\end{equation}
First we suppose that $ g_{\#}\mu_p(\msa) \geq \nu(\msa) $:
since $ \Phi $ is a non-decreasing function, it follows that
\begin{equation}
\textstyle{\alpha_g(\msa,r) = \Phi\left(r/\mathrm{Lip}(g) + \Phi^{-1}(g_{\#}\mu_p(\msa))\right) \geq \Phi\left(r/\mathrm{Lip}(g) + \Phi^{-1}(\nu(\msa))\right) \eqsp .} 
\end{equation}
Moreover $ \min\{g_{\#}\mu_p(\msa),\nu(\msa)\} = \nu(\msa) = \lambda = \Phi(\Phi^{-1}(\lambda))$ and so it follows
\begin{equation}
\textstyle{d_{\mathrm{TV}}(g_\#\mu_p,\nu)} \geq \textstyle{\Phi\left(d(\msm_1,\msm_2)/(2\mathrm{Lip}(g)) + \Phi^{-1}(\lambda)\right) - \Phi(\Phi^{-1}(\lambda))} 
\geq\textstyle{\int_{\Phi^{-1}(\lambda)}^{r/\mathrm{Lip}(g)+\Phi^{-1}(\lambda)}\varphi(t)dt}\eqsp,
\end{equation}
since $ \nu $ has no mass on $ \msa_r \setminus \msa $. 
Now we suppose that $ g_{\#}\mu_p(\msa) \leq \nu(\msa) $: we then set $ \mathsf{B} = \msa^c $. Since
 $ g_{\#}\mu_p(\msa) \leq \nu(\msa)$, 
we have $ g_{\#}\mu_p(\mathsf{B}) \geq \nu(\mathsf{B}) $. Applying \Cref{thm:lowerboundtv}, and the 
same reasoning as before we get 
\begin{align}
d_{\mathrm{TV}}(g_\#\mu_p,\nu) &\geq \alpha_g(\mathsf{B},r) - \min\{g_{\#}\mu_p(\mathsf{B}),\nu(\mathsf{B})\} - \nu(\mathsf{B}_r \setminus \mathsf{B}) \\
&\geq \Phi\left(d(\msm_1,\msm_2)/(2\mathrm{Lip}(g)) + \Phi^{-1}(1 -\lambda)\right) - \Phi(\Phi^{-1}((1 - \lambda)) - \nu(\mathsf{B}_r \setminus \mathsf{B}) \eqsp.
\end{align}
Using \Cref{lem:inclusion}, we get that $ \nu(\mathsf{B}_r \setminus \mathsf{B}) \leq \nu(\bar{\msa^c} \setminus (\msa_r)^c)$
but $ \nu(\bar{\msa^c} \setminus (\msa_r)^c) = 0 $ since $ \nu $ has no mass on $ \bar{\msa^c} \setminus (\msa_r)^c $ 
except on its boundary and so its follows that
\begin{align}
\textstyle{d_{\mathrm{TV}}(g_\#\mu_p,\nu)} &\geq \textstyle{\Phi\left(d(\msm_1,\msm_2)/(2\mathrm{Lip}(g)) + \Phi^{-1}(1 -\lambda)\right) - \Phi(\Phi^{-1}((1 - \lambda)) } \\
&\geq \textstyle{\Phi\left(d(\msm_1,\msm_2)/(2\mathrm{Lip}(g)) - \Phi^{-1}(\lambda)\right) - \Phi(-\Phi^{-1}(\lambda))} \\
&\geq \textstyle{\int_{-\Phi^{-1}(\lambda)}^{r/\mathrm{Lip}(g)-\Phi^{-1}(\lambda)}\varphi(t)dt } \eqsp,
\end{align}
since $ \Phi^{-1}(1-\lambda) = -\Phi^{-1}(\lambda) $. Since $ \lambda \geq 1/2 $, it follows that $ \Phi^{-1}(\lambda) \geq 0 $
and so 
\begin{equation}
\textstyle{\int_{-\Phi^{-1}(\lambda)}^{r/\mathrm{Lip}(g)-\Phi^{-1}(\lambda)}\varphi(t)dt} \geq  \textstyle{\int_{\Phi^{-1}(\lambda)}^{r/\mathrm{Lip}(g)+\Phi^{-1}(\lambda)}\varphi(t)dt} \eqsp ,
\end{equation}
which concludes the proof. 
\end{proof}

\subsection{Proof of \Cref{coro:gm2}}
\label{sec:proof-crefcoro:gm2}
As previously,
we prove the corollary when $ \nu = (1/2)[\mathrm{N}(-m,\sigma^2\Id_d) + \mathrm{N}(m,\sigma^2\Id_d)] $
since the problem can always be reduced to that case by translation and
setting $ m = (m_2 - m_1)/2 $.
Since the problem is invariant by rotation, we can assume without any loss of
generality that $ m = (\normLigne{m},0,\dots,0) $.  Let $ \msh $ be the half-space of
$ \rset^d$ defined by $\msh = \ocint{-\infty,0} \times \rset^{d-1}$ and we set $ r = \normLigne{m}/2\sigma $. 
First we suppose that $ g_{\#}\mu_p(\msh) \geq \nu(\msh) $: using Theorem \ref{thm:lowerboundtv}, we get that
\begin{equation}
\textstyle{d_{\mathrm{TV}}(g_\#\mu_p,\nu) \geq \alpha_g(\msh,r) - \min\{g_{\#}\mu_p(\msh),\nu(\msh)\} - \nu(\msh_r \setminus \msh) \eqsp , }
\end{equation}
with $\msh_r = \ocint{-\infty,\|m\|/2\sigma} \times \rset^{d-1}$. 
On one hand we have
that $\nu = \nu_1 \otimes \mathrm{N}(0,\sigma^2 \Id_{d-1})$, where
$ \nu_1 = (1/2)[\mathrm{N}(-\|m\|,\sigma^2)+\mathrm{N}(\|m\|,\sigma^2)]$ and
so $ \nu(\msh_r \setminus \msh) = \nu_1(\ccint{0,\|m\|/2\sigma}) $. On 
the other hand we have that $ \min\{g_{\#}\mu_p(\msh),\nu(\msh)\} = \nu(\msh) $ 
and $ g_{\#}\mu_p(\msh) \geq 1/2 $ since $ g_{\#}\mu_p(\msh) \geq \nu(\msh) $. Hence it follows that 
\begin{equation}
\textstyle{d_{\mathrm{TV}}(g_\#\mu_p,\nu) \geq \Phi(r/\mathrm{Lip}(g)) - 1/2 - \nu_1(\ccint{0,\|m\|/2\sigma}) \eqsp . }
\end{equation}
Now we suppose that $ g_{\#}\mu_p(\msh) \leq \nu(\msh) $: we then
set $ \msh_2 = \ocint{0,+\infty} \times \rset^{d-1}$. Since  
$ g_{\#}\mu_p(\msh) \leq 1/2 $, we get that $ g_{\#}\mu_p(\msh_2) \geq 1/2 $ 
and so $ g_{\#}\mu_p(\msh_2) \geq \nu(\msh_2) $. Hence we retrieve the previous case and so it follows that 
\begin{equation}
\textstyle{d_{\mathrm{TV}}(g_\#\mu_p,\nu) \geq \Phi(r/\mathrm{Lip}(g)) - 1/2 - \nu_1(\ccint{-\|m\|/2\sigma,0}) \eqsp . }
\end{equation}
Since $ \nu_1(\ccint{-\|m\|/2\sigma,0}) = \nu_1(\ccint{0,\|m\|/2\sigma}) $, we get in both cases
\begin{equation}
\textstyle{d_{\mathrm{TV}}(g_\#\mu_p,\nu) \geq \Phi(r/\mathrm{Lip}(g)) - 1/2 - \nu_1(\ccint{0,\|m\|/2\sigma}) \eqsp . }
\end{equation}
Now we derive the value of $ \nu_1(\ccint{0,\|m\|/2\sigma}) $:
\begin{align}
\nu_1(\ccint{0,\|m\|/2\sigma})  &= \textstyle{(1/2)\int_0^{m/2\sigma}(2\uppi\sigma^2)^{-1/2}\exp[-(x+m)^2/2\sigma^2]dx} \\
& \textstyle{\qquad + (1/2)\int_0^{m/2\sigma}(2\uppi\sigma^2)^{-1/2}\exp[-(x-m)^2/2\sigma^2]dx } \\
&= \textstyle{(1/2)\int_{-m/2\sigma}^{m/2\sigma}(2\uppi\sigma^2)^{-1/2}\exp[-(x+m)^2/2\sigma^2]dx } \\
&= \textstyle{(1/2)\int_{\|m\|(2\sigma -1)/2\sigma^2}^{\|m\|(2\sigma+1)/2\sigma^2}\varphi(x)dx } \eqsp,
\end{align}
which concludes the proof.

\subsection{Proof of \Cref{thm:lowerboundkl}}
\label{sec:proof-crefthm:l-1}

Let $ \msa \in \mcb{\rset^d} $, $ r > 0 $ and $ \zeta > 0 $. 
We set for any $ x \in \rset^d $
$ f(x) = \zeta\chi_{\msa_r \setminus \msa}(x) $, 
where $ \chi_\msa $ denotes the characteristic function of the set $ \msa $.
Since $ f $ is bounded, it follows that
\begin{align}
d_{KL}(g_{\#}\mu_p||\nu) &\geq \textstyle{\int_{\rset^d}f(x)dg_{\#}\mu_p(x) - \mathrm{log}\left(\int_{\rset^d} e^{f(x)} d\nu(x)\right)} \\
  &\geq \textstyle{\zeta g_{\#}\mu_p(\msa_r \setminus \msa) -  \mathrm{log}\left(1 + (e^{\zeta} - 1)\nu(\msa_r \setminus \msa)\right)} \eqsp .
\end{align}
Using \Cref{thm:perfectapprox}, we get
\begin{equation}
\textstyle{
g_{\#}\mu_p(\msa_r \setminus \msa) = g_{\#}\mu_p(\msa_r) - g_{\#}\mu_p(\msa) \geq \Phi\left(r/\mathrm{Lip}(g) + \Phi^{-1}(g_{\#}\mu_p(\msa))\right) - g_{\#}\mu_p(\msa) \eqsp .
}
\end{equation}
Thus we get
\begin{equation}
d_{\mathrm{KL}}(g_{\#}\mu_p||\nu) \geq \sup\ensembleLigne{J(\zeta,\msa,r)}{\zeta \in \reel,\msa \in \mcb{\reel^d}, r > 0}  \eqsp,
\end{equation}
where the functional $ J $ is defined by
\begin{align}
J(\zeta,\msa,r) &= \textstyle{ \zeta\left(\Phi\left(r/\mathrm{Lip}(g) + \Phi^{-1}(g_{\#}\mu_p(\msa))\right) - g_{\#}\mu_p(\msa)\right)} \\
& \quad \textstyle{- \mathrm{log}\left(1 + (e^{\zeta} - 1)\nu(\msa_r \setminus \msa)\right) \eqsp .}
\end{align}
Differentiating $ J $ with respect to $ \zeta $, we get that
  \begin{equation}
\nabla_\zeta J(\zeta,\msa,r) =  \beta_g(\msa,r) - (e^\zeta \nu(\msa_r \setminus \msa))/(1 + (e^\zeta - 1)\nu(\msa_r \setminus \msa)) \eqsp ,
\end{equation}
where $ \textstyle{\beta_g(\msa,r) = \Phi\left(r/\mathrm{Lip}(g) + \Phi^{-1}(g_{\#}\mu_p(\msa))\right) - g_{\#}\mu_p(\msa)} $ .
Applying the first order condition, we get that:
\begin{equation}
\zeta^* = \mathrm{log}[\beta_g(\msa,r)(1-\nu(\msa_r \setminus \msa))] - \mathrm{log}[\nu(\msa_r \setminus \msa)(1-\beta_g(\msa,r))] \eqsp .
\end{equation}
By re-injecting the value of $ \zeta^* $, we get 
\begin{align}
\textstyle{\zeta^*\beta_g(\msa,r) - \log\left(1 + (e^{\zeta^*} -1)\nu(\msa_r \setminus \msa)\right)} &= \textstyle{\beta_g(\msa,r)\log\left(\frac{\beta_g(\msa,r)\left(1-\nu(\msa_r \setminus \msa)\right)}{\nu(\msa_r \setminus \msa)\left(1-\beta_g(\msa,r)\right)}\right)} \\ 
 & \qquad \textstyle{- \log\left(\frac{1 - \nu(\msa_r \setminus \msa)}{1 - \beta_g(\msa,r)}\right)} \\
 &= \textstyle{\beta_g(\msa,r)\log\left(\frac{\beta_g(\msa,r)}{\nu(\msa_r\setminus \msa)}\right)} \\
 & \qquad \textstyle{
 + \left(1 - \beta_g(\msa,r)\right)\log\left(\frac{1 - \beta_g(\msa,r)}{1 - \nu(\msa_r \setminus \msa)}\right)} \eqsp ,
\end{align}
which concludes the proof.

\subsection{Proof of \Cref{coro:kl_gm}}
As previously, we prove the corollary when $ \nu = (1/2)[\mathrm{N}(-m,\sigma^2\Id_d) + \mathrm{N}(m,\sigma^2\Id_d)] $
since the problem can always be reduced to that case by translation and
setting $ m = (m_2 - m_1)/2 $.
Since the problem is invariant by rotation, we can assume without any loss of generality that $ m = (\normLigne{m},0,\dots,0) $. Furthermore, 
observe that the half-space $ \ensembleLigne{(m_2 - m_1)^T\left(x - (m_1+m_2)/2\right) \leq 0}{x \in \rset^d} $ becomes $ \ocint{-\infty,0} \times \rset^{d-1} $ in that case, and that the condition 
$ \lambda \in \ocint{0,1/2} $ is indeed non-restrictive since the problem is invariant by rotation. 
We set as before $ \msh = \ocint{-\infty,0} \times \rset^{d-1} $ and $ r = \normLigne{m}/2\sigma $.

Applying \Cref{thm:lowerboundkl}, we get
\begin{equation}
\textstyle{
d_{\mathrm{KL}}(g_{\#}\mu_p||\nu) \geq  \beta_g(\msh,r)\log\left(\frac{\beta_g(\msh,r)}{\nu(\msh_r\setminus \msh)}\right) + \left(1 - \beta_g(\msh,r)\right)\log\left(\frac{1 - \beta_g(\msh,r)}{1 - \nu(\msh_r \setminus \msh)}\right) \eqsp .
}
\end{equation}
On one hand, we get
\begin{align}
\beta_g(\msh,r) &= \Phi\left(r/\mathrm{Lip}(g) + \Phi^{-1}(g_{\#}\mu_p(\msh) \right) - g_{\#}\mu_p(\msh) \\
&= \Phi\left(r/\mathrm{Lip}(g) + \Phi^{-1}(g_{\#}\mu_p(\msh) \right) - \Phi\left(\Phi^{-1}(g_{\#}\mu_p(\msh)\right) \\
&= \textstyle{\int_{\Phi^{-1}(\lambda)}^{\normLigne{m}/ 2\sigma\mathrm{Lip}(g) + \Phi^{-1}(\lambda)} \varphi(t) \rmd t} \\ 
&= \textstyle{\int_{-\Phi^{-1}(1-\lambda)}^{\normLigne{m}/ 2\sigma\mathrm{Lip}(g) - \Phi^{-1}(1-\lambda)} \varphi(t) \rmd t} \eqsp,
\end{align}
noting $ \lambda = g_{\#}\mu_p(\msh) $. We replaced $ \Phi^{-1}(\lambda)  $ by $ - \Phi^{-1}(1-\lambda) $ in order to emphasize that $ \Phi^{-1}(\lambda) \leq 0 $ since $ \lambda \leq 1/2 $. Observe that if we supposed $ \lambda \geq 1/2 $, we would have $ \beta_g(\msh^c,r) \geq \beta_g(\msh,r) $ and so the bound that we would have found by reasoning on $ \msh $ would have been
sub-optimal.
On the other hand, observing as before
that $\nu = \nu_1 \otimes \mathrm{N}(0,\sigma^2 \Id_{d-1})$, where
$ \nu_1 = (1/2)[\mathrm{N}(-\|m\|,\sigma^2)+\mathrm{N}(\|m\|,\sigma^2)]$, we get that  
\begin{align}
\nu(\msh_r \setminus \msh) &= \nu_1(\ccint{0,\|m\|/2\sigma})\\
&= \textstyle{(1/2)\int_{\normLigne{m}(2\sigma - 1)/2\sigma^2}^{\normLigne{m}(2\sigma  + 1)/2\sigma^2}\varphi(t) \rmd t} \eqsp, 
\end{align} 
which concludes the proof.

\section{Additional theoretical result}
\label{sec:theoplus}
In this section we derive a generalization of \Cref{coro:discomanifold} when 
$ \nu $ is a distribution whose support is composed of more than two disconnected  manifolds.

\begin{corollary}
Let $ \nu $ be a measure on $ \rset^d $ on $ N $ disconnected manifolds $ (\msm_1,\dots,\msm_N) $, and let
$ g : \rset^p \rightarrow \reel^d $ be a Lipschitz function. Then,
\begin{equation}
\textstyle{d_{\mathrm{TV}}(g_\#\mu_p,\nu) \geq \max\limits_{I\subset \llbracket 1,N \rrbracket} \int_{\Phi^{-1}(\lambda)}^{d(\bigsqcup\limits_{i \in I}\msm_i,\bigsqcup\limits_{j \in \llbracket 1, N \rrbracket \setminus I}\msm_j)/ 2 \mathrm{Lip}(g) + \Phi^{-1}(\lambda)} \varphi(t)dt\eqsp , } 
\end{equation}
where for $ \msa, \msb \in \mcb{\rset^d} $,
$ d(\msa,\msb)  = \inf \ensembleLigne{\normLigne{a - b}}{a \in \msa, b \in \msb} $,
and $ \lambda = \textstyle{\nu\left(\bigsqcup\limits_{i \in I}\msm_i\right)} $  if $ \textstyle{\nu\left(\bigsqcup\limits_{i \in I}\msm_i\right)} \geq 1/2 $ and $ \lambda = 1 - \textstyle{\nu\left(\bigsqcup\limits_{i \in I}\msm_i\right)} $ otherwise.
\end{corollary}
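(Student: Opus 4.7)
The plan is to reduce the multi-manifold case to the two-manifold case already treated by \Cref{coro:discomanifold}. The key observation is that a distribution $\nu$ supported on $N$ disconnected manifolds $\msm_1,\dots,\msm_N$ can always be viewed as a distribution supported on two disconnected \emph{macro}-manifolds, simply by choosing a nontrivial partition of the index set $\llbracket 1,N \rrbracket$ into $I$ and its complement $\llbracket 1,N \rrbracket \setminus I$.

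More precisely, fix any subset $I \subset \llbracket 1,N \rrbracket$ with $I \notin \{\emptyset, \llbracket 1,N\rrbracket\}$, and define
\begin{equation}
\msm_I = \bigsqcup_{i \in I}\msm_i \eqsp , \qquad \msm_{I^c} = \bigsqcup_{j \in \llbracket 1, N \rrbracket \setminus I}\msm_j \eqsp .
\end{equation}
Since the $\msm_i$'s are pairwise disconnected, $\msm_I$ and $\msm_{I^c}$ are themselves two disconnected manifolds (in the generalized sense of a disjoint union of disconnected components), and their pairwise distance satisfies $d(\msm_I,\msm_{I^c}) = \min_{i \in I, j \notin I} d(\msm_i,\msm_j)$. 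We then set $\lambda_I = \max\{\nu(\msm_I),\nu(\msm_{I^c})\} \in \cointLigne{1/2, 1}$, so that $\lambda_I$ is the mass of the heavier component. By construction, $\nu$ is a bimodal distribution on two disconnected manifolds with weights $\lambda_I$ and $1-\lambda_I$.

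Applying \Cref{coro:discomanifold} directly to this bipartition yields
\begin{equation}
d_{\mathrm{TV}}(g_\#\mu_p,\nu) \geq \int_{\Phi^{-1}(\lambda_I)}^{d(\msm_I,\msm_{I^c})/(2\mathrm{Lip}(g)) + \Phi^{-1}(\lambda_I)} \varphi(t)\rmd t \eqsp .
\end{equation}
Since this inequality holds for every admissible $I$, we conclude by taking the maximum over $I$, which yields the stated bound. (The trivial cases $I = \emptyset$ and $I = \llbracket 1, N \rrbracket$ give a lower bound of $0$ and so may safely be included in the maximum.)

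The only subtle point is verifying that \Cref{coro:discomanifold} indeed applies verbatim to unions of disconnected manifolds; inspecting the proof in \Cref{sec:proof-crefcoro:discomanifold}, one sees that only two properties of $\msm_1,\msm_2$ were used, namely (i) that they are disjoint and separated by a positive distance, and (ii) that $\nu$ places no mass in the open tubular neighborhood $\msa_r \setminus \msa$ around one of them for $r < d(\msm_1,\msm_2)/2$. Both properties transfer to $\msm_I$ and $\msm_{I^c}$ because $r < d(\msm_I,\msm_{I^c})/2$ ensures the $r$-extension of $\msm_I$ still avoids $\msm_{I^c}$. Hence no new analytic difficulty arises, and optimizing over partitions is the only new ingredient.
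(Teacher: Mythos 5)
Your proposal is correct and follows essentially the same route as the paper: reduce to the bimodal case by grouping the manifolds according to the bipartition $I$ versus $\llbracket 1,N\rrbracket\setminus I$, apply \Cref{coro:discomanifold} with $\lambda$ taken as the mass of the heavier group (interchanging the two groups when $\nu(\bigsqcup_{i\in I}\msm_i)<1/2$), and then maximize over $I$. Your extra check that the two-manifold corollary applies verbatim to disjoint unions is a welcome refinement the paper only states implicitly.
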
 

\begin{proof}
Let $ I \subset \llbracket 1, N \rrbracket$.  First, we suppose that $ \textstyle{\nu\left(\bigsqcup\limits_{i \in I}\msm_i\right)} \geq 1/2 $. Since $ \nu $ can be seen as a bi-modal distribution on the 
two disconnected sets $ \bigsqcup\limits_{i \in I}\msm_i $ and $ \bigsqcup\limits_{j \in \llbracket 1, N \rrbracket \setminus I}\msm_j $, we can apply \Cref{coro:discomanifold}. Thus we get 
\begin{equation}
\textstyle{d_{\mathrm{TV}}(g_\#\mu_p,\nu) \geq \int_{\Phi^{-1}(\lambda)}^{d(\bigsqcup\limits_{i \in I}\msm_i,\bigsqcup\limits_{j \in \llbracket 1, N \rrbracket \setminus I}\msm_j)/ 2 \mathrm{Lip}(g) + \Phi^{-1}(\lambda)} \varphi(t)dt\eqsp . } 
\end{equation}\label{eq:discomanifoldn}
If $ \textstyle{\nu\left(\bigsqcup\limits_{i \in I}\msm_i\right)} \leq 1/2 $, we
can still apply \Cref{coro:discomanifold} by interchanging the roles of $ \bigsqcup\limits_{i \in I}\msm_i $ and $ \bigsqcup\limits_{j \in \llbracket 1, N \rrbracket \setminus I}\msm_j $, thus we get also Inequality \eqref{eq:discomanifoldn} in that case, which concludes the proof.
\end{proof}

\section{Experimental details}
\label{sec:expedetails}
We detail our experiments in dimension $ 1 $ in Appendix \ref{sec:expedetuni}. In Appendix
\ref{sec:expedetgm}, we give details on our experiment on the synthetic mixture of two Gaussians 
derived from MNIST. Finally, we detail the experiment on the subset of all $ 3 $ and $ 7 $
of MNIST in Appendix \ref{sec:expedetmnist}. We trained our models using 
$ 2 $ NVIDIA Titan Xp from the proprietary server of our institution with an estimated 
total training time of approximately $ 175 $ GPU hours. Code is available  \href{https://github.com/AntoineSalmona/Push-forward-Generative-Models}{here}
\footnote{\href{https://github.com/AntoineSalmona/Push-forward-Generative-Models}{https://github.com/AntoineSalmona/Push-forward-Generative-Models}}.

\subsection{Univariate case}
\label{sec:expedetuni}
In the univariate case we use a simple $ 3$-layer Multi Layer Perceptron (MLP) 
of shape $ (1,128,256,1) $ as decoder for the VAE and as generator for the GAN. The network 
has a total of $ 33537 $ learnable parameters. The score network uses also a
a $ 3$-layer MLP block, this time of shape $ (1,96,196,1) $, in which at
each layer is injected the noise information transformed by a  
positional encoding \cite[]{vaswani2017attention} and then by another MLP block
size $ (16,32,64) $, see \Cref{fig:archiscore}. The score network 
has a total of $ 34665 $ learnable parameters.
In all three models, we use LeakyReLU \cite[]{maas2013rectifier} as non-linearity
with a negative slope of $ 0.2 $. The three models 
are trained during $ 400 $ epochs with a batch size of $ 1000 $ using ADAM \cite[]{kingma2014adam} with 
a momentum of $ 0.9 $ and a learning rate of $ 10^{-4} $. In the following,
we give more specific details for each model.

\begin{figure}[h!]
\centering
\includegraphics[width=0.9\textwidth]{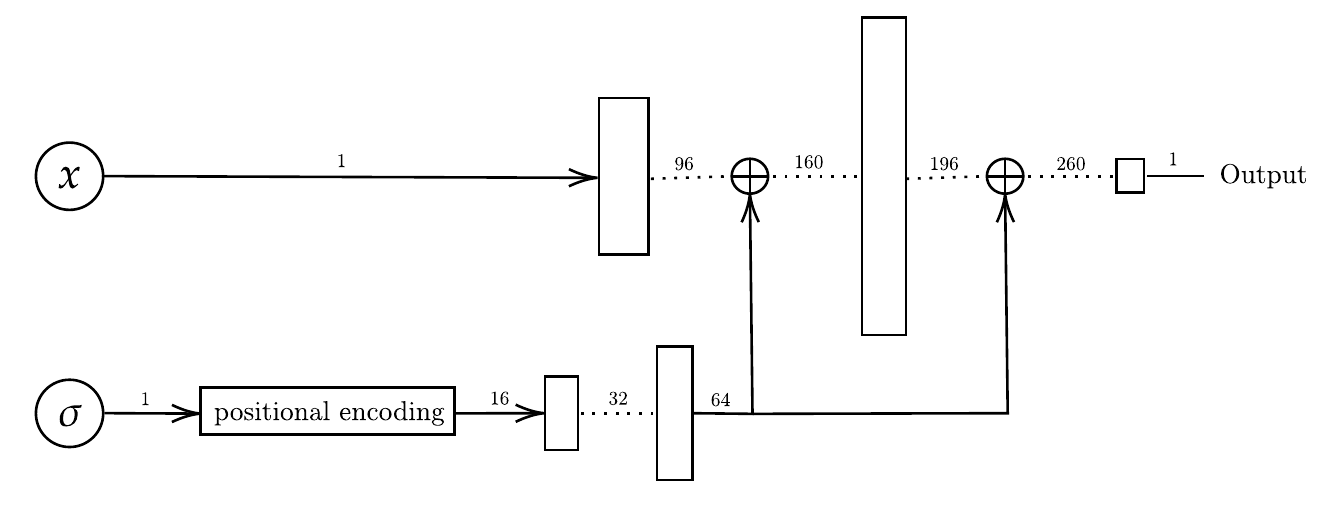}
    \caption{Architecture of the score network used for the univariate
    experiments. The "positional encoding" block applies the sine transform described
in \cite{vaswani2017attention}. $ \bigoplus $ corresponds to concatenation, the vertical blocks 
correspond to the fully connected layers and the numbers over the arrows correspond to the size of the vectors.} 
    \label{fig:archiscore}
\end{figure}

\paragraph{Variational autoencoder.} We use the vanilla VAE model 
as described in \cite{kingma2013auto}. In
the following, we denote $ \theta $ and $ \phi $ the respective parameters of the decoder and the encoder. 
The decoder $ f_{\phi} $ is composed of an MLP block of size $ (1,256,128) $ followed by 
two parallel fully connected layers of shape $ (128,1) $ which 
gives two outputs $  f_{1\phi}(x) $ and $ f_{2\phi}(x) $. Then 
the input $ z $ of the decoder $ g_{\theta} $ is obtained by the so-called
reparametrization trick, which consists in sampling
$ z \sim q_{\phi}^{z|x} $, where $  q_{\phi}^{z|x}  = \mathrm{N}(f_{1\phi}(x),\exp[f_{2\phi}(x)]) $.
During training, the model minimizes the following loss function:
\begin{equation}
\textstyle{
\mathcal{L}_{\mathrm{VAE}}(\theta,\phi) = \mathbb{E}_{x \sim \nu}[\mathrm{ELBO}_{\theta,\phi}(x,q_{\phi}^{z|x},p_{\theta}^{x|z})] \eqsp ,}
\end{equation}
 where $ p_{\theta}^{x|z} = \mathrm{N}(g_{\theta}(z),c^2\Id_d) $ and $ \mathrm{ELBO} $ is the Evidence Lower Bound \cite[]{blei2017variational}, defined as follows:
\begin{equation}
\textstyle{
\mathrm{ELBO}_{\theta,\phi}(x,q_{\phi}^{z|x},p_{\theta}^{x|z}) = \mathbb{E}_{z \sim q_{\phi}^{z|x}}[\log(p_{\theta}(x|z))] - d_{\mathrm{KL}}(q_{\phi}^{z|x}||\mathrm{N}(0,\Id_p)) \eqsp.}
\end{equation}
The standard deviation $ c $ in $ p_{\theta}^{x|z} $ is an hyperparameter of the model. For our experiments, we observed
that $ c = 0.1 $ gave good results.

\paragraph{Generative adversarial network.} As for the VAE, we
use the vanilla GAN model as described in \cite{goodfellow2014generative}. 
The discriminator is $ 4 $-layer MLP of shape $ (1,512,256,128,1) $ with spectral
normalization \cite[]{miyato2018spectral} in order 
to reduce as much as possible mode collapse.
We train the model using the vanilla adversarial loss, that the 
discriminator $ d_\phi $ tries to maximize and that the generator $ g_{\theta} $ tries to minimize
\begin{equation}
\mathcal{L}_{\mathrm{GAN}}(\theta,\phi) = \mathbb{E}_{x\sim \nu}[\log(d_\phi(x)] + \mathbb{E}_{z \sim \mathrm{N}(0,\Id_p)}[\log(1 - d_\phi(g_\theta(z)))] \eqsp. 
\end{equation}
We also tried with the hinge version of the adversarial loss, as  proposed in
\cite{lim2017geometric} and \cite{tran2017deep} and we obtained similar results.

\paragraph{Score-based generative modeling.} Our diffusion model
is similar to the model introduced by \cite{song2019generative}. 
The neural network $ s_{\theta} $ learns to approximate, for a
given $ x $ and a given $ \sigma $,
the score $ \nabla_x p_{\nu}(x,\sigma) $  of the data distribution convoluted
with a Gaussian distribution of standard deviation $ \sigma $. This is done by first defining 
a geometrical progression $ \{\sigma_i\}^L_{i = 1} $
where $ L = 10 $ and where the ratio is chosen such that $ \sigma_L \approx 0.01 $, and then
minimizing the Fischer divergence \cite[]{vincent2011connection}
\begin{equation}
\textstyle{
\mathcal{L}_{\mathrm{SGM}}(\theta) = \mathbb{E}_{\sigma \sim 1/L\sum \delta_{\sigma_i}} \left[\sigma^2
\mathbb{E}_{x \sim \nu}\left[\mathbb{E}_{y \sim \mathrm{N}(x,\sigma^2\Id_d)}  \left [ \left\| s_{\theta}(y,\sigma)  + (y - x)/\sigma^2 \right\|^2  \right]\right]\right]} \eqsp . 
\end{equation}
Then, in order to generate data, we use an annealed Langevin dynamic scheme 
as defined in \cite{song2019generative}. In the Langevin dynamic, 
we set the step size to $ 2 \times 10^{-5} $ and the number of step for each 
value of $ \sigma $ to $ 100 $  as in \cite{song2019generative}. 

\paragraph{Influence of generator depth.}
For this experiment, we increase the number of layers of the VAE decoder and the GAN generator
from $ 2 $ to $ 6 $. At each new layer, we double the number 
of neurons at the previous layer. For instance, the 
generative network with $ 2 $ layers is thus an MLP of shape $ (1,128,1) $
and the one with $ 6 $ layers is an MLP of shape $ (1,128,256,512,1024,2048,1) $. 
Specifically to the GAN model, we also increase the number of layers in the discriminator
in order to keep the dynamic between this latter and the generator balanced. As in the $ 3$-layers
case, the discriminator is one layer deeper than the generator. 
For instance, the discriminator associated to the generator with $ 2 $ layers 
is an MLP of shape $ (1,256,128,1) $.

\paragraph{Influence of generator architecture.}
For this experiment, we use a feed-forward MLP of 
shape $ (1,256,256,256,1) $ as backbone. Then we add two additive pre-activation skip-connections of type "resnet"  between the first and the second hidden layers and between the second and the third hidden layers. Finally, we replace the two previous additive skip-connections of type "resnet" by concatenation pre-activation skip-connections of type "densenet".

\subsection{Synthetic mixture of Gaussians on MNIST}
\label{sec:expedetgm}

\paragraph{Models details.}
We adapt our three models to MNIST, changing mainly 
the networks architectures and making small modifications 
that we describe in what follows. We base the architecture of the GAN and the VAE 
on DCGAN \cite[]{radford2015unsupervised},
using the generator as decoder and the discriminator as encoder for our VAE. This 
is done by doubling the last layer of the discriminator in order 
that the VAE encoder has two outputs as in the univariate case. For the 
GAN model, we replaced the convolutional discriminator by 
a simple MLP of shape $ (784,512,256,128,1) $ because the dynamic between the generator 
and the discriminator seemed unbalanced otherwise. 
We also update our GAN model
using some features of SAGAN \cite[]{zhang2019self}: 
applying spectral normalization 
on the discriminator and using the unconditional hinge version of the adversarial loss 
\cite[]{lim2017geometric,tran2017deep}:
\begin{align}
\mathcal{L}^{d_{\phi}}_{\mathrm{GAN}} &= - \mathbb{E}_{x \sim \nu}[\min\{0,-1+d_{\phi}(x)\}] - \mathbb{E}_{z \sim \mathrm{N}(0,\Id_p)}  [\min\{0,-1-d_{\phi} (g_{\theta(z)}\}]  \eqsp, \\
\mathcal{L}^{g_{\theta}}_{\mathrm{GAN}} &= - \mathbb{E}_{z \sim \mathrm{N}(0,\Id_p)}[d_{\phi}(g_{\theta}(z))] \eqsp.
\end{align}
Such loss function is equivalent to minimize the Kullback-Leibler divergence between 
the generated distribution and the data distribution. 
%During training, we observed 
%that the convolutional discriminator converged too quickly to let the 
%generator become good at generation so we replaced it by the previous MLP discriminator 
%of the univariate case and we got better results.
The VAE decoder and the GAN generator have $ 1713088 $ learnable parameters. 
For the score network architecture, we use the vanilla U-Net 
architecture \cite[]{ronneberger2015u} in which we double the number of channels at each layer, 
we add group normalization \cite[]{wu2018group} after each 
convolution and we replace the ReLU non-linearies by
SiLU \cite[]{elfwing2018sigmoid}. As in the univariate case, 
we use positional encoding \cite[]{vaswani2017attention} followed by
a MLP block of shape $ (1,16,32) $ to incorporate the noise information
at each layer. The score network has $ 1607392 $ learnable parameters. 
For inference, we use the same Langevin dynamic scheme as above
with the same hyperparameters as in the univariate case. The three models are trained during $ 100 $ epochs with
a batch size of $ 128 $ using ADAM with 
a momentum of $ 0.9 $ and a learning rate of $ 2\times10^{-4} $. 

\paragraph{Additional details.}
The histograms of projection on the line passing through the mean of each Gaussians are
obtained using $ 20000 $ generated samples. To assign a color to each bin of the histograms, we
train a simple MLP of shape $(784,1024,50,10)$ as classifier on MNIST. The classifier is 
trained during $ 10 $ epochs using again ADAM with a momentum of $ 0.9 $ and a learning rate of $ 2\times10^{-4} $ and 
reaches an accuracy of $ 0.98 $ on the test set.

\subsection{Subset of MNIST} 
\label{sec:expedetmnist}

\paragraph{Models details.} Since the dataset is more complex than before, we use bigger models. For the score
network, we use the architecture defined in \cite{ho2020denoising}, in which
we set the number of channels to $ 64 $ instead to $ 128 $ and we remove 
the self attention layers \cite[]{wang2018non} for computational resource purposes. The score
network has $ 6072065 $ learnable parameters. Again, we use an annealed Langevin dynamic scheme for inference with 
the same hyperparameters as before. For the VAE and the GAN, we use the same architecture as before, using 
this time the convolutional discriminator of DCGAN, and quadrupling
the number of channels at each layer. This is mainly done in order to 
scale the generator/decoder to the score network. Hence the VAE decoder/GAN 
generator has $ 7151104 $ learnable parameters. We train all three models
during $ 600 $ epochs with a batch size of $128 $ using ADAM with  a momentum of $ 0.9 $ and a learning rate of $ 2\times10^{-4} $.

\paragraph{Additional details.} 
We use the deep Wasserstein embedding proposed by \cite{courty2018learning} in order to visualize
histograms of projection in the Wasserstein space. We use the exact same network 
architecture and the same training procedure that in \cite{courty2018learning}: first, one million 
pairs of digits of MNIST are chosen randomly, in which $ 700000 $ are kept for the training set, $ 200000 $
for the test set, and $ 100000 $ for the validation set. We normalize each image in order to consider it 
as a two-dimensional distribution and we compute the $ 1 $-Wasserstein distance
for each pair. Then, we train an autoencoder in a supervised manner in a way
that the images at output of the autoencoder are close to the images in input, and that 
the euclidean distance between two vectors in the latent space is close to the 
$ 1 $-Wassertein distance between the two corresponding images of MNIST. As in \cite{courty2018learning},
the latent Wasserstein space is of dimension $ 50 $ and the autoencoder is trained 
during $ 100 $ epochs with a batch size of $ 100 $ and with an early stopping criterion. Again, 
we use ADAM with a momentum of $ 0.9 $ and a learning rate of $ 10^{-3} $. We use
the same classifier as before to assign color to each bin of the histograms. Finally,
the histograms of projection on the line passing through the deep Wasserstein barycenters of all $ 3 $ and $ 7 $ are
obtained using $ 20000 $ generated samples.

\section{Additional experimental results}
\label{sec:expeplus}
In the following, we provide additional experimental results. First, we compare estimates of the bounds of \Cref{thm:lowerboundtv}, \Cref{coro:gm2}, and \Cref{thm:lowerboundkl} to estimates of the total variation distance 
and the Kullback-Leibler divergence in the univariate case. Then 
we study the possible correlation between the size of the score network 
and the tendency of the score-based model to generate unbalanced modes. 
Finally, we provide additional visualizations of histograms of 
generated distributions for the univariate case and generated samples
for the experiments on MNIST.

\subsection{Bounds on TV distance and KL divergence in the univariate case}
\begin{figure}[h!]
  \centering
    \begin{tabular}{cc}
    \hspace{1.4em} \vspace{-0.2 em} {\small $d_{\mathrm{TV}}(g_{\#}\mu_p,\nu) $} & \hspace{-1.5em} \vspace{-0.2 em} \hspace{2em} {\small \quad $ d_{\mathrm{KL}}(g_{\#}\mu_p||\nu) $} \\
    \hspace{-1.8em} 
     \includegraphics[width=0.46\textwidth]{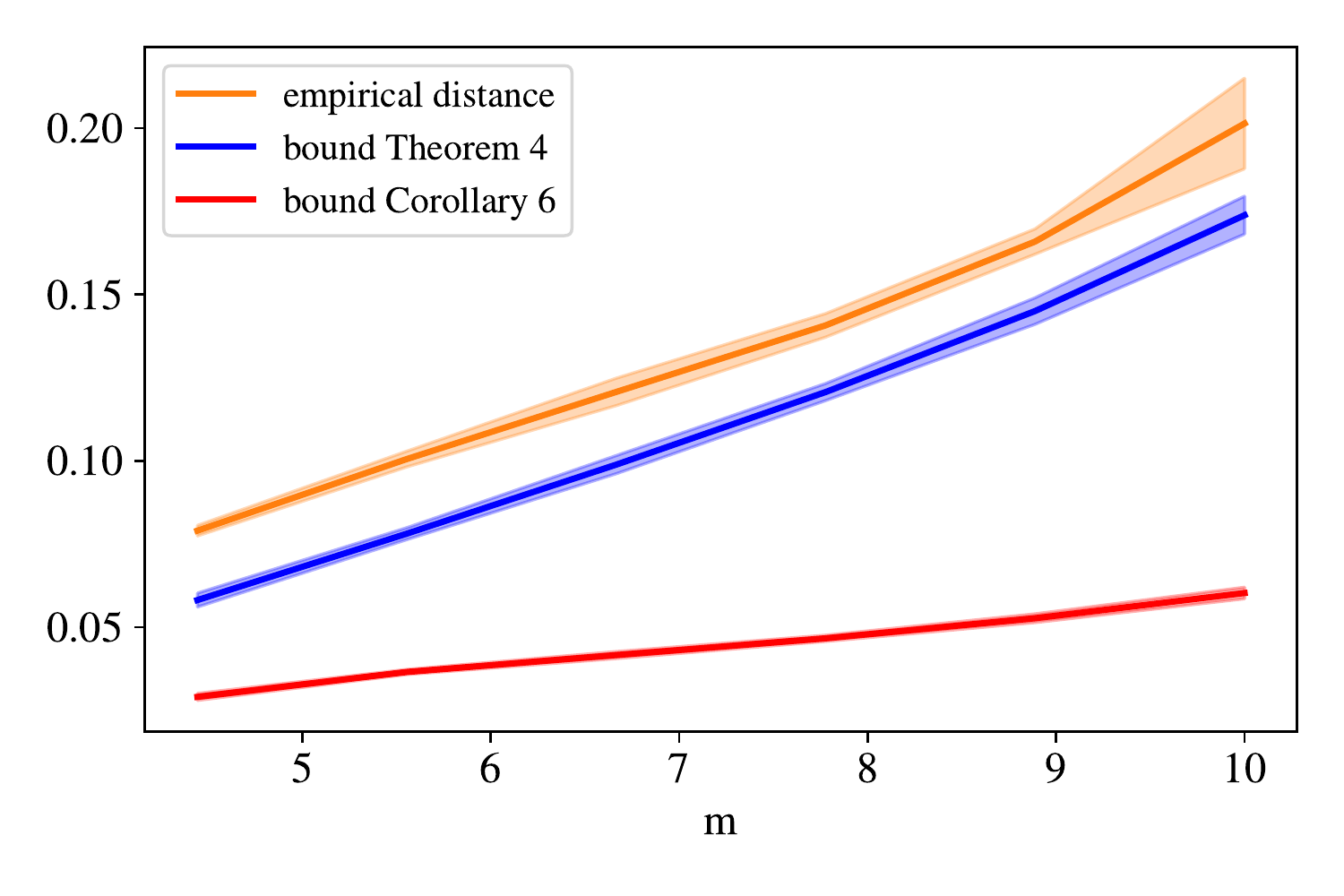}   \hspace{-1.5em} \vspace{-0.5 em}
    & 
    \includegraphics[width=0.46\textwidth]{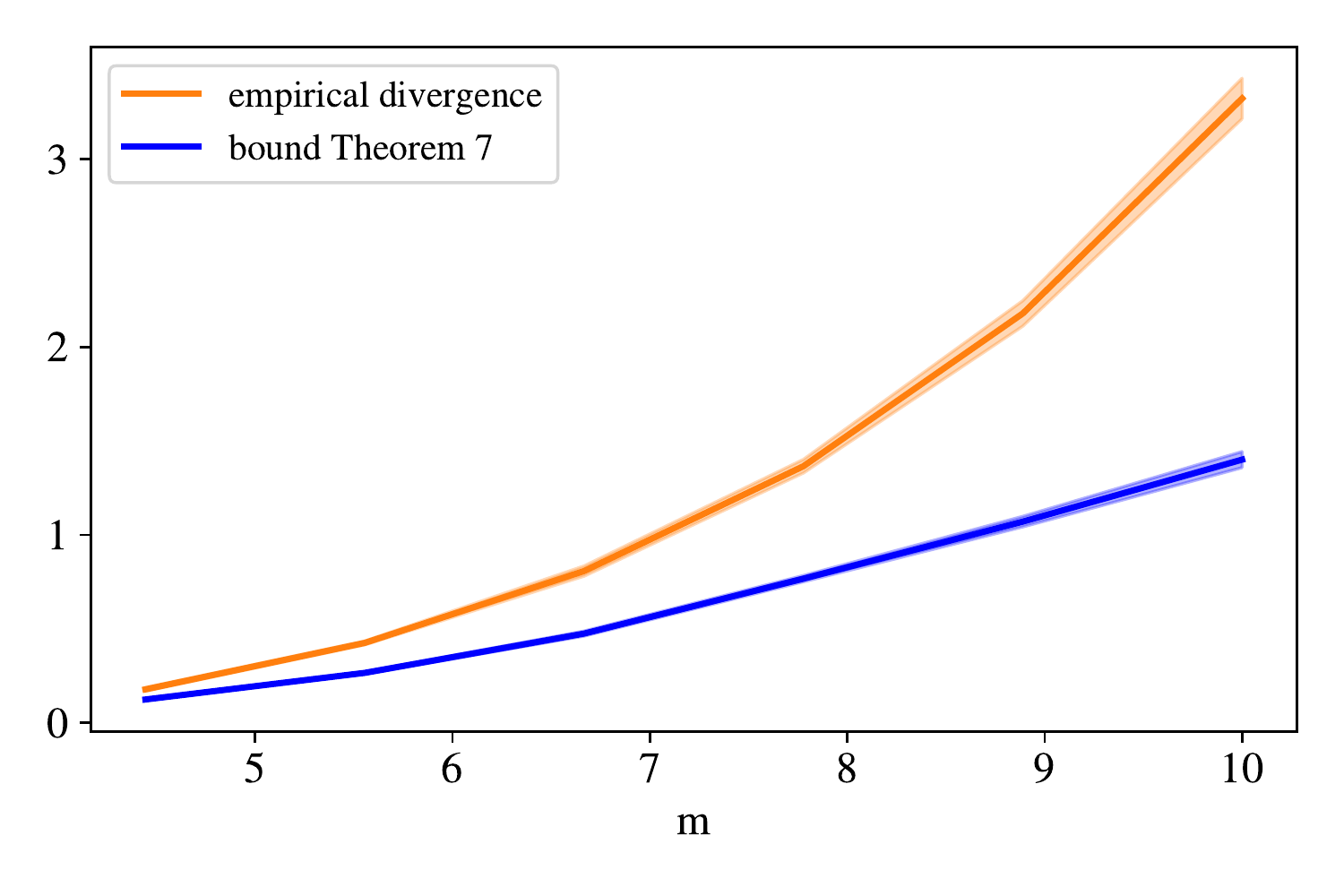}    \vspace{-0.5 em}
\end{tabular}
  \caption{total variation distance (left) and Kullback-Leibler divergence (right) for the
  VAE (in orange) and estimates of the respective lower bounds from \Cref{thm:lowerboundtv}
  and \Cref{thm:lowerboundkl} in blue. The lower bound of \Cref{coro:gm2} is also plotted
  in red for the total variation. The experiments are 
  averaged over $ 10 $ runs and the colored bands correspond to +/- the standard deviation.}\label{fig:expe1_dim1_dist}
 \end{figure}
 
In this experiment, we compare estimates of the bounds of \Cref{thm:lowerboundtv}, \Cref{coro:gm2}, and 
\Cref{thm:lowerboundkl} to estimates of the total variation distance 
and the Kullback-Leibler divergence. We only provide results for the VAE since
the bounds are not interesting for the GAN since they are consequences of interpolation between modes due to a 
small Lipschitz constant of the generative network. Yet this latter in the GAN case achieves a
large Lipschitz constant so does not interpolate significantly. To estimate
empirically the total variation distance and the Kullback-Leibler divergence,
we used their respective analytical formula
\begin{align}
\textstyle{
d_{\mathrm{TV}}(g_{\#}\mu_p,\nu)} &= (1/2)\textstyle{\int_{\rset}|p_{g_{\#}\mu_p}(x)-p_{\nu}(x)|dx \eqsp,} \\
\textstyle{d_{\mathrm{KL}}(g_{\#}\mu_p||\nu)} &= 
\textstyle{\int_{\rset}p_{g_{\#}\mu_p}(x)\log\left(p_{g_{\#}\mu_p}(x)/p_{\nu}(x)\right)dx \eqsp,}
\end{align}
where $ p_{g_{\#}\mu_p} $ and $ p_{\nu} $ are the respective densities of $ g_{\#}\mu_p $ and $ \nu $.
In order to estimate 
the lower bounds of \Cref{thm:lowerboundtv} and \Cref{thm:lowerboundkl}, we
set $ \msa $ of the form $ \ocint{-\infty,-r/2} $ and we perform a grid search on $ r $. 
In \Cref{fig:expe1_dim1_dist}, we can observe 
that the  estimates of the bounds provided by \Cref{thm:lowerboundtv} and \Cref{thm:lowerboundkl}
are not tights. This is possibly because we selected a sub-optimal $ \msa $ but it
most likely follows from the fact that the bounds don't take into 
account that $ g_{\#}\mu_p $ has automatically less mass on the modes than $ \nu $ 
since a significant amount of its total mass is between them. 
One can also observe that the explicit lower bound of \Cref{coro:gm2} 
is much smaller than the bound of \Cref{thm:lowerboundtv}. 
This can be explained by the facts that $ \|m\|/2\sigma $ is probably a sub-optimal 
choice of $r$ and that the bound of \Cref{coro:gm2} minimizes the interpolation between modes
over all the mappings with Lipschitz constant $ \text{Lip}(g) $, regardless 
whether these mappings approximate well $ \nu $ on its modes or not. Since
there is less interpolation if the modes are unbalanced (see \Cref{sec:lower-bound-distance}), 
it is likely that the mappings $ g $ such that $ g_{\#}\mu_p $ is unbalanced 
are affecting the value of this bound in a bad way.

\subsection{Additional examples}

\subsubsection{Univariate histograms}
We provide additional visualizations of histograms of generated data with the three models
for various values of $ m $.

\begin{figure}[h!]
  \centering
  \begin{tabular}{cccc}
    \includegraphics[width=0.22\textwidth]{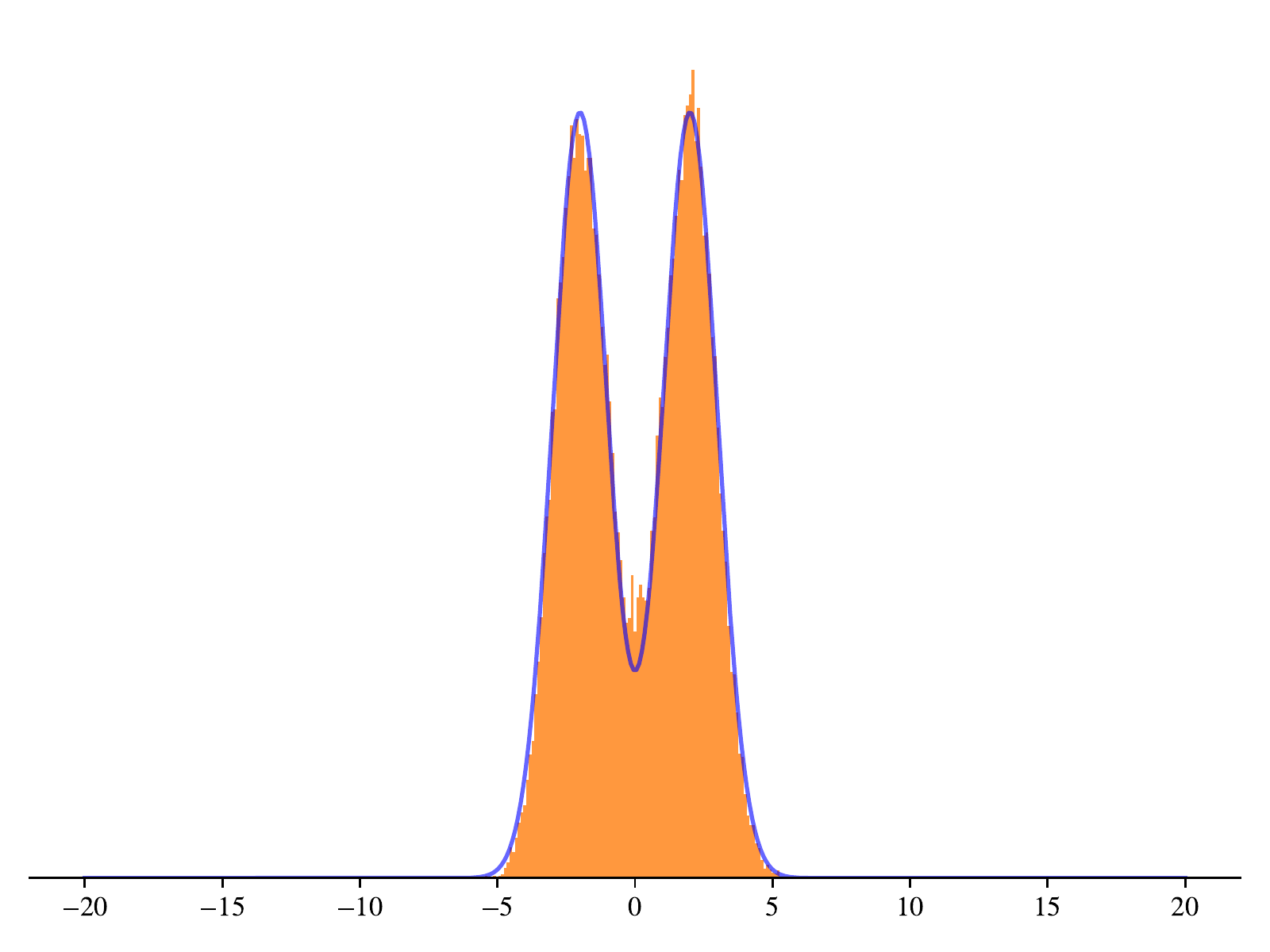} &
    \includegraphics[width=0.22\textwidth]{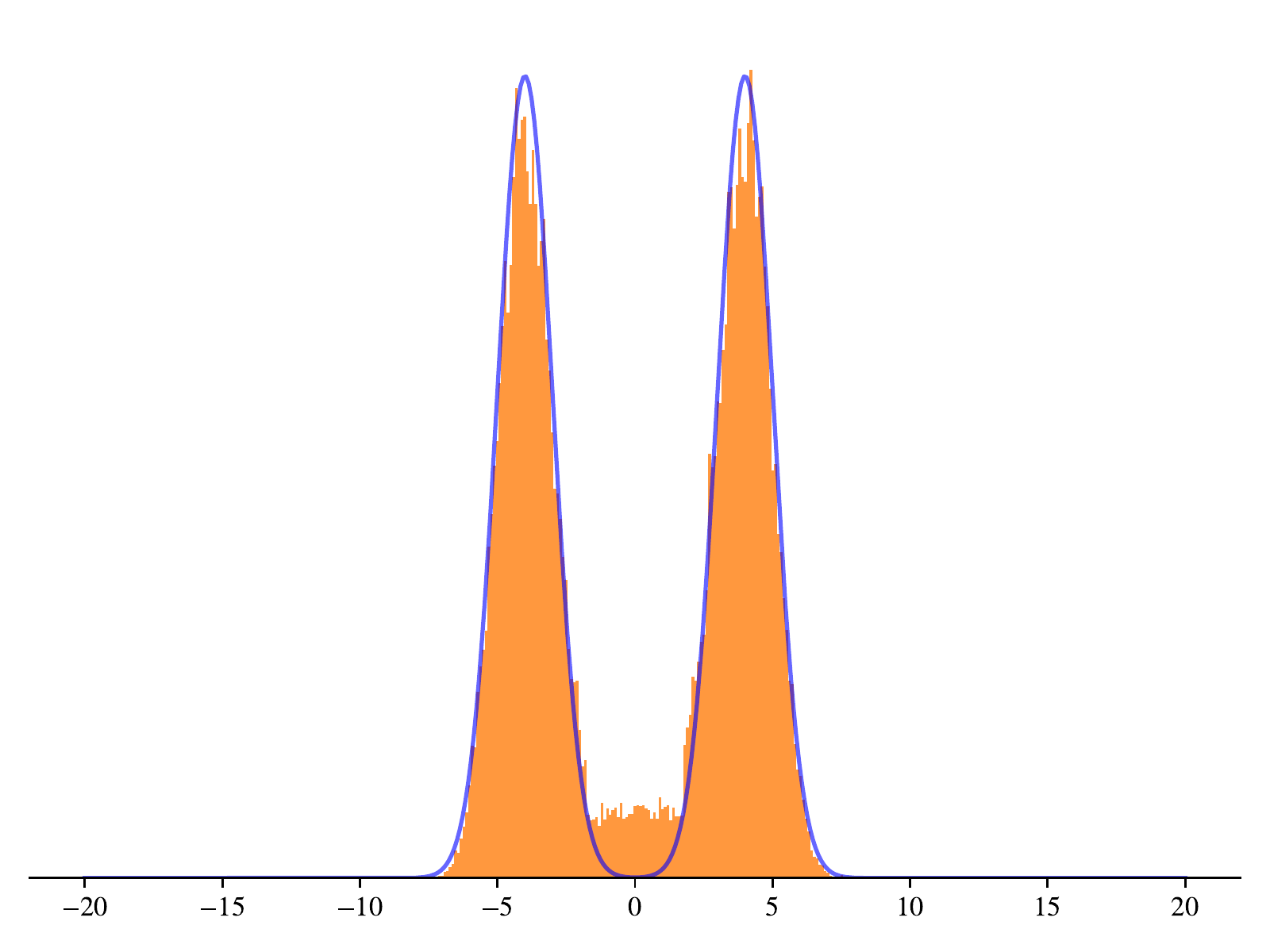} &
    \includegraphics[width=0.22\textwidth]{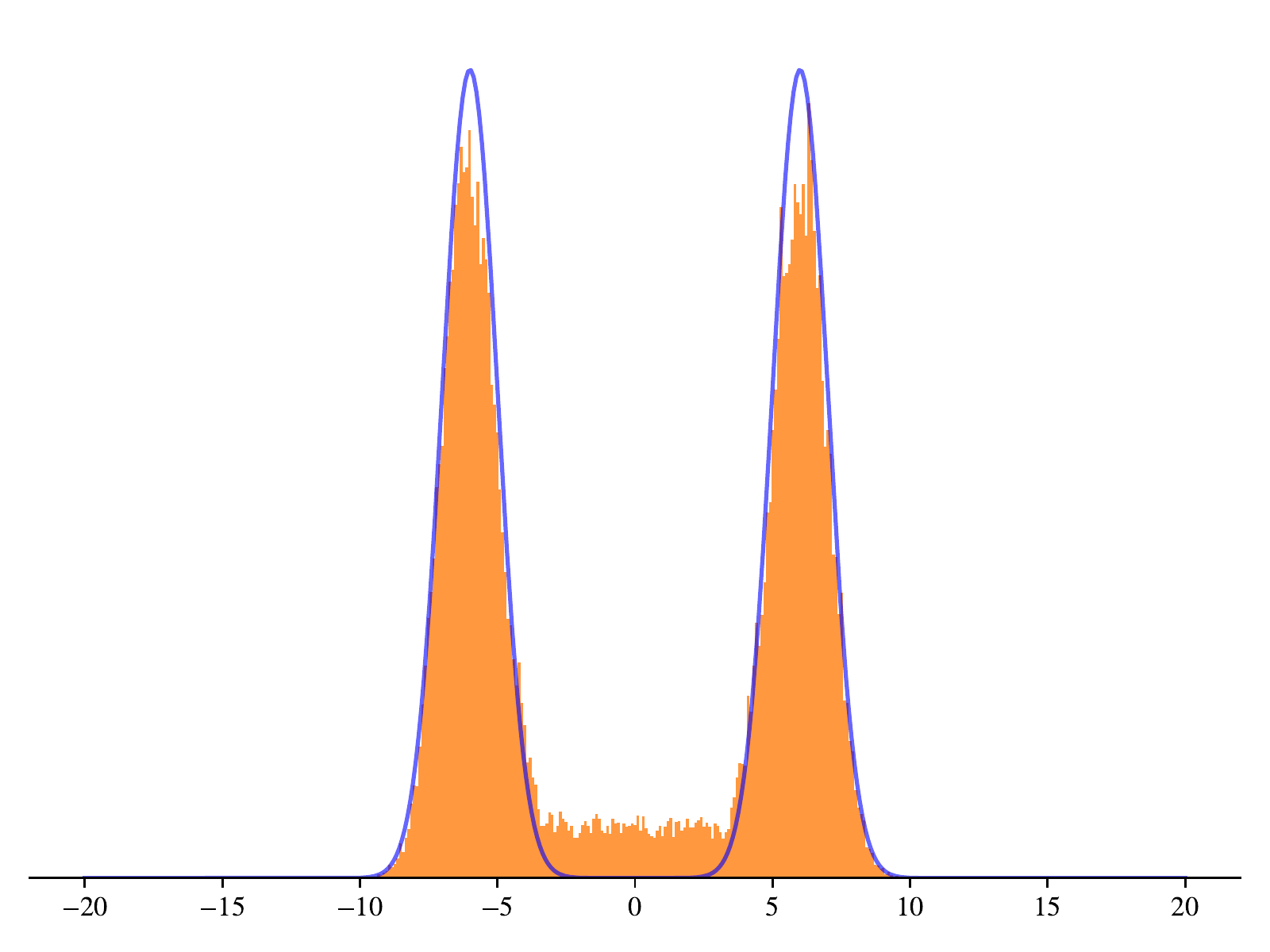} &
    \includegraphics[width=0.22\textwidth]{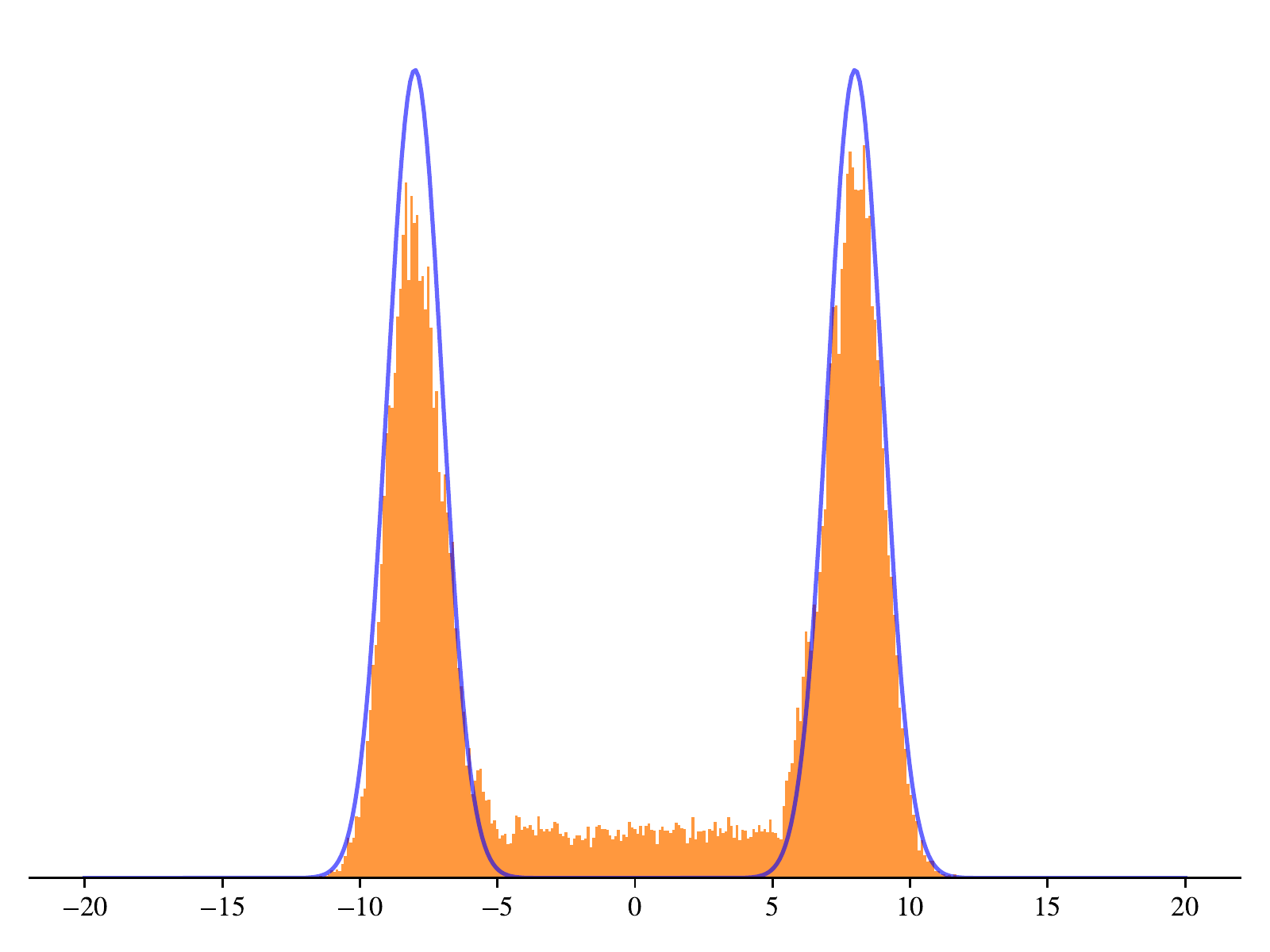}   \\  
    \includegraphics[width=0.22\textwidth]{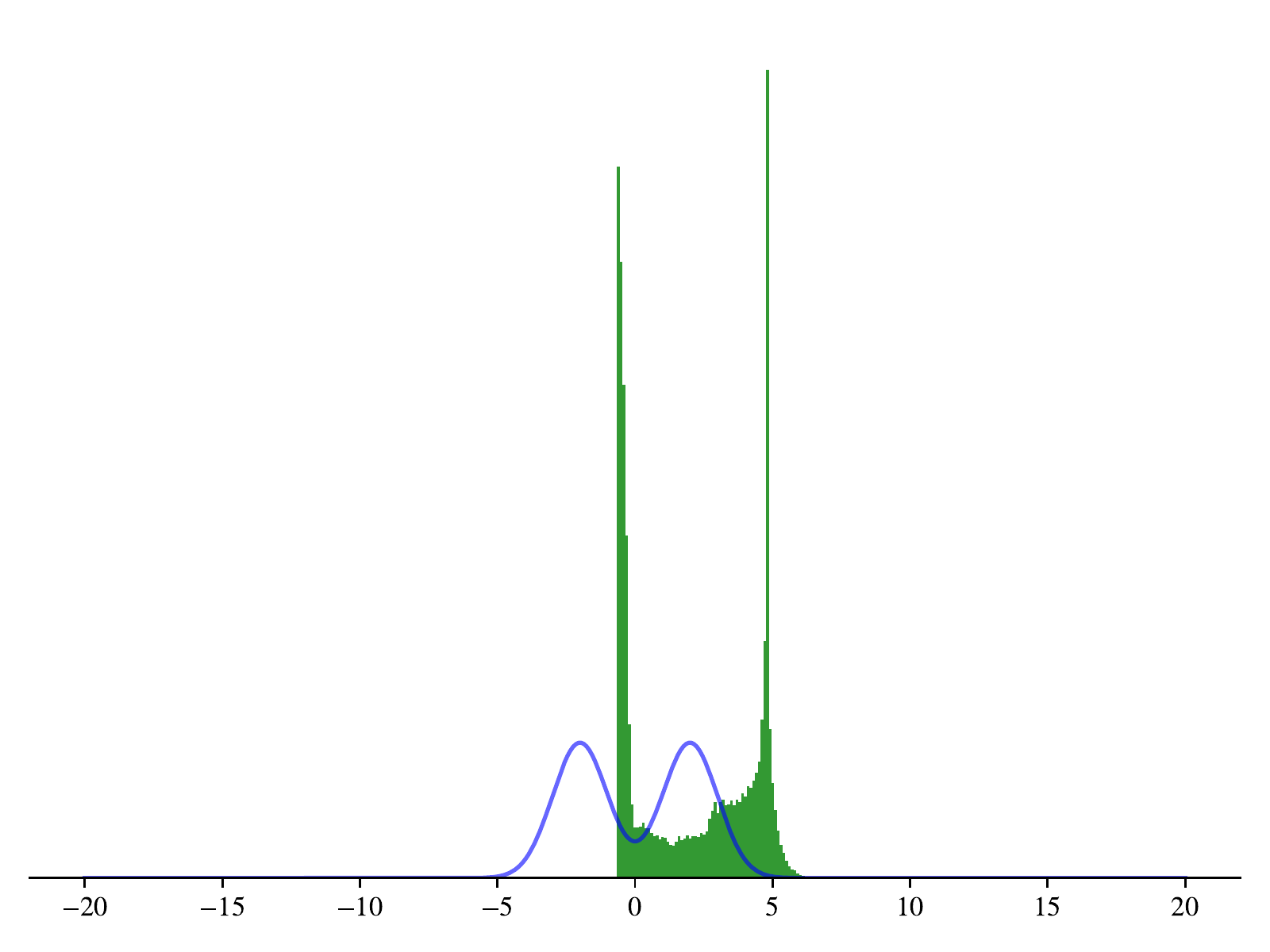} &
    \includegraphics[width=0.22\textwidth]{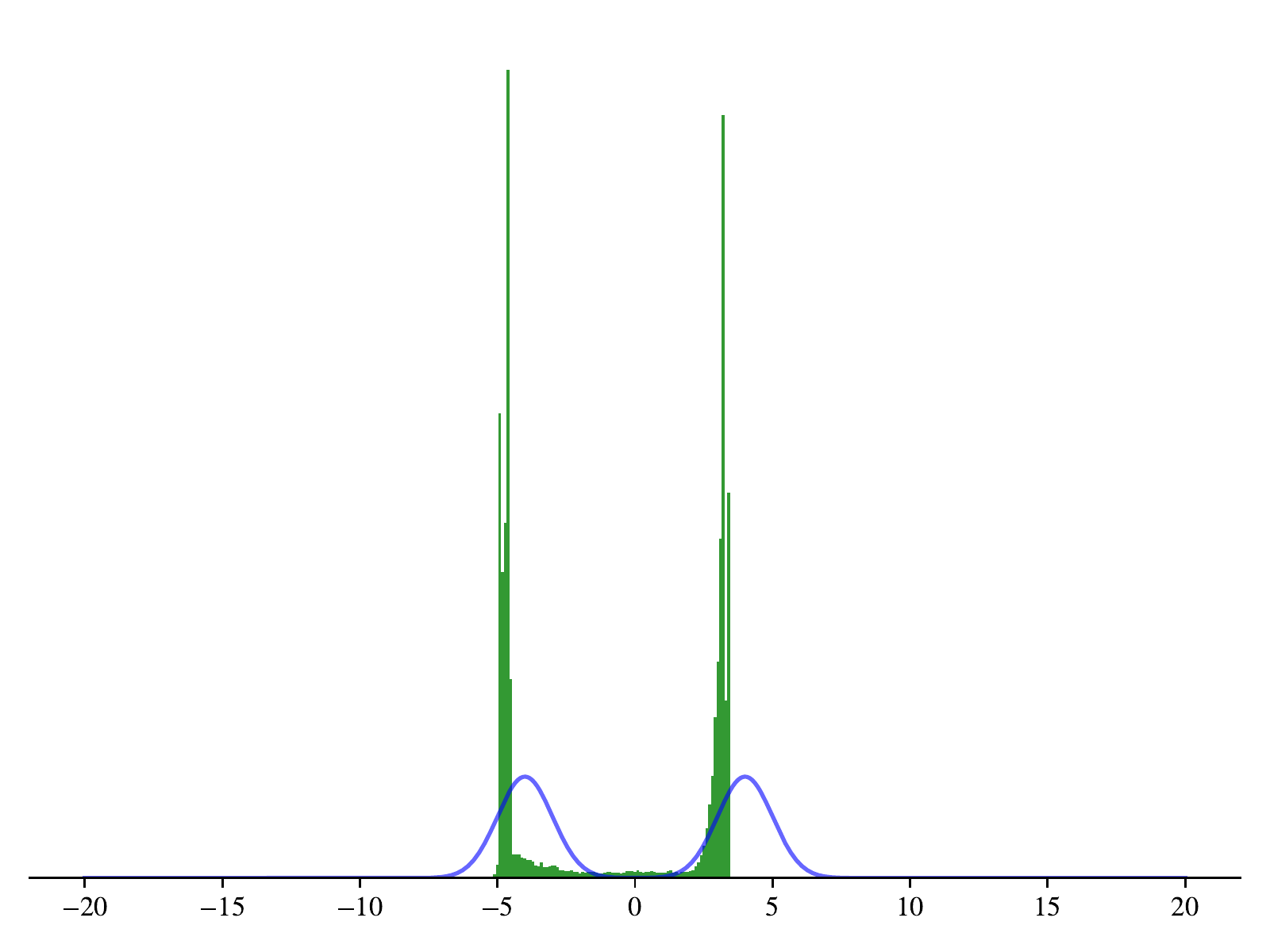} &
    \includegraphics[width=0.22\textwidth]{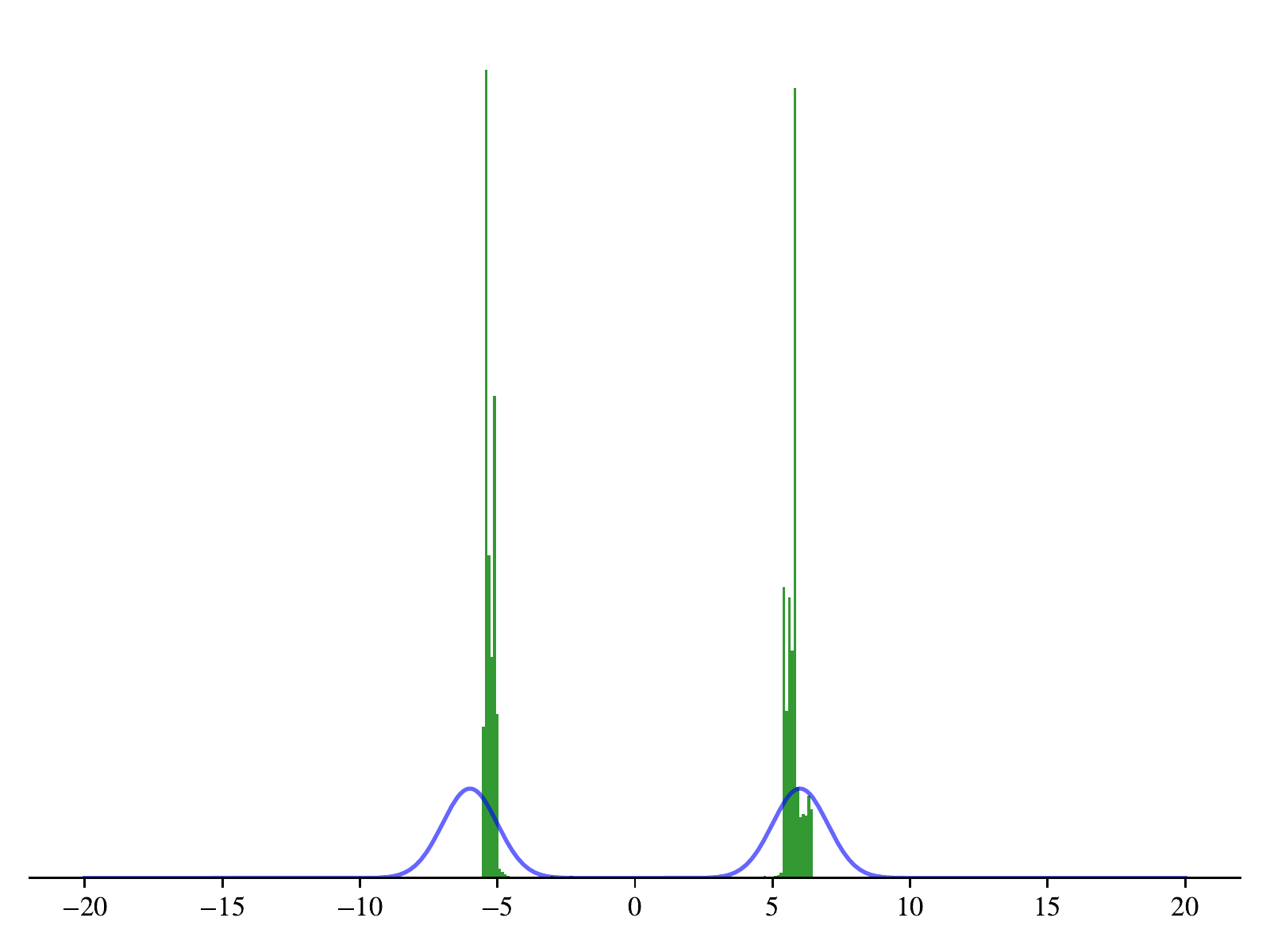} &
    \includegraphics[width=0.22\textwidth]{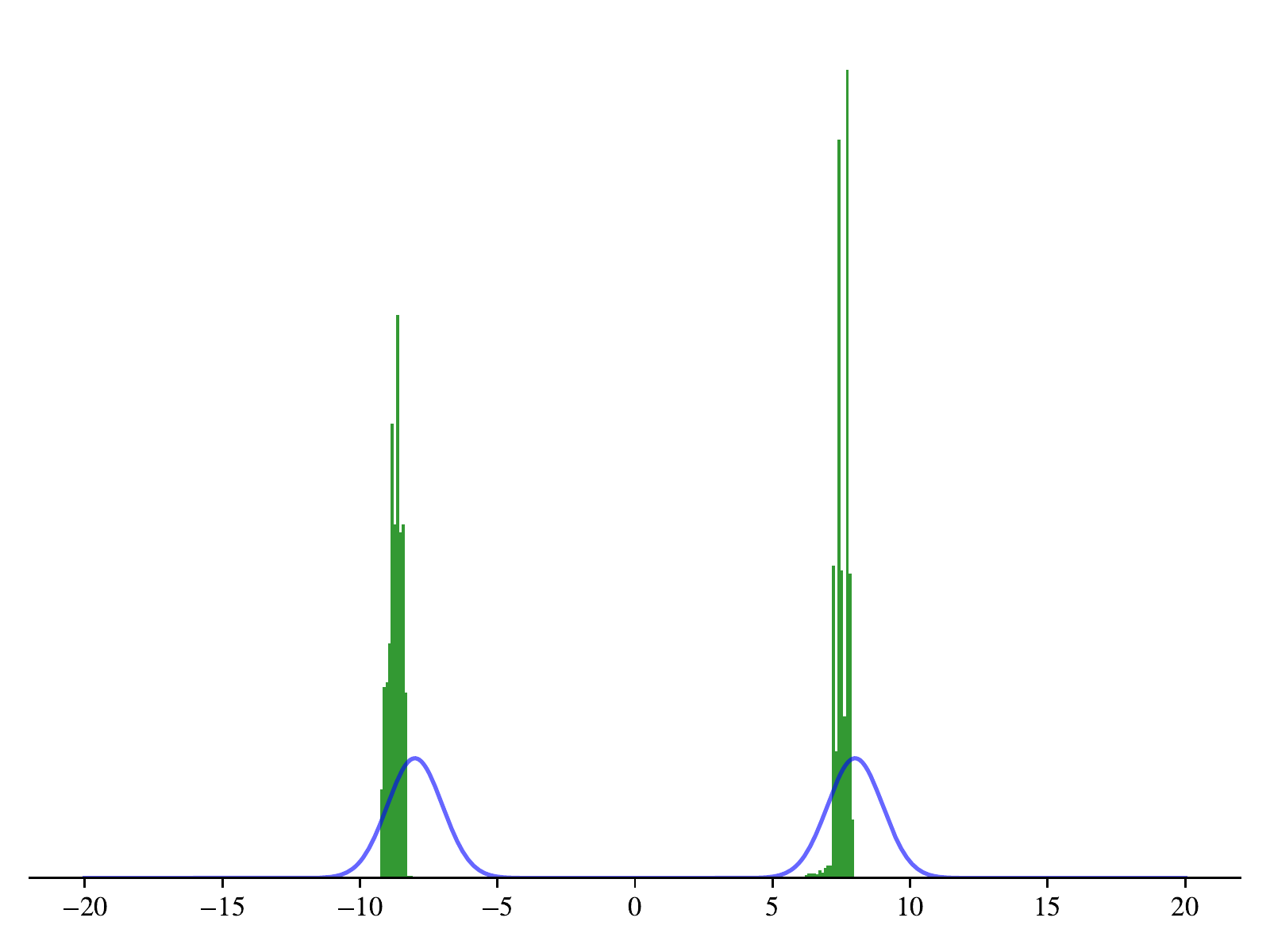}   \\  
    \includegraphics[width=0.22\textwidth]{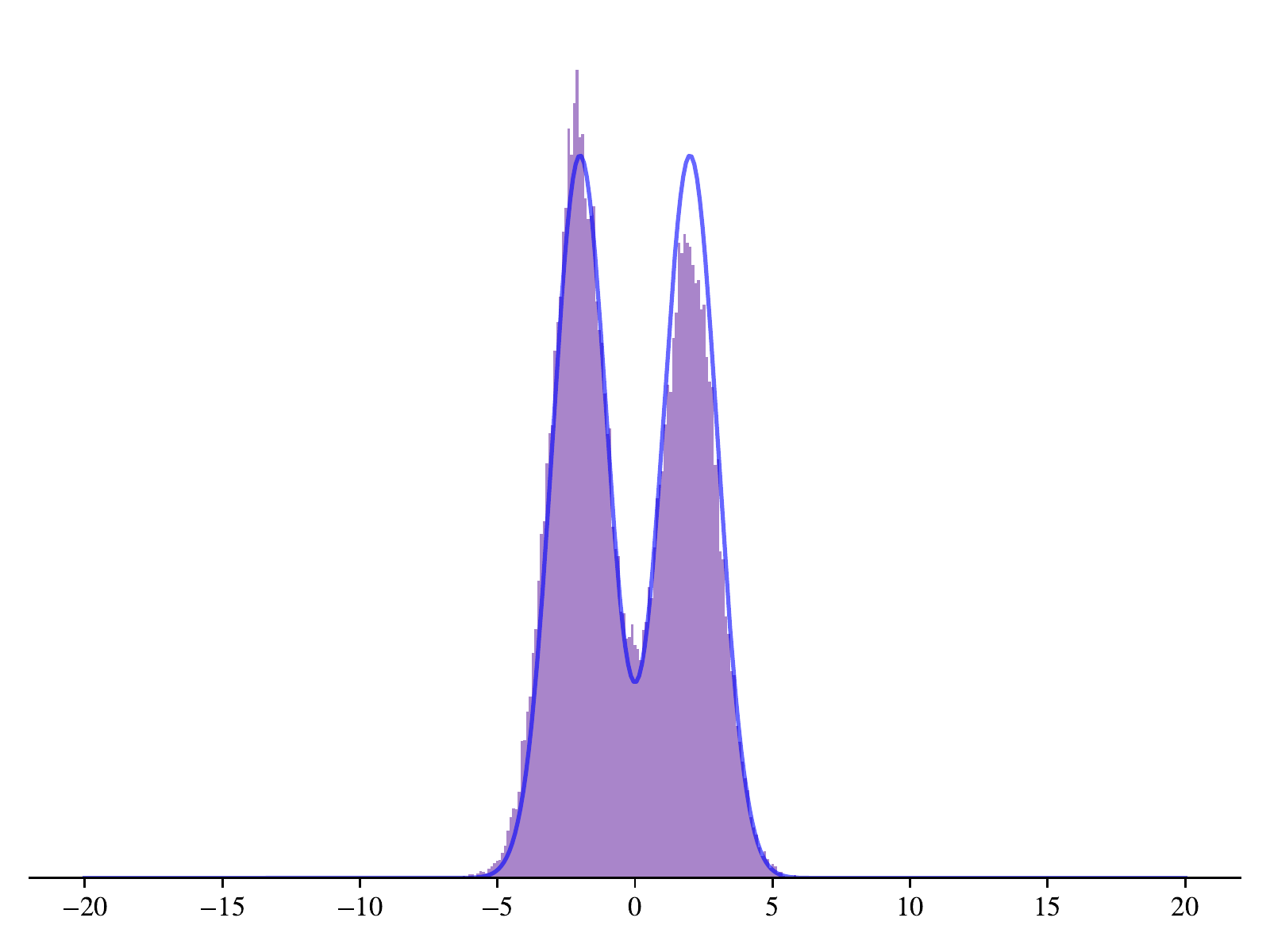} &
    \includegraphics[width=0.22\textwidth]{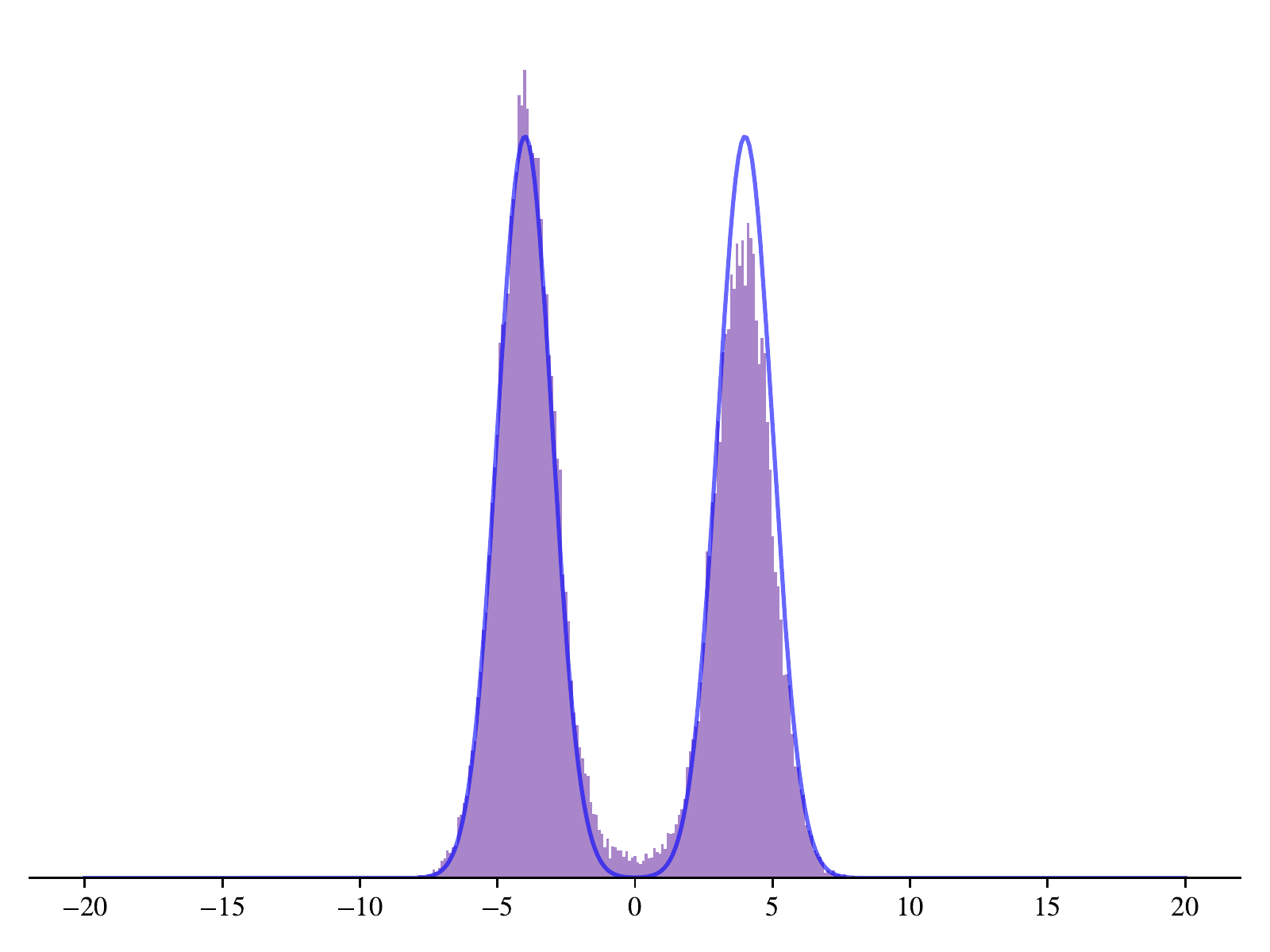} &
    \includegraphics[width=0.22\textwidth]{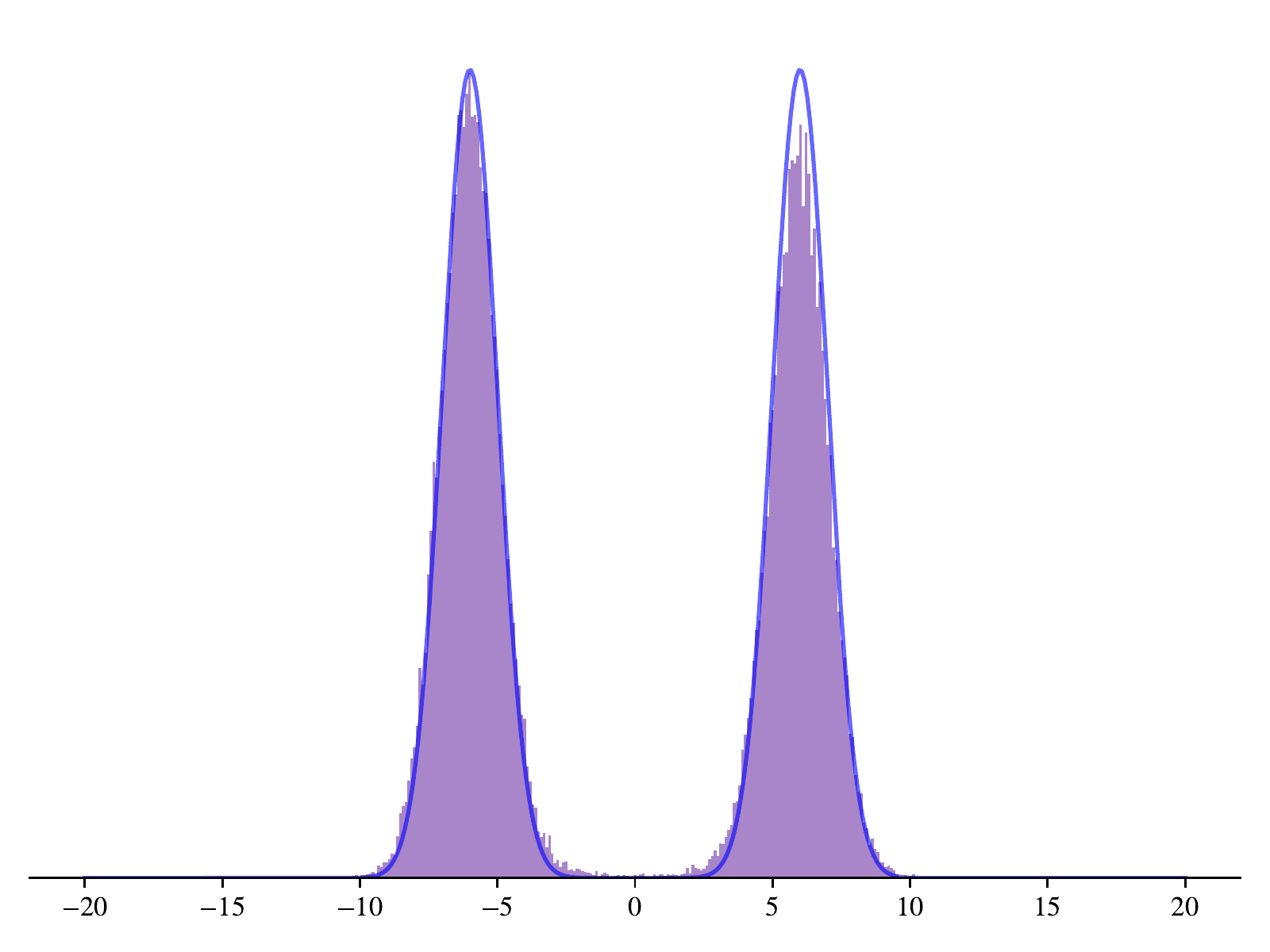} &
    \includegraphics[width=0.22\textwidth]{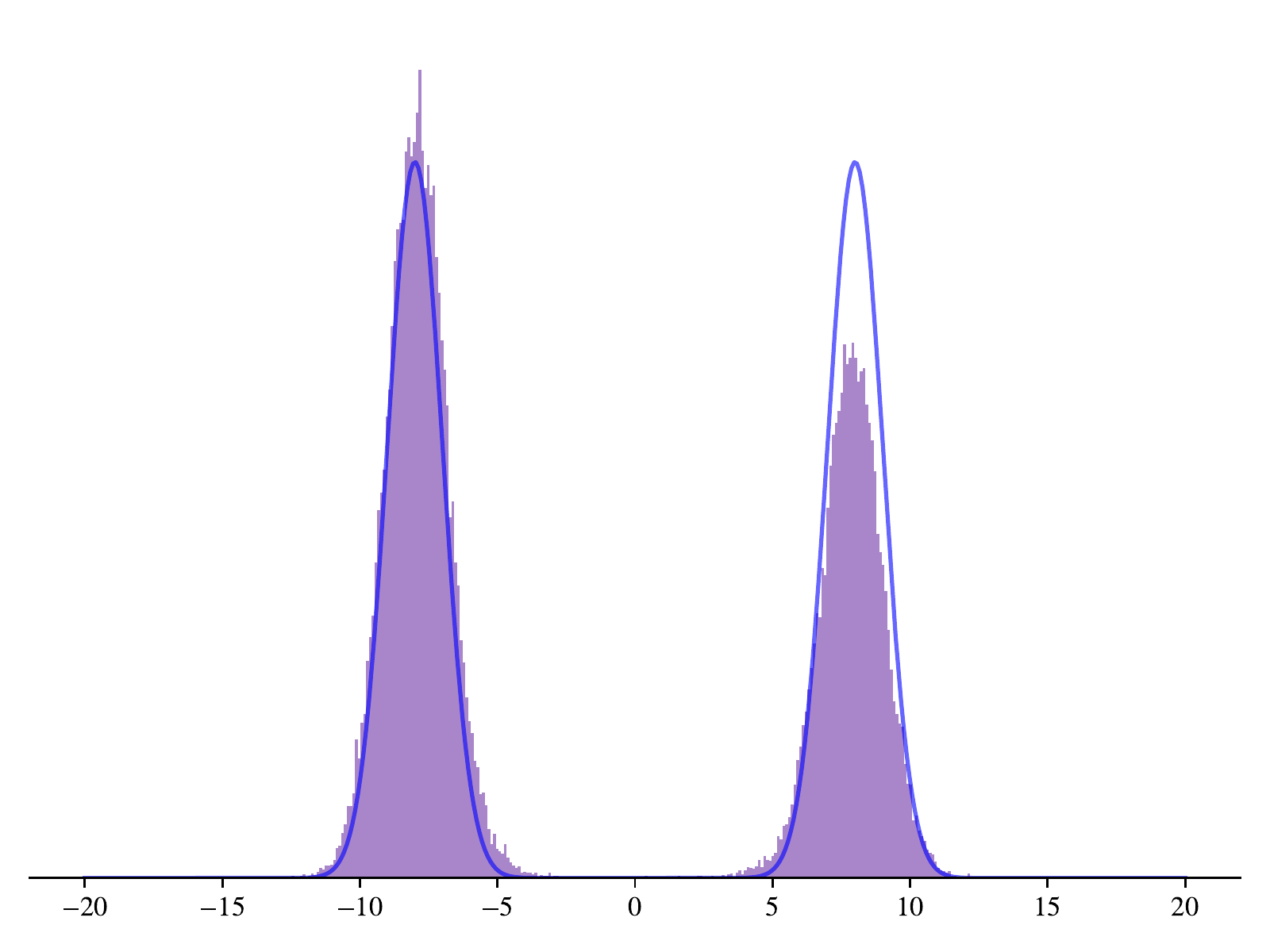}   \\  
  \end{tabular}
\caption{Histograms of distributions generated with VAE (top, in orange), GAN (middle, in green), 
  and with SGM (bottom, in purple) for  $ m = 2$, $ m = 4$, $ m = 6 $ and $ m =8 $.
biblographie stylebiblographie stylebiblographie style  The data distribution densities are plotted in blue.}\label{fig:histo_dim1_2}
\end{figure}

We can observe that the score-based model already generates unbalanced modes,
but the phenomenon is globally less visible than in higher dimensions.
Secondly, we provide additional visualizations of histograms of generated data with GANs
trained with an additional gradient penalty term in the generator loss for 
various values of $ L \approx \mathrm{Lip}(g) $.

\begin{figure}[h!]
  \centering
  \begin{tabular}{cccc}
    \includegraphics[width=0.21\textwidth]{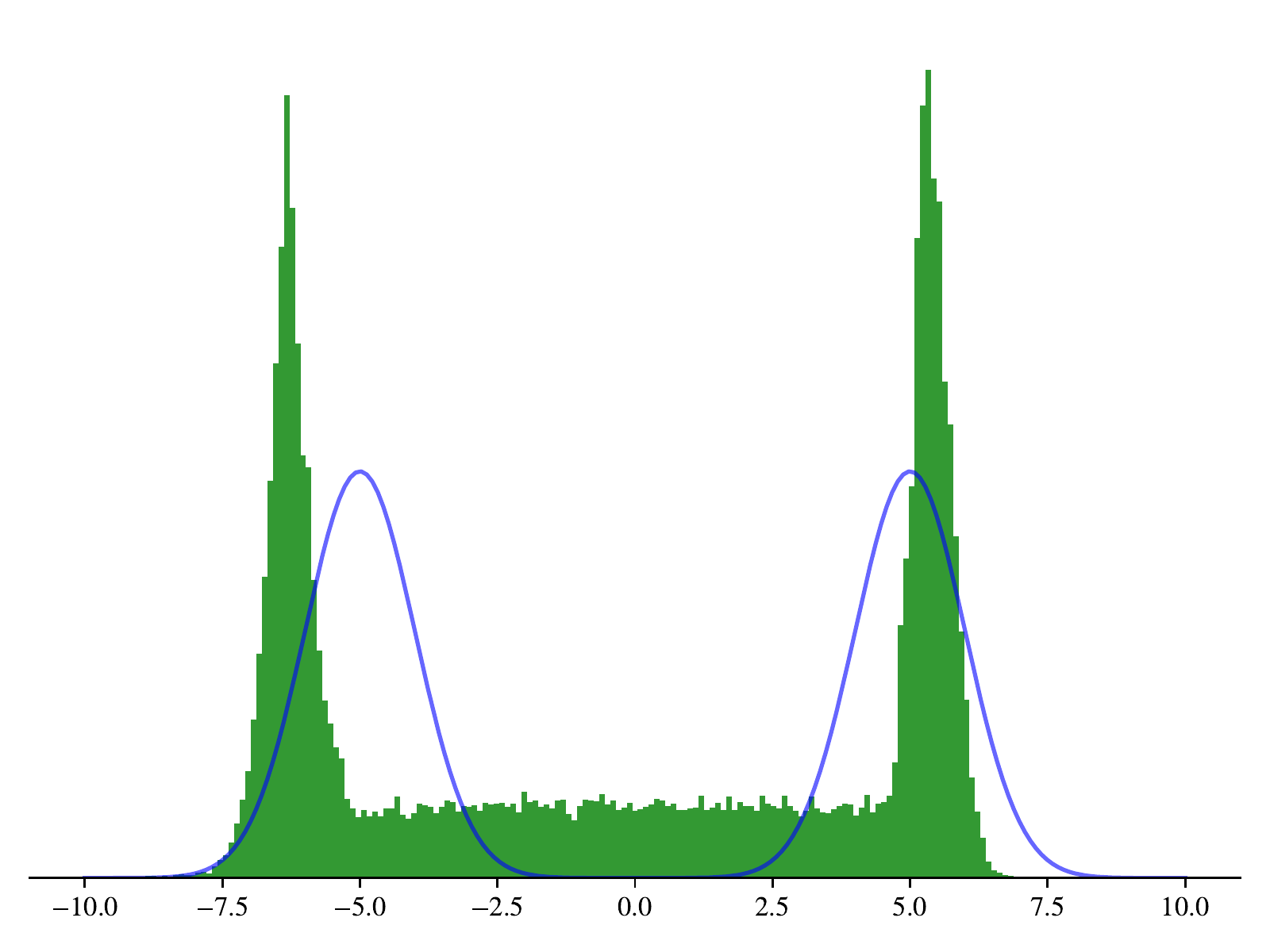} &
    \includegraphics[width=0.21\textwidth]{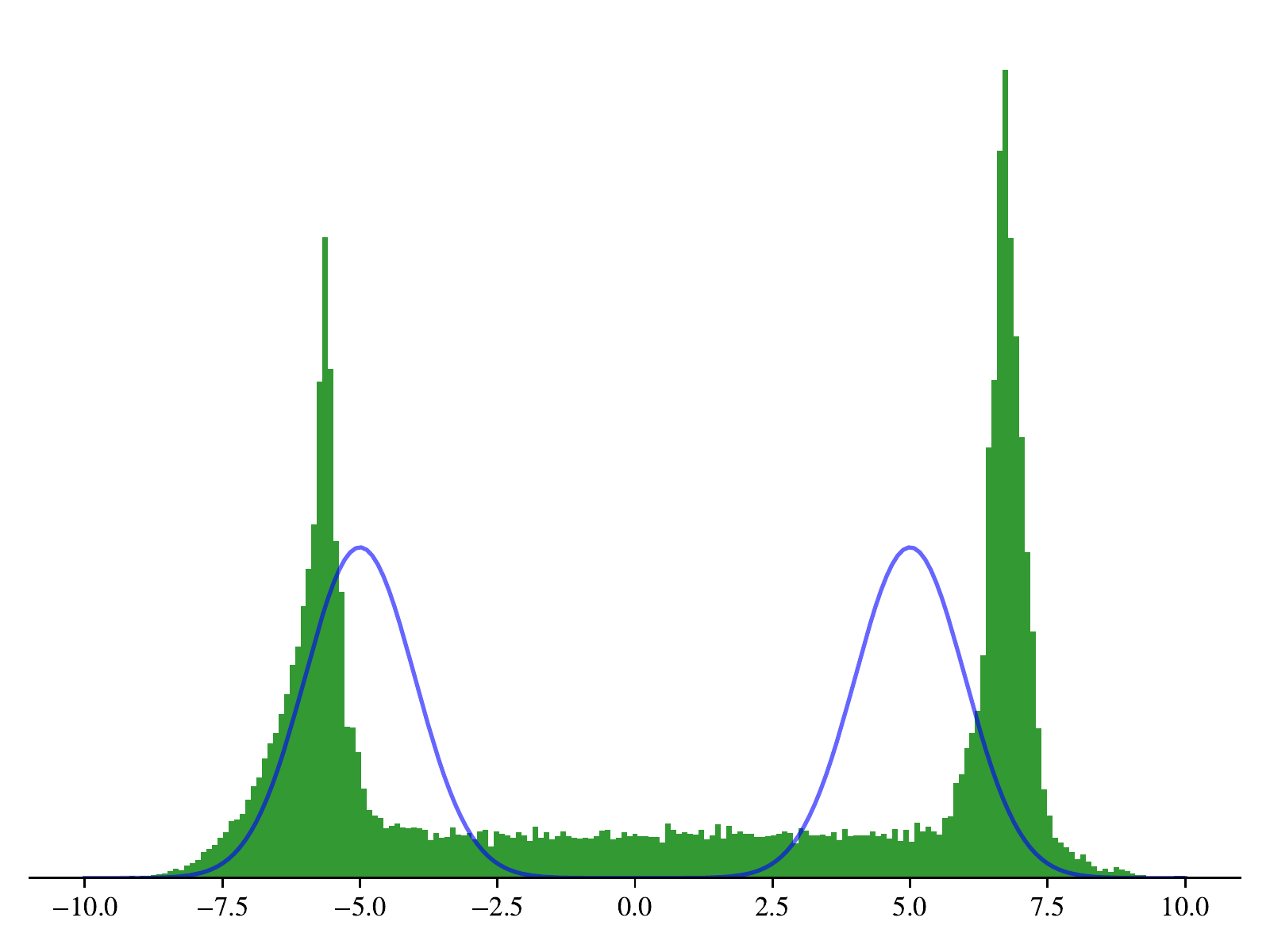} &
    \includegraphics[width=0.21\textwidth]{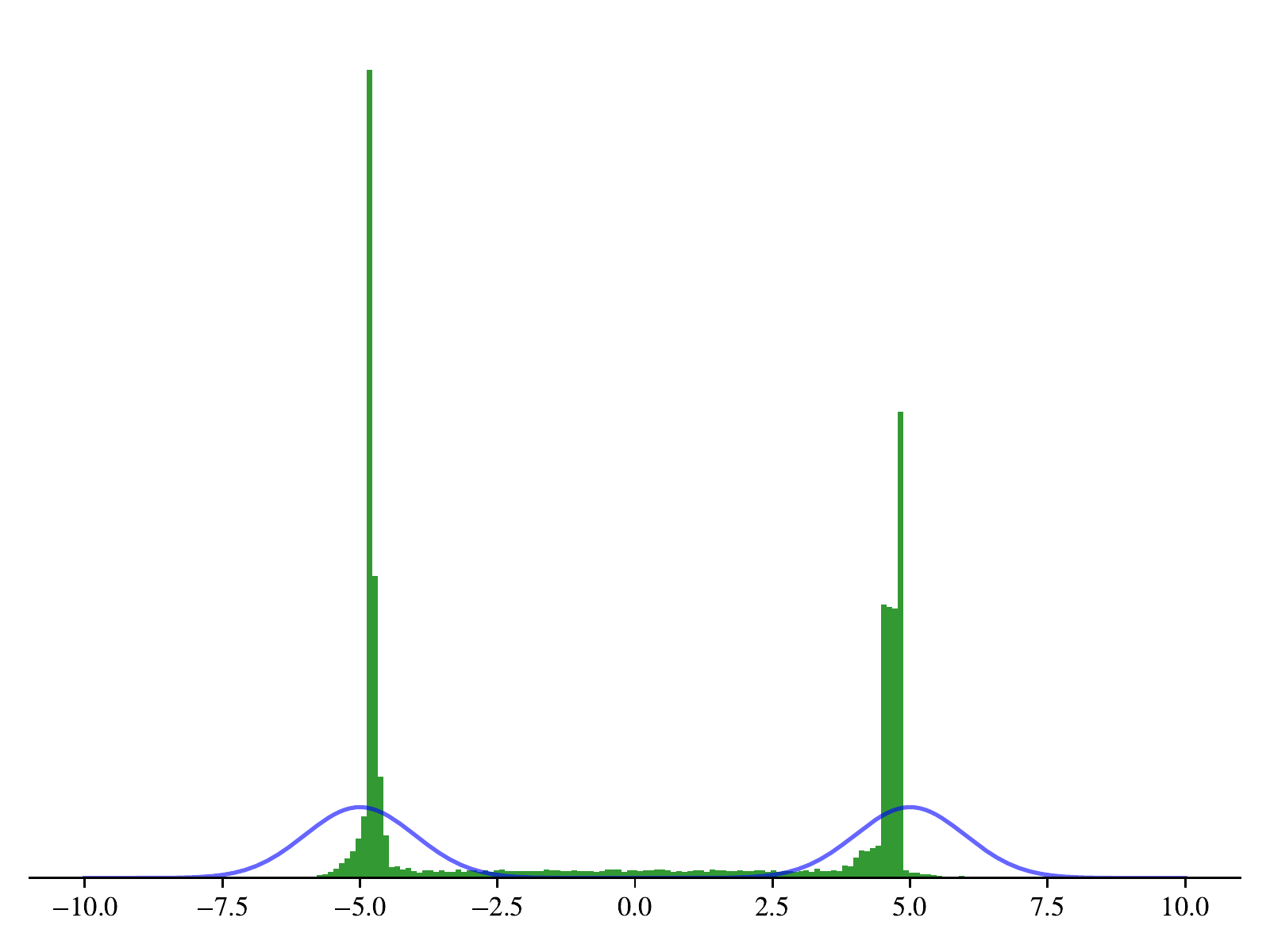} &
    \includegraphics[width=0.21\textwidth]{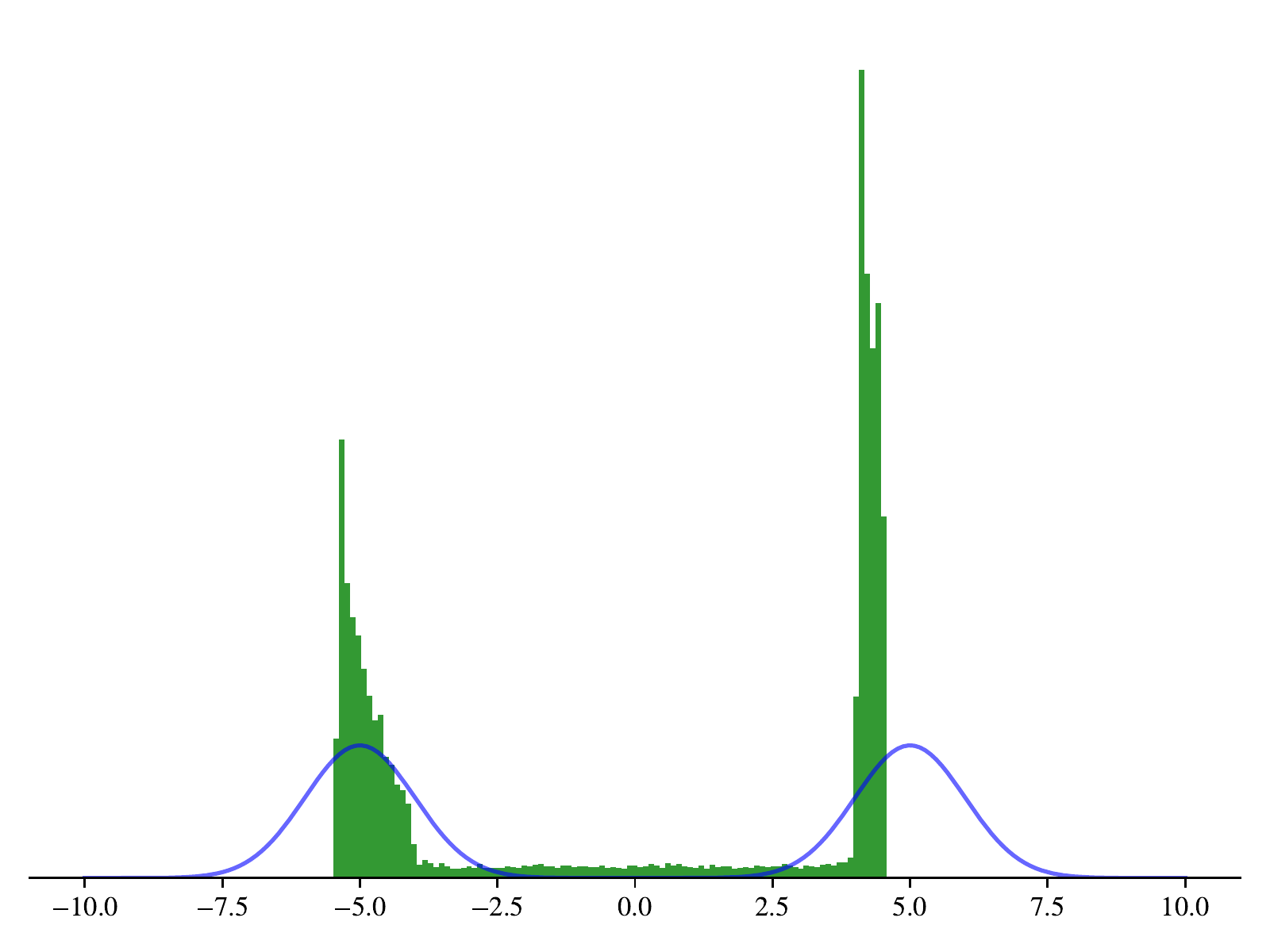}   \\  
  \end{tabular}
\caption{histograms of distributions generated with GANs with  with gradient penalty  for $ \mathrm{Lip}(g) \approx L  = 11 $,
$ \mathrm{Lip}(g) \approx L  = 15 $, $ \mathrm{Lip}(g) \approx L = 19 $ and 
$ \mathrm{Lip}(g) \approx L = 23 $. The data distribution densities are plotted in blue.}\label{fig:gan_SN_2}
\end{figure}
\newpage

\subsubsection{Visualization of generated data}

Finally, we show randomly chosen generated samples with  VAE, GAN and SGM 
on the synthetic mixture of Gaussian on MNIST  and the subset of all $ 3$ and $ 7 $ of MNIST.

\begin{figure}[h!]
    \centering
    \begin{tabular}{ccc}
    VAE  & GAN & SGM \\ 
    \includegraphics[width=0.31\textwidth]{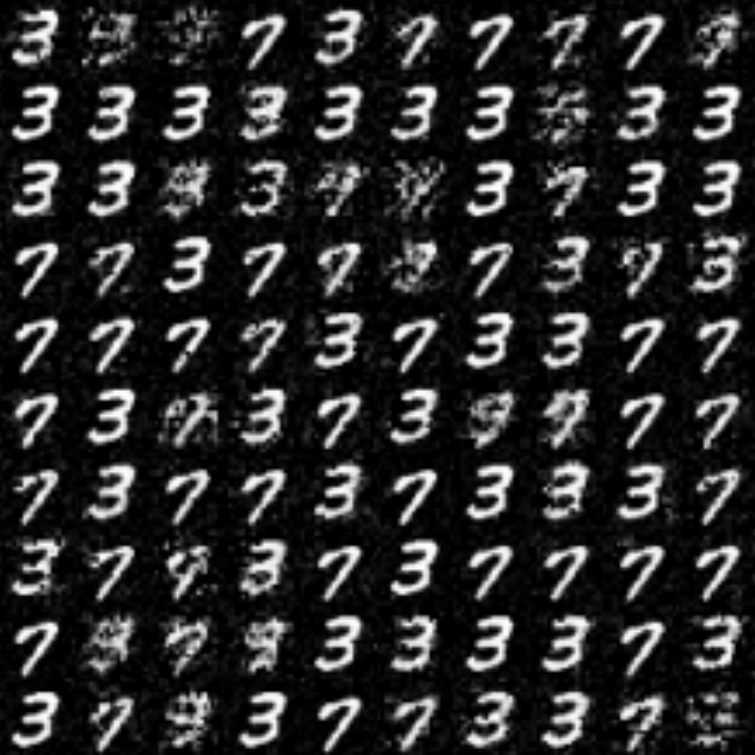} &
    \includegraphics[width=0.31\textwidth]{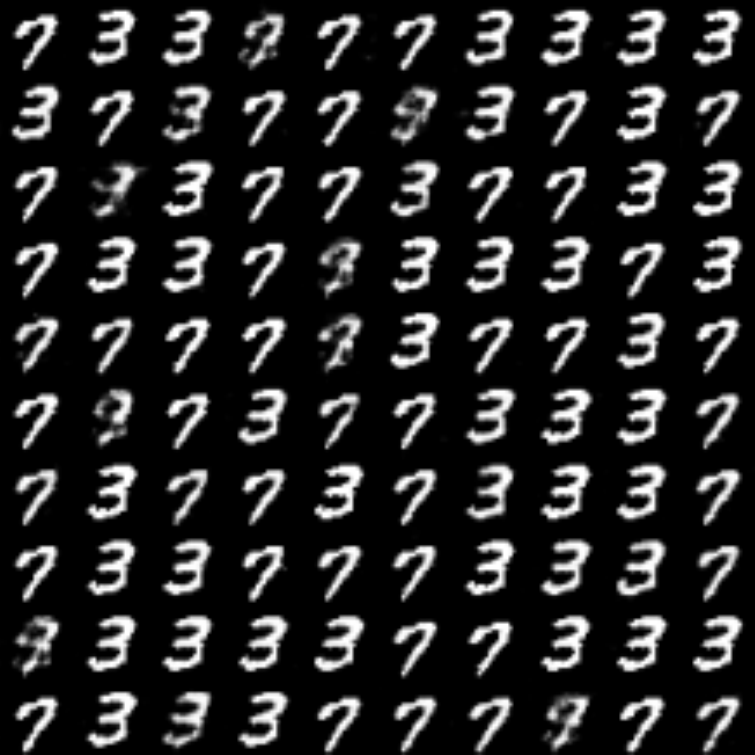} &   
    \includegraphics[width=0.31\textwidth]{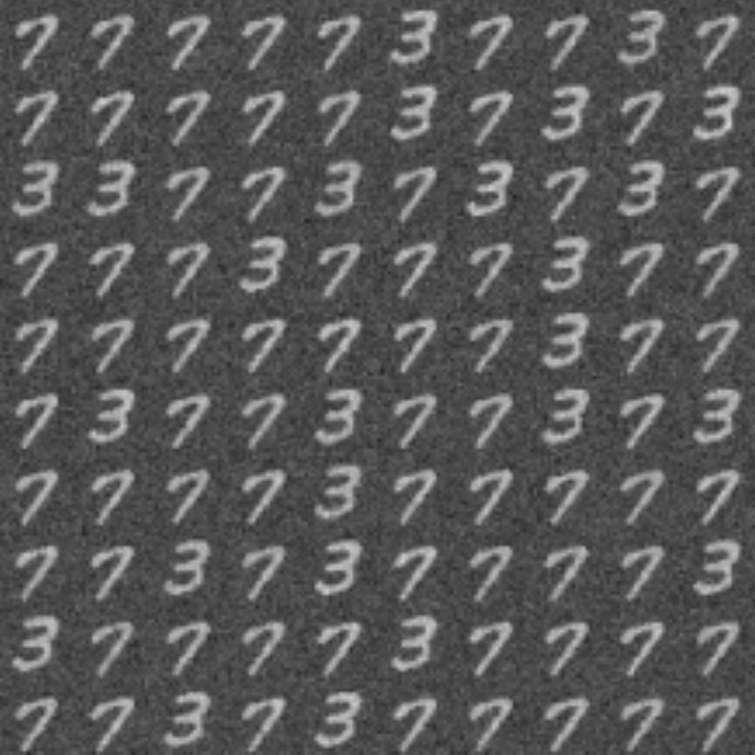} \\
    \includegraphics[width=0.31\textwidth]{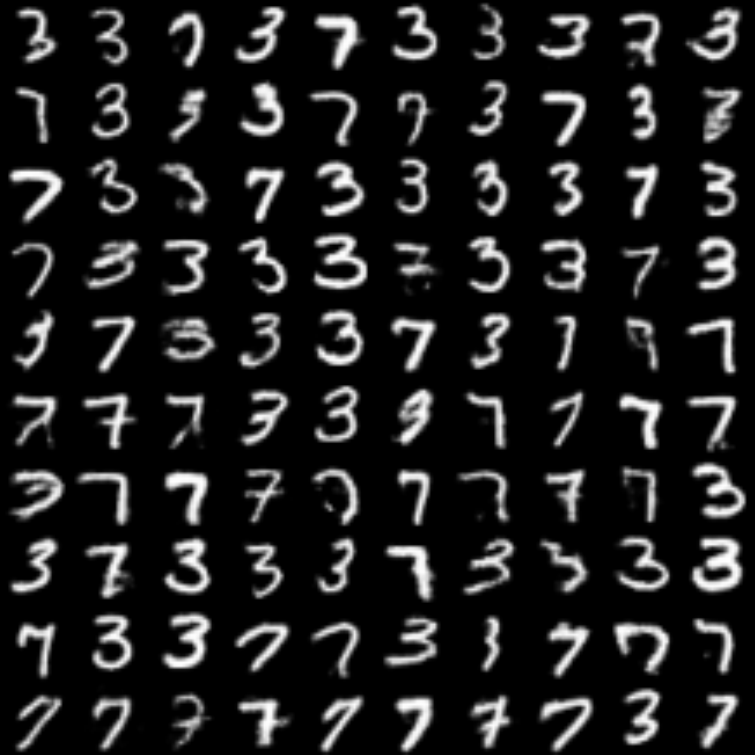} &
    \includegraphics[width=0.31\textwidth]{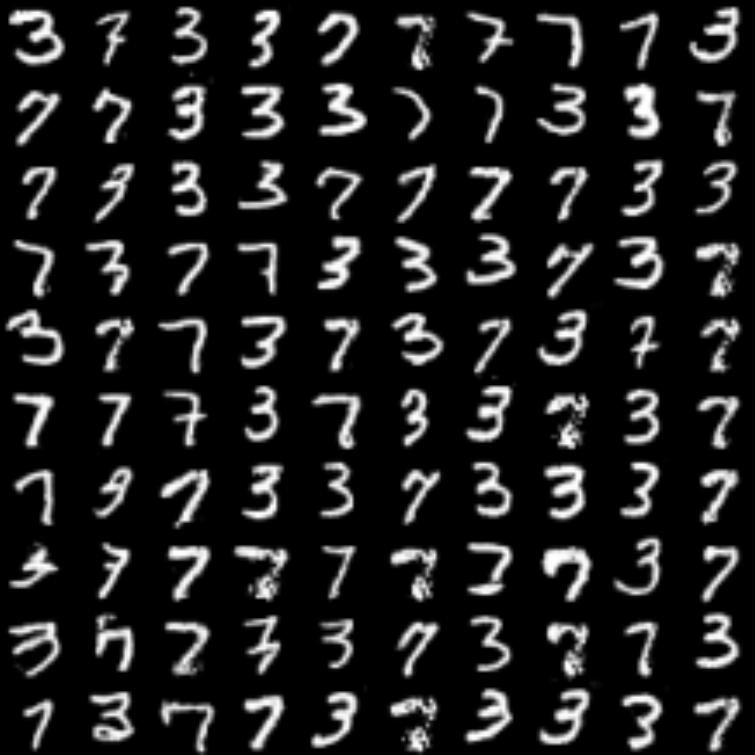} &   
    \includegraphics[width=0.31\textwidth]{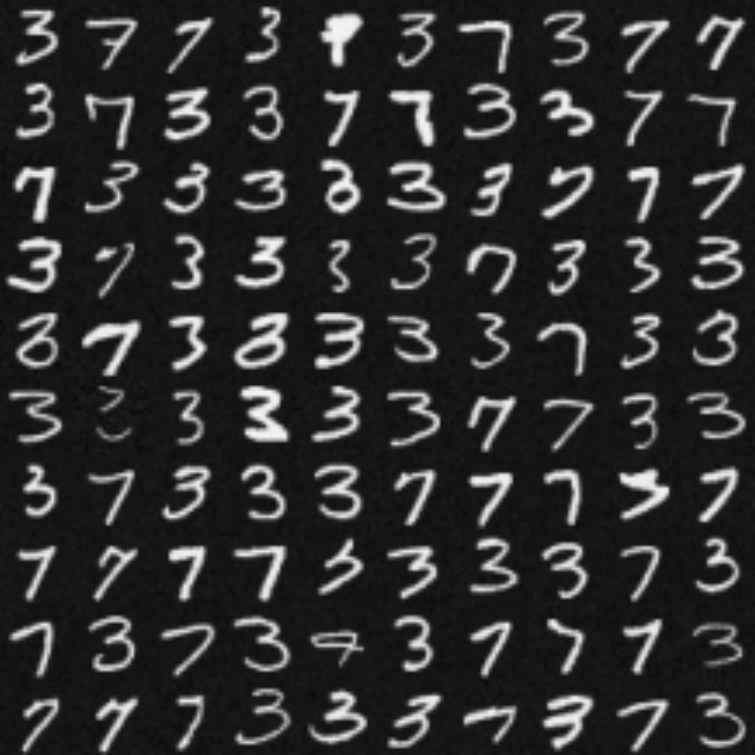}
    \end{tabular}
    \caption{Generated samples with VAE, GAN and SGM on the synthetic mixture of Gaussian on MNIST (top)
    and the subset of all $ 3$ and $ 7 $ of MNIST (bottom). The samples have been randomly chosen.}
    \label{fig:my_label}
\end{figure}

\end{document}